\theoremstyle{plain}
\newtheorem{theorem}{Theorem}[section]
\newtheorem{proposition}[theorem]{Proposition}
\newtheorem{lemma}[theorem]{Lemma}
\newtheorem{corollary}[theorem]{Corollary}
\theoremstyle{definition}
\newtheorem{example}[theorem]{Example} 
\theoremstyle{remark}
\newtheorem{remark}[theorem]{Remark}
\long\def\comment#1{} % comment out text
\newcommand{\xmath}[1] {\ensuremath{#1}\xspace}
\newcommand{\blmath}[1] {\xmath{\bm{#1}}}
\newcommand{\B}{\blmath{B}}
\newcommand{\norm}[1] {\xmath{\left\| #1 \right\|}}
\newcommand{\Ib}{{\blmath I}}
\newcommand{\Wb}{{\blmath W}}
\newcommand{\bb}{{\blmath b}}
\newcommand{\ub}{{\blmath u}}
\newcommand{\wb}{{\blmath w}}
\newcommand{\xb}{{\blmath x}}
\newcommand{\yb}{{\blmath y}}
\newcommand{\Cc}{\mathcal{C}}
\newcommand{\Nc}{\mathcal{N}}
\newcommand{\Tc}{\mathcal{T}}
\newcommand{\Xc}{\mathcal{X}}
\newcommand{\Nd}{{\mathbb N}}
\newcommand{\Rd}{{\mathbb R}}
\newcommand{\zerob}{\mathbf{0}}
\newcommand{\beq}{\begin{equation}}
\newcommand{\eeq}{\end{equation}}
\newcommand{\beqa}{\begin{eqnarray}}
\newcommand{\eeqa}{\end{eqnarray}}
\newcommand{\indicator}[1]{\mathbbm{1}_{\{#1\}}}
\newcommand{\floor}[1]{\left\lfloor#1\right\rfloor}
\newcommand{\ReLUmax}[3]{#1 \stackrel{\sigma}{\rightarrow} #2 \stackrel{\sigma}{\rightarrow} #3 \stackrel{\texttt{MAX}}{\rightarrow} 1}
\newcommand{\ReLUtwo}[3]{#1 \stackrel{\sigma}{\rightarrow} #2 \rightarrow #3}
\newcommand{\ReLUTwo}[3]{#1 \stackrel{\sigma}{\rightarrow} #2 \stackrel{\sigma}{\rightarrow} #3}
\newcommand{\ReLUThree}[4]{#1 \stackrel{\sigma}{\rightarrow} #2 \stackrel{\sigma}{\rightarrow} #3 \stackrel{\sigma}{\rightarrow} #4}
\newcommand{\ReLUFour}[5]{#1 \stackrel{\sigma}{\rightarrow} #2 \stackrel{\sigma}{\rightarrow} #3 \stackrel{\sigma}{\rightarrow} #4 \stackrel{\sigma}{\rightarrow} #5}
\title{Data Topology-Dependent Upper Bounds of \\ Neural Network Widths} % 10안
\author{%
  Sangmin Lee$^1$ \hspace{2cm} Jong Chul Ye$^{2}$ \vspace{0.1cm}\\
%   \thanks{Use footnote for providing further information
    % about author (webpage, alternative address)---\emph{not} for acknowledging
    % funding agencies.} \\
  $^1$ Department of Mathematical Science, KAIST \\
  $^2$ Kim Jaechul Graduate School of AI, KAIST \\
  \texttt{\{leeleesang, jong.ye\}@kaist.ac.kr}
  % examples of more authors
%   \And
%   Address \\
%   \texttt{@kaist.ac.kr} \\
}
\begin{document}
\allowdisplaybreaks
\maketitle

\begin{abstract}
    This paper investigates the relationship between the universal approximation property of deep neural networks and topological characteristics of datasets. 
    Our primary contribution is to introduce data topology-dependent upper bounds on the network width.
    Specifically, we first show that a three-layer neural network, applying a ReLU activation function and max pooling, can be designed to approximate an indicator function over a compact set, one that is encompassed by a tight convex polytope.
    This is then extended to a simplicial complex, deriving width upper bounds based on its topological structure.
    Further, we calculate upper bounds in relation to the Betti numbers of select topological spaces.
    Finally, we {prove} the universal approximation property of three-layer ReLU networks using our topological approach.
    We also verify that gradient descent converges to the network structure proposed in our study.
\end{abstract}

\section{Introduction}
%%%%%%%%%%%%%%%%%%%%%%%%%%%%%%%%%%%%%%%%%%%%%%%%%%%%%%%%%%%%%%%%%%%%%%%%%%
%%%%%%% 내가 생각하는 introduction 흐름 & 들어갔으면 좋겠는 내용들. %%%%%%
% 1. Previous work on UAP provides some kind of ``minimal structure (depth=2, width=$max d_x, d_y$)''
% 2. Rather approximating an arbitrary function, data point forms a low-dimensional manifold.
% 3. TOPOLOGICAL view => Therefore, it would depend on the data manifold / topological property.
% 4. Therefore, we provide 
%     A. required minimal depth is 3, as a global approximator.
%     B. an upper bound of width, depends on the data topology, constructive proof.
%     C. GD indeed converges to such networks.
%     D. connection with TDA.
%%%%%%%%%%%%%%%%%%%%%%%%%%%%%%%%%%%%%%%%%%%%%%%%%%%%%%%%%%%%%%%%%%%%%%%%%%

This paper addresses a fundamental question in machine learning: for any $p\ge1$, $d\in \Nd$, and $f^* \in L^p([0,1]^d)$, what is the necessary depth and width of a neural network to approximate $f^*$ within a small error? This constitutes the universal approximation property (UAP) of deep neural networks (DNNs). Since Cybenko's seminal work in 1989  \cite{cybenko1989approximation}, demonstrating the UAP of two-layer networks with non-polynomial activation functions, subsequent research has extended these results to various settings.

In the context of deep ReLU networks, recent literature establishes that the minimal depth is $2$ (given sufficient width), and the minimal width is $\max\{d_x, d_y\}$ assuming adequate depth  over a compact domain for some function classes {where $d_x$ and $d_y$ are input and output dimensions} \cite{hornik1991approximation, park2020minimum}. However, these UAP results on a compact domain have limitations for classifiers or discriminators. For instance, a discriminator used in generative adversarial network (GAN) training receives both training data and output from a generator, which is not confined to a specific bounded domain \cite{hanin2019universal, hertrich2021towards, hwang2022minimal, kratsios2022relu, perekrestenko2018universal, perekrestenko2020constructive}.

There are recent UAP results for neural networks concerning unbounded input domains (like $\Rd^d$). Notably, a study by Wang et al. \cite{wang2022approximation} demonstrates that two-layer ReLU networks fail to serve as universal approximators on $\Rd^2$. 
As such, to approximate compactly supported functions in $\Rd^d$, the required minimum depth is at least $3$. While Wang et al. \cite{wang2022approximation} affirmed that three-layer ReLU networks are universal approximators in $L^p(\Rd^d)$, they only establish the existence of such networks, leaving open the questions of their construction and the number of required hidden neurons. This paper addresses these questions specifically for the class of discriminator functions.

Specifically, we explore the following question: 
{\em given dataset $\Xc$, $\varepsilon>0$ and $p\ge1$, how can we construct a neural network $\Nc$ such that $\norm{\Nc(\xb) - \indicator{\Xc}(\xb)}_{L^p(\Rd^d)} < \varepsilon$? 
}
Intuitively, the answer is closely tied to the topological structure of the dataset $\Xc$. The manifold assumption in machine learning suggests that dataset $\Xc$ adheres to a low-dimensional manifold, representable by simple topological features  \cite{carlsson2014topological, goodfellow2016deep, lum2013extracting, magai2022topology, nicolau2011topology}.
Consequently, if $\Xc$ has a `simple' topological structure, the required depth and width to approximate $\indicator{\Xc}$ would be minimal. %This aligns with the concept of bias-variance trade-off  \cite{hertrich2021towards}. 
Numerous experimental and theoretical results indeed show a strong correlation between the required width of neural networks and the topology of training dataset $\Xc$ \cite{fort2022does, guss2018characterizing, magai2022topology, tiwari2022effects, trofimov2023learning, yang2021dive}. Despite this, no research has specifically addressed how to architect neural networks based on the topological structure of the dataset. 
{For instance, Betti numbers in Topological data analysis (TDA)
represent the topological structure of dataset $\Xc$, but no previous work has linked this quantity with network architecture.}

Therefore, this paper is to bridge this gap by constructing three or four-layer neural networks with bounded widths determined by the topological features of the dataset $\Xc$.
Our contributions are summarized below.
\begin{itemize}
    
    \item  Motivated by the results of Wang et al. \cite{wang2022approximation}, {we generalize their negative result for two-layer ReLU networks on $\Rd^d$ with $d \ge 2$. The proof is in Appendix \ref{app: proofs}.}
    \begin{proposition} \label{prop: two-layer cannot}
        Let $f^*:\Rd^d \rightarrow \Rd$ be a nonzero compactly supported function. Then for $p\ge1$ and $d \ge 2$, two-layer ReLU networks cannot universally approximate $f^*$ in $L^p(\Rd^d)$.
    \end{proposition}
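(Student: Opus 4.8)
The plan is to reduce the non-approximability statement to a rigidity property of two-layer ReLU networks viewed as globally defined continuous piecewise-affine functions. Write a generic two-layer network as $\Nc(\xb)=\sum_{i=1}^N v_i\,\sigma(\wb_i^{\top}\xb+b_i)+c$ with each $\wb_i\neq\zerob$. Since $f^*$ is a fixed target of $L^p$-approximation we may take $f^*\in L^p(\Rd^d)$ with $\norm{f^*}_{L^p(\Rd^d)}>0$, and then any $\Nc$ with $\norm{\Nc-f^*}_{L^p(\Rd^d)}<\infty$ must itself lie in $L^p(\Rd^d)$ by the triangle inequality. Thus it suffices to prove the following rigidity claim: for $d\ge2$, the only two-layer ReLU network lying in $L^p(\Rd^d)$ is the identically zero function. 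Granting this, every network with $\Nc\not\equiv0$ has $\norm{\Nc-f^*}_{L^p(\Rd^d)}=\infty$, while the zero network has distance $\norm{f^*}_{L^p(\Rd^d)}>0$; hence no network attains $L^p$-distance below the positive constant $\norm{f^*}_{L^p(\Rd^d)}$, so $f^*$ cannot be approximated.

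To establish the claim I would first analyze $\Nc$ at infinity. The hyperplanes $H_i=\{\xb:\wb_i^{\top}\xb+b_i=0\}$ partition $\Rd^d$ into finitely many polyhedral cells, on each of which $\Nc$ is affine; choose $R_0$ so large that every bounded cell lies in the ball $B_{R_0}$, so that each point with $\norm{\xb}>R_0$ (off the hyperplanes) sits in an unbounded full-dimensional cell. On any such cell $C$, $\Nc|_C$ is a single affine function and $C$ has infinite Lebesgue measure; a nonzero affine function has infinite $L^p$-mass over $C$, since even after removing a finite-thickness slab its super-level sets retain infinite measure. Hence $\Nc\in L^p(\Rd^d)$ forces $\Nc\equiv0$, and in particular $\nabla\Nc=\zerob$, on every unbounded cell.

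The heart of the argument—and the only place where $d\ge2$ is essential—is to propagate this vanishing to the bounded cells. The a.e.-gradient $\nabla\Nc$ is piecewise constant, and crossing $H_i$ from its negative to its positive side changes it by exactly the jump $v_i\wb_i$ (grouping indices that define a common hyperplane, including antiparallel ones, into a single effective coefficient via $\sigma(-t)=\sigma(t)-t$). For $d\ge2$ each $H_i$ has dimension $d-1\ge1$ and is therefore unbounded, so one can pick $\pb\in H_i$ with $\norm{\pb}>R_0$ lying on no other hyperplane; the short transversal $\pb\pm\varepsilon\wb_i$ then crosses only $H_i$ and has both endpoints in unbounded cells, where $\nabla\Nc=\zerob$. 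Evaluating the jump there yields $v_i\wb_i=\zerob$, so the jump vanishes at every crossing of $H_i$. Consequently $\nabla\Nc$ is the same on all cells, namely $\zerob$, so $\Nc$ is globally constant, and membership in $L^p(\Rd^d)$ forces that constant to be $0$, proving the claim. For $d=1$ the sets $H_i$ are single points, no far-out transversal exists, and compactly supported ``triangle'' bumps are indeed realizable—so the hypothesis $d\ge2$ is used exactly where it must be.

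I expect the main obstacle to be the geometric bookkeeping in this last step: rigorously guaranteeing, for every $H_i$, a transversal crossing that is simultaneously beyond $B_{R_0}$ (so both adjacent cells are unbounded) and disjoint from all other hyperplanes, while correctly consolidating coincident or antiparallel hyperplanes into one effective jump coefficient. Everything else—the triangle-inequality reduction and the divergence of $\int_C|\cdot|^p$ for a nonzero affine function over an unbounded cell—is routine.
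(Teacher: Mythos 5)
Your proposal is correct, and its engine is the same one the paper uses: finiteness of the $L^p$ mass forces the affine piece of the network to vanish identically on every unbounded cell of the hyperplane arrangement, and crossing a single hyperplane between two such cells kills the corresponding coefficient, with $d\ge 2$ entering precisely to guarantee such crossings exist far from the support of $f^*$. The logical packaging differs, though, in ways worth recording. The paper argues by contradiction from density: it takes a network $f$ with $\norm{f-f^*}_{L^p}<\varepsilon$ in a \emph{minimal} representation, writes the vanishing conditions on one unbounded cell $A$ and on an adjacent unbounded cell $B$ outside the support, and concludes that the single neuron whose activation flips between $A$ and $B$ has $v'\wb'=\zerob$, contradicting minimality. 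You instead isolate a rigidity lemma (for $d\ge2$, the only two-layer ReLU network lying in $L^p(\Rd^d)$ is the zero function) and prove it directly by showing every gradient jump vanishes, using far-out transversals on each hyperplane, which is unbounded precisely because $d\ge2$. Your route buys three things. First, it needs no minimality assumption: the paper's ``exactly one neuron flips'' step tacitly requires all hyperplanes to be distinct, which ``all $v_i,\wb_i$ nonzero'' does not ensure, and your consolidation of coincident and antiparallel normals is exactly the missing bookkeeping. Second, your explicit infinite-mass argument for a nonzero affine function on an unbounded full-dimensional cell fills in the paper's terse passage from $\inf|f|=0$ to the identical vanishing of the affine piece on that cell. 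Third, you get a quantitative conclusion: every two-layer network is at $L^p$ distance at least $\norm{f^*}_{L^p}$ from $f^*$, rather than a bare contradiction with density. The price is the extra globalization step (all jumps vanish, hence the gradient is constant, hence zero), where the paper stops after a single contradiction; both are sound, but yours is the more robust write-up.
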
 

    % MAIN - COMPACT SETS
    \item 
    For a compact set $\Xc\subset \Rd^d$, we develop a three-layer neural network $\Nc$ approximating the indicator function $\indicator{\Xc}$ over $\Rd^d$ under a given $\varepsilon$ error. The network's width is bound by a number related to a convex polytope cover of $\Xc$ (as stated in Proposition \ref{prop: compact}). We further refine this result for a four-layer ReLU network when $\Xc$ is represented by the difference of two unions of convex polytopes (Theorem \ref{thm: compact}).
    
    % MAIN - SIMPLICIAL COMPLEX AND BETI NUMBERS
    \item
    When $\Xc$ forms an $m$-simplicial complex with $k$ facets in $\Rd^d$, we derive a similar result. Here, we establish upper bounds of the width in terms of $d, k$, and $m$, introducing novel data topology-dependent bounds (Theorem \ref{thm: simplicial complex}). If $\Xc$ is a topological space represented by the difference of $k$-dimensional disjoint cuboids, we propose a four-layer ReLU network that can approximate $\indicator{\Xc}$, {where the widths are} bound by terms of the Betti number of $\Xc$ (Theorem \ref{thm: betti numbers}). This underscores the significant impact of Betti numbers on the network architecture, a novel contribution in this field.
    
    % REGRESSION 
    \item 
    As a practical application, we {prove} that the set of three-layer ReLU networks is dense in $L^p(\Rd^{d_x}, [0,1]^{d_y})$ for $p\ge 1$, confirming the UAP of three-layer ReLU networks over an unbounded domain (Theorem \ref{thm: regression}). In conjunction with Proposition \ref{prop: two-layer cannot}, this confirms that the minimal depth of deep ReLU networks in $L^p(\Rd^{d \ge 2})$ is exactly $3$. Furthermore, {for a Lipschitz function} $f^*: [0,1]^{d_x} \rightarrow [0,1]^{d_y}$, we offer upper bounds of width by $O(\varepsilon^{-d_x})$, where $\varepsilon$ represents the  error bound.
\end{itemize}

\vspace{-0.2cm}
%%%%%%%%%%%%%%%%%%%%%%
\section{Related Works} \label{sec: related}
In this section, we will review studies that relate to topological approaches in deep learning, compactly supported DNNs, and the universal approximation property (UAP) along with width bounds for small depth neural networks.
\vspace{-0.2cm}
\paragraph{Topological approach in deep learning.}
Although few studies have explored the connection between neural network architecture and the topological features of the training data, their findings hold significant implications. \cite{naitzat2020topology} conducted experiments demonstrating the rapid reduction of Betti numbers of the data topology during training with deep ReLU layers. \cite{guss2018characterizing} attempted to correlate the architecture and data topology by increasing the complexity of the training data to guide the selection of deep neural network architecture. \cite{yang2021dive} offered upper bounds on the Betti numbers for the preimage of a layer in deep neural networks and compared them with the Betti numbers of the dataset to inform network architecture selection. Further experiments have underscored the relevance of topology-dependent neural network architecture \cite{hajij2020topological, hajij2021topological, magai2022topology, tiwari2022effects}. For an in-depth review of topological deep learning, refer to \cite{hensel2021survey, zia2023topological}.

\paragraph{Compactly supported DNNs.} 
The construction of compactly supported subnetworks for use as building blocks in DNNs has been a focus in several studies, often with the aim of approximating a compactly supported function \cite{barbu2021compact, he2018relu, song2023approximation}. For instance, \cite{huang2020relu} proposed a `TRLU function' for approximating constant or piecewise linear functions on a compact domain, and \cite{huang2022theoretical} further examined the resulting neural networks, considering open convex polytopes as the partition of input space. {This concept aligns closely with Lemma \ref{lem: convex polytope} which we used to construct desired neural networks.}
Additionally, \cite{kratsios2022relu} studied the UAP for deep neural networks with ReLU and pooling layers over compactly-supported integrable functions, providing bounds of width, depth, and the number of pooling layers. Our work aligns with these studies as we explicitly construct small depth (three or four-layer) neural networks using ReLU activation and pooling layers to approximate the indicator function over a given topological space.

\paragraph{UAP and width bounds for small depth neural networks.}

The Universal Approximation Property (UAP) of shallow neural networks and the width bounds associated with them have been a subject of extensive study under various conditions \cite{barron1993universal, hecht1987kolmogorov, ismailov2020three, pinkus1999approximation}. Recently, \cite{hertrich2021towards} utilized tropical geometry \cite{maclagan2021introduction} and polyhedral theory to prove the UAP of two-layer neural networks, a method analogous to our approach in this paper. They also provided width and depth bounds to approximate a function that is the difference of two convex functions, a result we replicate for indicator functions over the difference of unions of convex polytopes.

Addressing unbounded domain $\Rd^d$, \cite{eldan2016power} proposed a radial function that two-layer networks fail to approximate. \cite{wang2022approximation} further contributed to this dialogue by presenting both negative and positive results on unbounded domains. They demonstrated that two-layer ReLU neural networks cannot universally approximate on the Euclidean plane $\Rd^2$. However, they also proved that three-layer ReLU networks can serve as universal approximators in $L^p(\Rd^d)$. Drawing inspiration from these findings, we extend their initial result by illustrating that two-layer ReLU networks cannot serve as universal approximators on $L^p(\Rd^d)$ for $p\ge1$ and $d\ge2$ (Proposition \ref{prop: two-layer cannot}).

On a different note, \cite{cohen2019universal} deduced that any Lipschitz function over a compact set can be approximated by a three-layer neural network with a novel activation function. This research bears similarity to our result in Section \ref{sec: regression}, albeit with a slightly different conclusion. While they introduced a new activation function and focused on the boundedness of matrix norm, we exclusively employ the ReLU activation function and provide the upper bound of widths. Additionally, \cite{kratsios2022relu} explored the UAP of DNNs approximating compactly supported Lipschitz functions through ReLU and pooling layers.

In this paper, we merge these approaches to establish data-topology dependent width bounds and to validate the UAP of three-layer ReLU networks on $\Rd^d$.

%%%%%%%%%%%%%%%%%%%%%%%%%%%%%%%%%%%%%%
\section{Data-Topology Dependent Upper Bounds of Widths} \label{sec: main}
\vspace{-0.2cm}
\subsection{Preliminaries}
\paragraph{Notation.}

In this article, scalars are denoted by lowercase letters, vectors by boldface lowercase letters, and matrices by boldface uppercase letters.
The $L^p$-norm in function spaces is represented as $\norm{\cdot}_{L^p}$.
For a positive integer $m$, $[m]$ is used to represent the set $\{ 1,2,\cdots,m \}$.
The ReLU activation function is denoted by $\sigma(x):= \textup{ReLU}(x)=\max\{0, x \}$, and it is applied to a vector coordinate-wise.
The sigmoid activation function is denoted as $\texttt{SIG}(x) = \frac{1}{1+e^{-x}}$.
The max pooling operation is represented as $\texttt{MAX} : \Rd^d \rightarrow \Rd$, returning the maximum value among the elements of a given vector.
$B_\varepsilon(\xb_0):=\{\xb : \norm{\xb - \xb_0}_2 <\varepsilon\}$ denotes the epsilon neighborhood of $\xb_0$. The $\varepsilon$ neighborhood of a compact set $\Xc$ is defined by $B_\varepsilon(\Xc):=\{\xb : \min_{\yb\in\Xc} \norm{\xb - \yb}_2 <\varepsilon\}$.
Lebesgue measure in $\Rd^d$ is represented by $\mu_d$, or simply $\mu$ when $d$ is apparent in the context.
For a given set $\Xc\subset\Rd^d$, the indicator function over $\Xc$ is denoted as follows:
\begin{align*}
    \indicator{\Xc}(\xb) := 
    \begin{cases}
    	1, \qquad \text{if } \xb\in\Xc , \\ 
    	0, \qquad \text{otherwise.}
    \end{cases}
\end{align*}

\paragraph{Deep neural networks.}

We use the following notation to represent the architecture of a $k$-layer neural network $\Nc:\Rd^d \rightarrow \Rd$ with widths $d_1, d_2, \cdots, d_{k-1}$ and activation functions $\texttt{ACT}_1,  \texttt{ACT}_2, \cdots, \texttt{ACT}_{k}$ on each hidden layer: $d\stackrel{\texttt{ACT}_1}{\rightarrow}d_1\stackrel{\texttt{ACT}_2}{\rightarrow}d_2\stackrel{\texttt{ACT}_3}{\rightarrow} \cdots \stackrel{\texttt{ACT}_{k-1}}{\rightarrow} d_{k-1} \stackrel{\texttt{ACT}_k}{\rightarrow} 1$. When the activation function is identity, we denote nothing on the arrow.
For example, consider a three-layer fully connected network $\Nc$ with ReLU activation functions on hidden layers and a max pooling operation for the last layer, defined by
\begin{align*}
    \Nc(\xb) = \texttt{MAX} \left[ \sigma(\Wb_2\sigma(\Wb_1 \xb + \bb_1)+\bb_2) \right].
\end{align*}
{Then} its architecture is denoted by $\ReLUmax{d}{d_1}{d_2}$. 

The objective of this paper is to construct a neural network $\Nc$ such that, given $\varepsilon>0$, $p\ge1$, and a compact set $\Xc\subset\Rd^d$, the following inequality holds: $\norm{\Nc(\xb) - \indicator{\Xc}(\xb)}_{L^p(\Rd^d)}<\varepsilon$. By Proposition \ref{prop: epsilon}, it is sufficient to construct a neural network $\Nc$ that satisfies the following {three} conditions:
\textbf{A.} $\Nc(\Rd^d)\subset[0,1]$,
\textbf{B.} $\Nc(\xb) = 1$ if $\xb \in \Xc$, and
\textbf{C.} $\Nc(\xb) = 0$ if $\xb \not\in B_{\varepsilon'}(\Xc)$, for a given $\varepsilon'>0$.
In other words, the desired network should output a constant value of $1$ over the given manifold $\Xc$ and vanish for inputs farther than $\varepsilon'$ from $\Xc$. This is a property desired for classifiers or discriminators.% in GANs.

\subsection{Main Theoretical Findings}

\subsubsection{Upper bounds of the widths for compact sets}

Consider $\Xc \subset \Rd^d$, a compact set, and the task of approximating its indicator function $\indicator{\Xc}$ within $\Rd^d$. As per Proposition \ref{prop: two-layer cannot}, a minimum of three layers is required for this task. Intriguingly, if $\Xc$ can be encapsulated within a collection of convex polytopes, a three-layer neural network employing ReLU activation and max pooling operations can be constructed. The following proposition outlines not just the feasibility of such a neural network, but also provides a method for its construction.

\begin{proposition} \label{prop: compact}
    Let $\Xc\subset\Rd^d$ be a compact set. For a given $\varepsilon>0$, suppose there exists a finite collection of convex polytopes $\Cc$ such that $\Xc \subset \bigcup_{C\in\Cc}C \subset B_\varepsilon(\Xc)$.
    Let $k:=|\Cc|$ be the caldinality of $\Cc$, and $l$ be the total sum of number of faces of each polytope in $\Cc$. Then, there exists a three-layer neural network $\Nc$ with the architecture $\ReLUmax{d}{l}{k}$ such that $\Nc(\Rd^d)=[0,1]$ and
    \begin{align*}
        \Nc(\xb) &= 1 \qquad \text{if } \xb \in \Xc, \\
        \Nc(\xb) &= 0 \qquad \text{if } \xb \not\in B_\varepsilon(\Xc).
    \end{align*}
\end{proposition}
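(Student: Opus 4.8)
The plan is to realize $\Nc$ as a maximum of $k$ scalar ``bump'' functions, one per polytope, each built by the two ReLU layers, and then to fuse them with the final $\texttt{MAX}$ pooling. Write each polytope in half-space form $C=\{\xb:\ab_i^\top\xb\le b_i,\ i\in[n_C]\}$, where $n_C$ is its number of faces, so that $\sum_{C\in\Cc}n_C=l$. Fix a scale $\alpha>0$ and form, for every face, the quantity $\sigma(\alpha(\ab_i^\top\xb-b_i))$, which is $0$ exactly where the $i$-th constraint holds and grows linearly once it is violated. Aggregating the faces of one polytope, set $h_C(\xb):=\sigma\!\big(1-\sum_{i\in[n_C]}\sigma(\alpha(\ab_i^\top\xb-b_i))\big)$. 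The key is that the total violation $V_C(\xb):=\sum_{i}\max\{0,\ab_i^\top\xb-b_i\}$ vanishes precisely on $C$, so $h_C\equiv1$ on $C$, while $h_C$ drops to $0$ once $\alpha V_C(\xb)\ge1$; thus $h_C$ takes values in $[0,1]$ with $\{h_C>0\}$ a thin neighborhood of $C$ whose width is controlled by $\alpha$.

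Next I would do the architecture bookkeeping and verify the conditions that are immediate. The first hidden layer outputs the $l$ neurons $\sigma(\alpha(\ab_i^\top\xb-b_i))$ over all faces of all polytopes (width $l$); the second hidden layer outputs the $k$ neurons $h_C$, each reading only the $n_C$ first-layer neurons of $C$ through weights $-1$ and bias $+1$ before the ReLU (width $k$); finally $\Nc(\xb)=\texttt{MAX}[\,(h_C(\xb))_{C\in\Cc}\,]$. This is exactly the architecture $\ReLUmax{d}{l}{k}$. Since $\Xc\subset\bigcup_{C}C$, any $\xb\in\Xc$ lies in some $C$, giving $h_C(\xb)=1$ and hence, as every $h_{C'}\le1$, $\Nc(\xb)=1$; moreover each $h_C\in[0,1]$ forces $\Nc(\Rd^d)\subset[0,1]$, and continuity of $\Nc$ together with the attained values $1$ and $0$ upgrades this to $\Nc(\Rd^d)=[0,1]$. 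It then remains only to secure condition \textbf{C}, that $\Nc$ vanishes outside $B_\varepsilon(\Xc)$.

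The main obstacle is precisely this last point: choosing $\alpha$ large enough that $\{h_C>0\}\subset B_\varepsilon(\Xc)$ for every $C$. Here I would exploit that each $C$ is a bounded polytope, so $V_C$ is a nonnegative, convex, piecewise-linear function vanishing exactly on $C$ and coercive (the bounded polytope has trivial recession cone), whence all sublevel sets $\{V_C\le s\}$ are compact and shrink to $C$ as $s\to0^+$. Because $\bigcup_{C}C$ is compact and sits inside the open set $B_\varepsilon(\Xc)$, the gap $\delta:=\operatorname{dist}\!\big(\bigcup_{C}C,\ \Rd^d\setminus B_\varepsilon(\Xc)\big)$ is strictly positive, and every point within $\delta$ of $C$ remains in $B_\varepsilon(\Xc)$. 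Minimizing the continuous $V_C$ over the compact set $\{V_C\le1\}\cap\{\operatorname{dist}(\cdot,C)\ge\delta\}$ produces a threshold $s_C>0$ with $\{V_C<s_C\}\subset B_\delta(C)\subset B_\varepsilon(\Xc)$; taking $\alpha>\max_C s_C^{-1}$ then forces $\{h_C>0\}=\{\alpha V_C<1\}\subset B_\varepsilon(\Xc)$. Hence $h_C(\xb)=0$ for all $C$ whenever $\xb\notin B_\varepsilon(\Xc)$, so $\Nc(\xb)=0$ there, completing the three conditions. The only delicate estimate is this coercivity-and-positive-gap step that converts the polyhedral quantity $V_C$ into a genuine Euclidean-neighborhood guarantee; the rest is linear bookkeeping.
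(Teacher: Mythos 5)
Your proposal is correct, and it follows the paper's top-level strategy exactly: build one two-layer ``bump'' per polytope (width $l$ in the first layer, one neuron per polytope in the second) and fuse the $k$ bumps with \texttt{MAX} pooling. Where you genuinely diverge is in how the bump is built, which in the paper is the entire content of its key Lemma \ref{lem: convex polytope}. The paper orients unit normals \emph{inward}, weights each first-layer neuron $\sigma(\wb_i^\top\xb+b_i)$ by the $(d-1)$-measure of the corresponding face, and uses the cone-volume identity $\mu_d(C)=\frac1d\sum_i(\wb_i^\top\xb+b_i)\mu_{d-1}(A_i)$ to force the pre-activation to equal $1$ exactly on $C$; you instead orient the constraints \emph{outward}, so your first-layer neurons $\sigma(\alpha(\ab_i^\top\xb-b_i))$ measure constraint violation and vanish identically on $C$, making $h_C\equiv1$ on $C$ immediate with uniform weights and no geometric identity at all. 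Your construction is thus more elementary and slightly more general (it never needs the polytope to have positive Lebesgue measure, which the paper's lemma assumes). The two approaches also differ in how the scale constant is justified: the paper picks its $M$ by asserting that the (weighted) negative violation attains a strictly negative maximum on the closed set $\overline{B_{\varepsilon/2}(C)^c}$ --- an unbounded set, so attainment implicitly relies on exactly the coercivity you make explicit; your argument via the trivial recession cone, compact sublevel sets of $V_C$, and the positive gap $\delta=\operatorname{dist}\bigl(\bigcup_C C,\ \Rd^d\setminus B_\varepsilon(\Xc)\bigr)$ is the more careful treatment of this step. What the paper's weighted construction buys in exchange is a closed-form, geometrically meaningful choice of weights; what yours buys is a shorter, self-contained proof in which condition \textbf{B} is trivial and all the work is isolated in the single compactness estimate for condition \textbf{C}.
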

\begin{proof} \vspace{-0.2cm}
    Let $\Cc = \{C_1, \cdots, C_k\}$ and $l_i$ be the number of faces of
    $C_i$. By Lemma \ref{lem: convex polytope}, for each $C_i\in\Cc$, there exists a two-layer ReLU network $\Tc_i$ with the architecture $\ReLUtwo{d}{l_i}{1}$ such that $\Tc_i(\xb)=1$ for $\xb \in C_i$ and $\Tc_i(\xb)<0$ for $\xb \not\in B_\varepsilon(C)$.
    Therefore, $\sigma(\Tc_i(\xb))=1$ for $\xb \in C_i$ and $\sigma(\Tc_i(\xb))=0$ for $\xb \not\in B_\varepsilon(C_i)$.
    Now, we take max pooling operation to define a three-layer neural network $\Nc$. 
    \begin{align*}
        \Nc(\xb):=\texttt{MAX}(\Tc_1(\xb), \cdots, \Tc_k(\xb)).
    \end{align*}
    Then it is easy to verify that $\Nc(\xb)$ is the desired three-layer neural network which has the architecture $\ReLUmax{d}{l}{k}$, where $l=l_1+\cdots+l_k$.
\end{proof}

Proposition~\ref{prop: compact}  confirms the universal approximation property of three-layer neural networks for indicator functions over a compact set, granted it can be closely covered by finite convex polytopes. It is crucial to highlight that the upper bound on the neural network's widths is dictated by the covering $\Cc$, specifically the number of constituents $k$ and the cumulative number of all faces $l$.
Given that a single compact set $\Xc$ may have multiple potential convex polytope coverings, deciding on the optimal covering method becomes a significant consideration. If $\varepsilon$ is significantly large, we can opt for a loose cover of $\Xc$ with smaller values of $k$ and $l$. However, a smaller $\varepsilon$ necessitates larger values for both $k$ and $l$.

This brings us to an extension of the original proposition to address the issue of a high count of convex polytopes in the covering of $\Xc$, which can occur due to the set's intricate characteristics. The subsequent theorem tackles this problem by increasing depth: if $\Xc$ can be enveloped by the difference between two unions of convex polytopes, then a four-layer ReLU network can effectively approximate $\indicator{\Xc}$.

\begin{theorem} \label{thm: compact}
    Let $\Xc\subset \Rd^d$ be a compact set.
    Suppose there exists a finite collection of convex polytopes $\Cc = \{P_1,\cdots, P_{n_{P}}, Q_1,\cdots, Q_{n_Q}\}$ such that the set difference
    $D:= \bigcup_{i\in[n_P]} P_i - \bigcup_{j\in[n_Q]} Q_j$ satisfies
    $\Xc\subset D \subset B_\varepsilon(\Xc)$. Let $l$ denote the total number of faces of the convex polytopes in $\Cc$.
    Then, there exists a four-layer ReLU network $\Nc$ with the architecture $\ReLUFour{d}{l}{(n_P+n_Q)}{2}{1}$ such that $\Nc(\Rd^d) = [0,1]$ and
    \begin{align*}
        \Nc(\xb) &= 1 \qquad \text{if } \xb\in\Xc, \\
        \Nc(\xb) &= 0 \qquad \text{if } \xb\not\in B_\varepsilon(\Xc).
    \end{align*} 
\end{theorem}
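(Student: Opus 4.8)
The plan is to reduce, via Proposition~\ref{prop: epsilon}, to constructing a network meeting the three pointwise conditions \textbf{A}, \textbf{B}, \textbf{C}, and to realize that network as the clamped combination
\[
\Nc(\xb) = \min\!\Big(\max\big(S_P(\xb) - M\,S_Q(\xb),\,0\big),\,1\Big),
\]
where $S_P(\xb) := \sum_{i\in[n_P]} p_i(\xb)$ and $S_Q(\xb) := \sum_{j\in[n_Q]} q_j(\xb)$, the functions $p_i,q_j$ are approximate indicators of the polytopes $P_i,Q_j$, and $M\ge n_P$ is a fixed constant (e.g.\ $M=n_P$). The first two layers would produce the $p_i$ and $q_j$ exactly as in Proposition~\ref{prop: compact}: applying Lemma~\ref{lem: convex polytope} to each polytope in $\Cc$ yields two-layer ReLU subnetworks $\Tc_{P_i},\Tc_{Q_j}$ whose first layers collectively use the $l$ face-functions (giving width $l$) and whose second layers collapse to one neuron per polytope (giving width $n_P+n_Q$); after a ReLU I set $p_i:=\sigma(\Tc_{P_i})$ and $q_j:=\sigma(\Tc_{Q_j})$, each valued in $[0,1]$, equal to $1$ on its polytope and $0$ outside a prescribed collar.

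The third and fourth layers would implement the clamp using the identity $\sigma(s)-\sigma(s-1)=\min(\max(s,0),1)$, valid for all $s\in\Rd$. Concretely, writing $s(\xb):=S_P(\xb)-M\,S_Q(\xb)$, which is affine in the layer-two outputs, the width-$2$ third layer computes $u_1:=\sigma(s)$ and $u_2:=\sigma(s-1)$, and the output layer returns $\sigma(u_1-u_2)=u_1-u_2=\min(\max(s,0),1)$ because $u_1-u_2\in[0,1]$. All four maps are affine-then-ReLU, so the network has exactly the architecture $\ReLUFour{d}{l}{(n_P+n_Q)}{2}{1}$ and its range lies in $[0,1]$; surjectivity onto $[0,1]$ (condition \textbf{A}) then follows from continuity of $\Nc$, the fact that it attains both $1$ and $0$, and the intermediate value theorem on the connected domain $\Rd^d$.

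It remains to fix the collar widths and verify \textbf{B} and \textbf{C}. Writing $P:=\bigcup_i P_i$ and $Q:=\bigcup_j Q_j$, I would first note that $\Xc\subseteq D$ is disjoint from the closed set $Q$ and compact, so $\delta:=\mathrm{dist}(\Xc,Q)>0$; choosing each $Q_j$-collar smaller than $\delta$ forces $S_Q(\xb)=0$ for $\xb\in\Xc$, and since some $P_i$ contains $\xb$ we get $s(\xb)\ge1$, hence $\Nc(\xb)=1$, giving \textbf{B}. For \textbf{C}, take $\xb\notin B_\varepsilon(\Xc)$, so $\xb\notin D$. The clean structural fact I would exploit is $P\setminus B_\varepsilon(\Xc)\subseteq Q$, since $B_\varepsilon(\Xc)^c\cap P\subseteq D^c\cap P = P\cap Q$. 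Thus if $\xb\in Q$ then $S_Q(\xb)\ge1$ and $s(\xb)\le n_P-M\le0$; and if $\xb\notin Q$ then necessarily $\xb\notin P$, so one argues that for a small enough $P$-collar the nearest point of $P$ to $\xb$ already lies in $Q$, whence the $q_j$ terms dominate and again $s(\xb)\le0$, yielding $\Nc(\xb)=0$.

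The main obstacle is precisely this last collar analysis. A point $\xb$ just outside $B_\varepsilon(\Xc)$ whose nearest point of $P$ lies in the thin annulus just inside the sphere $\partial B_\varepsilon(\Xc)$ can activate some $p_i$ without being covered by $Q$, which would break \textbf{C}. I would control this by taking the $P$-collar no larger than the $Q$-collar and small relative to the geometry, and by arranging that $\overline{D}$ is bounded away from $\partial B_\varepsilon(\Xc)$ (slightly shrinking the target radius if needed), so that any $p_i$ activated near $\partial P\setminus B_\varepsilon(\Xc)\subseteq Q$ is matched by a strictly larger $M\,S_Q$. Turning the inequality $S_P\le M\,S_Q$ into a quantitative statement on this collar, using the explicit affine form of the functions supplied by Lemma~\ref{lem: convex polytope}, is the delicate step; the rest is bookkeeping.
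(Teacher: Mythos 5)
Your construction is essentially the paper's. The paper also builds the first two layers exactly as you do (Lemma \ref{lem: convex polytope} applied to each polytope in $\Cc$, giving widths $l$ and $n_P+n_Q$), and then uses a width-$2$ third layer $a=\sigma\bigl(1-\sum_i a_i\bigr)$, $b=\sigma\bigl(1-\sum_j b_j\bigr)$ with output $\sigma(b-a)$; your clamp $\sigma(s)-\sigma(s-1)$ of $s=S_P-MS_Q$ is a different but equally valid algebraic realization of the same width-$2$-then-$1$ tail (indeed, on the region where $S_Q=0$ both networks reduce to $\min(S_P,1)$).

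The step you flag as delicate — verifying condition \textbf{C} on the $P$-collar — is a genuine issue, but it can be closed, and you should know that the paper's own proof ignores it entirely (it even takes every collar to be of radius $\varepsilon$, which is too crude: condition \textbf{B} already fails under that choice whenever some $Q_j$ comes within $\varepsilon$ of $\Xc$). Here is the missing argument; it needs neither shrinking the target radius (which is not permitted, since the hypothesis only gives $D\subset B_\varepsilon(\Xc)$) nor any assumption that $\overline{D}$ stays away from $\partial B_\varepsilon(\Xc)$. Fix the $Q$-collars smaller than $\mathrm{dist}(\Xc,Q)>0$, as you did for \textbf{B}. Since each $q_j$ is continuous and equals $1$ on the compact set $Q_j$, there is $\rho>0$ such that $S_Q(\xb)>\tfrac12$ whenever $\mathrm{dist}(\xb,Q)\le\rho$; take $M=2n_P$. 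Now claim there exists $\eta>0$ such that every $\xb$ with $\mathrm{dist}(\xb,\Xc)\ge\varepsilon$ and $\mathrm{dist}(\xb,P)\le\eta$ satisfies $\mathrm{dist}(\xb,Q)\le\rho$. If not, choose $\xb_n$ with $\mathrm{dist}(\xb_n,\Xc)\ge\varepsilon$, $\mathrm{dist}(\xb_n,P)\le 1/n$, $\mathrm{dist}(\xb_n,Q)>\rho$; this sequence is bounded, so a subsequence converges to some $\xb^*$ which lies in $P$ (closed), has $\mathrm{dist}(\xb^*,Q)\ge\rho$ so $\xb^*\notin Q$, hence $\xb^*\in D\subset B_\varepsilon(\Xc)$ — contradicting $\mathrm{dist}(\xb^*,\Xc)\ge\varepsilon$. (This is exactly where your ``thin annulus'' worry dissolves: any point where $\overline{D}$ touches $\partial B_\varepsilon(\Xc)$ lies in $\overline{D}\setminus D\subset P\cap Q$, so a sequence kept $\rho$-far from $Q$ cannot accumulate there.) Taking all $P$-collars $\le\eta$, the one case of \textbf{C} not settled by your dichotomy ($\xb\in Q$, or $\xb$ outside the $P$-collars) now gives $\mathrm{dist}(\xb,Q)\le\rho$, hence $MS_Q>n_P\ge S_P$, so $s\le0$ and $\Nc(\xb)=0$. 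With this paragraph inserted your proof is complete, and it is in fact more careful than the one in the paper.
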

\begin{proof}\vspace{-0.2cm}
    By Lemma \ref{lem: convex polytope}, for each $A\in \Cc=\{ P_1 ,\cdots, P_{n_P}, Q_1, \cdots, Q_{n_Q}\}$, we can construct a two-layer ReLU network $\Tc_A$ such that $\Tc_A(\xb) = 1$ for $\xb\in A$ and $\Tc_A(\xb)=0$ for $\xb\not\in B_\varepsilon(A)$. Let $a_i := \Tc_{P_i}$ for $i\in [n_P]$ and $b_j := \Tc_{Q_j}$ for $j\in [n_Q]$. 
    Define two neurons in the third layer by 
    \begin{align*}
        a:= \sigma(1-a_1-\cdots-a_{n_P}) \qquad\text{and}\qquad b:=\sigma(1-b_1-\cdots-b_{n_Q}).
    \end{align*}
    Finally, defining the last layer by $\sigma(b-a)$, we obtain the desired network $\Nc$.
    The architecture of this network is $\ReLUFour{d}{l}{(n_P+n_Q)}{2}{1}$.
\end{proof}

The primary advantage of this theorem is the reduction in the width of neural networks. While Proposition \ref{prop: compact} requires the covering of $\Xc$ solely through the union of convex polytopes, Theorem \ref{thm: compact} relaxes this requirement by allowing the difference between two unions of convex polytopes. This can decrease the necessary number of neurons, given that a compact set might be covered {by the difference of two unions of fewer convex polytopes}, as we will illustrate in Example \ref{exp: k gon} (Figure \ref{fig: polygon}).

However, a persisting challenge is that there is no general method known for covering a compact set using convex polytopes, or their differences. To address this, in the following section, we present general upper bounds of widths in three-layer neural networks when $\Xc$ forms a simplicial complex.

\subsubsection{Upper bounds of widths for simplicial \texorpdfstring{$m$}{m}-complexes}

Before delving into the theorem, {we recall} some definitions. A simplicial $m$-complex is a type of simplicial complex where the highest dimension of any simplex equals $m$. For a given simplicial complex $K$, a facet of $K$ is a maximal simplex which does not serve as a face of any larger simplex.
With these definitions in mind, in the following theorem, we showcase the architecture of three-layer neural networks that can approximate $\indicator{\Xc}$ % with an arbitrarily small margin of error
for a given simplicial complex $\Xc$.

\begin{theorem} \label{thm: simplicial complex}
    Let $\Xc\subset \Rd^d$ be a simplicial $m$-complex consists of $k$ facets, and let $k_j$ be the number of $j$-dimensional facets of $\Xc$. Then, for a given $\varepsilon>0$, there exists a three-layer neural network $\Nc$ with the architecture $\ReLUmax{d}{d_1}{k}$ such that $\Nc(\Rd^d)=[0,1]$, $\Nc(\xb)=1$, for $\xb\in\Xc$, and $\Nc(\xb)=0$ for $\xb\not\in B_\varepsilon(\Xc)$. 
    Furthermore, $d_1$ is bounded by
    \begin{align} \label{eq: simplicial complex} \hspace{-0.5cm} 
        d_1 \le 
        \min\left\{ k(d+1) - (d-1)\floor{ \sum_{j<\frac{d}{2}} \frac{k_j}{2}\!}
        ,\;
        (d+1)\! \left[\sum_{j\le \frac{d}{2}} \left(  k_j \frac{j+2}{d-j} + \frac{j+2}{j+1} \right) + \sum_{j>\frac{d}{2}} k_j \right] \right\}. 
    \end{align}
\end{theorem}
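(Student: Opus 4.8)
The plan is to reduce everything to Proposition~\ref{prop: compact} by covering $\Xc$ with a finite family of full-dimensional convex polytopes and then bounding the total number of their faces in two different ways, corresponding to the two entries of the minimum. Since $\Xc$ is a finite simplicial complex it is compact and equals the union of its $k$ facets, each of which is a closed simplex and hence convex; the only wrinkle is that a $j$-facet with $j<d$ is lower-dimensional, so Lemma~\ref{lem: convex polytope} cannot be applied to it verbatim. I would therefore first replace each $j$-facet $S=\operatorname{conv}(v_0,\dots,v_j)$ by a \emph{thin} full-dimensional simplex: adjoin $d-j$ auxiliary vertices lying within distance $\varepsilon'$ of $S$ in directions transverse to its affine hull, obtaining a $d$-simplex $T\supseteq S$ with $T\subseteq B_\varepsilon(S)\subseteq B_\varepsilon(\Xc)$ and exactly $d+1$ facets. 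Feeding such a cover into Proposition~\ref{prop: compact} yields a network of the form $\ReLUmax{d}{d_1}{k}$ satisfying conditions A, B, and C, with $d_1$ equal to the total number of bounding half-spaces of the chosen polytopes; everything then reduces to counting faces.

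For the first entry, using one thin $d$-simplex per facet gives the base count $d_1\le k(d+1)$. The refinement for low-dimensional facets exploits that if $j<d/2$ then any two $j$-facets have at most $d+1$ vertices combined and their joint affine span has dimension at most $d$. I would pair up such facets and build the two covering simplices so that they share a common family of $d-1$ bounding half-spaces --- a polyhedral ``wedge'' whose boundary hyperplanes all pass through the flat joining the pair --- leaving only a bounded number of private ``capping'' half-spaces for each. Reusing these shared neurons in the first hidden layer removes $d-1$ faces per pair, producing the correction $-(d-1)\floor{\sum_{j<d/2}k_j/2}$.

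For the second entry I would instead \emph{amortize} the facet budget of a single simplex over many facets. A thin $d$-simplex carries many lower-dimensional faces on its boundary, so several dimension-$j$ facets of $\Xc$ can be realized simultaneously as distinct faces of one covering simplex, whose $d+1$ half-spaces are then shared among all the facets it hosts. Bounding how many dimension-$j$ facets a single simplex can host, and optimizing the grouping of the $k_j$ facets over simplices, yields an amortized rate of $\frac{j+2}{d-j}$ half-spaces per low-dimensional facet together with a fixed per-dimension overhead $\frac{j+2}{j+1}$; facets with $j>d/2$ are too large to be packed economically and retain the cost $d+1$ each. Summing over $j$ and factoring out $d+1$ gives the second bracket, and taking the smaller of the two constructions gives the stated minimum.

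The routine parts are the invocation of Proposition~\ref{prop: compact} and the verification of conditions A, B, and C. The crux --- and the step I expect to be hardest --- is the geometric and combinatorial accounting behind the savings: one must guarantee that every thickened, shared, or packed polytope remains inside $B_\varepsilon(\Xc)$ even though $B_\varepsilon(\Xc)$ is not convex (so the thinness of the auxiliary vertices and the locality of the capping half-spaces must be controlled carefully), then pin down the exact number of half-spaces that can be shared within a pair and hosted on a single simplex, and finally carry out the discrete optimization over pairings and groupings that produces precisely the floor term and the rational coefficients $\frac{j+2}{d-j}$ and $\frac{j+2}{j+1}$.
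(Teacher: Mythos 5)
Your overall strategy coincides with the paper's: cover each facet by a thin full-dimensional simplex (this is exactly Lemma~\ref{lem: simplex}), feed the cover into Proposition~\ref{prop: compact} with one covering polytope per facet (so the second hidden layer has width $k$), obtain the base count $k(d+1)$, and then derive the two entries of the minimum by two different schemes for \emph{sharing} first-layer neurons among the covering polytopes. Your second scheme is the paper's argument almost verbatim: since $\floor{(d+1)/(j+1)}$ dimension-$j$ facets have at most $d+1$ vertices in total, they can all be realized as faces of a single covering $d$-simplex, and each hosted facet is then cut off by one extra capping hyperplane; the floor-function estimates turn this into the amortized rate $\tfrac{j+2}{d-j}$ and the overhead $\tfrac{j+2}{j+1}$ (times $d+1$), with facets of dimension $j>\tfrac d2$ keeping cost $d+1$.

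The gap is in your mechanism for the first entry. You ask the two covering $d$-simplices of a paired set of $j$-facets to share $d-1$ bounding half-spaces whose hyperplanes ``all pass through the flat joining the pair.'' But any $d-1$ facet hyperplanes of a nondegenerate $d$-simplex intersect exactly in the line supporting one of its edges, so $d-1$ hyperplanes can serve as common facet hyperplanes of two simplices only if they all contain a single common \emph{line}. The flat joining the pair, read as the affine span of $X_1\cup X_2$, has dimension $2j+1$: for $j\ge 1$ it cannot lie inside hyperplanes whose intersection is a line, and in the admissible boundary case $2j+1=d$ (e.g.\ two skew segments in $\Rd^3$, where $j=1<d/2$) it is not contained in \emph{any} hyperplane at all. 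So, as written, your wedge construction fails precisely in the generic situation the universal bound must cover. The paper proceeds differently and avoids this: since the pair has $2j+2\le d+1$ vertices, both facets can be completed (by auxiliary points chosen in the $\varepsilon$-neighborhood) to the vertex set of a \emph{single} joint $d$-simplex, of which both facets are faces; each facet is then isolated by one nearly-supporting cutting hyperplane, so the two covering polytopes share all $d+1$ simplex hyperplanes and each has one private cut --- again $d+3$ distinct hyperplanes, hence the saving of $d-1$ per pair and the same final bound. Your sharing pattern ($d-1$ common, two private caps each) can in fact be salvaged by forcing the common hyperplanes to contain a line through one vertex of each facet of the pair, but establishing that the resulting thin simplices still contain their facets and stay inside $B_\varepsilon(\Xc)$ requires a careful projection argument that your sketch does not supply; the paper's join-and-cut construction gets the identical count with none of that delicacy.
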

\begin{proof}[Proof Sketch.]
    Let $X_1, X_2, \cdots, X_k$ be $k$ facets of $\Xc$. {For each facet $X_i$, consider a $d$-simplex cover appeared in Lemma \ref{lem: simplex}. 
    Then Proposition \ref{prop: compact} provides the neural network $\Nc$ with the architecture $\ReLUmax{d}{(d+1)k}{k}$.}
    Lastly, removing the neurons in the first layer that can be overlapped, we get a slightly improved bound.
    The full proof can be found in Appendix \ref{app: proofs}.
\end{proof}

Theorem \ref{thm: simplicial complex} reveals that the width is restricted by the dimension $m$ and the number of facets $k$ of a provided simplicial complex. Looking at this from a topological perspective, it is generally intuitive that a smaller number of facets signifies a simpler structure of the simplicial complex. This notion is mathematically expressed in \eqref{eq: simplicial complex}, which suggests that when $m<\frac{d}{2}$ is fixed, the first maximum value in \eqref{eq: simplicial complex} results in $d_1 \lesssim k(d+1)-(d-1)\frac{k}{2} = \frac{k}{2}(d+3)$, which magnifies as $k$ increases.
Similarly, if $k$ is fixed and $\Xc$ consists of $m$-simplicies with $m<\frac{d}{2}$, the summation in the second maximum value in \eqref{eq: simplicial complex} reduces to $d_1 \lesssim (d+1) \left( k\frac{m+2}{d-m} + 2 \right)$, which rapidly diminishes as $m$ decreases. This suggests that a smaller dimension $m$ demands smaller widths, which aligns with the intuition that a low-dimensional manifold could be approximated with fewer neurons. To our knowledge, this is the first upper bound of the width of neural networks that depends on the topological data structure.

Theorem \ref{thm: simplicial complex} is a `universal' approximation result since the width bounds presented in \eqref{eq: simplicial complex} apply to any simplicial complex. However, being a general upper bound, there might be a smaller network architecture that can approximate a given simplicial complex. Proposition \ref{prop: compact} and Theorem \ref{thm: compact} show that the width upper bounds are determined by a convex polytope covering, which is heavily dependent on the topological features of $\Xc$.
In the upcoming toy example, we clarify how the upper bounds of width can vary based on the choice of covering, for the same topological space $\Xc$.

\begin{figure}[t]
    \centering
    \begin{subfigure}[b]{0.22\textwidth}
        \centering
        \includegraphics[width=\textwidth]{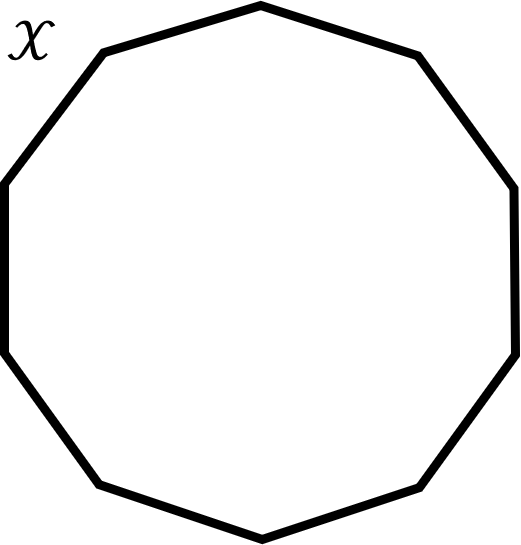}
        \caption{}
    \end{subfigure}
    \hfill
    \begin{subfigure}[b]{0.27\textwidth}
        \centering
        \includegraphics[width=\textwidth]{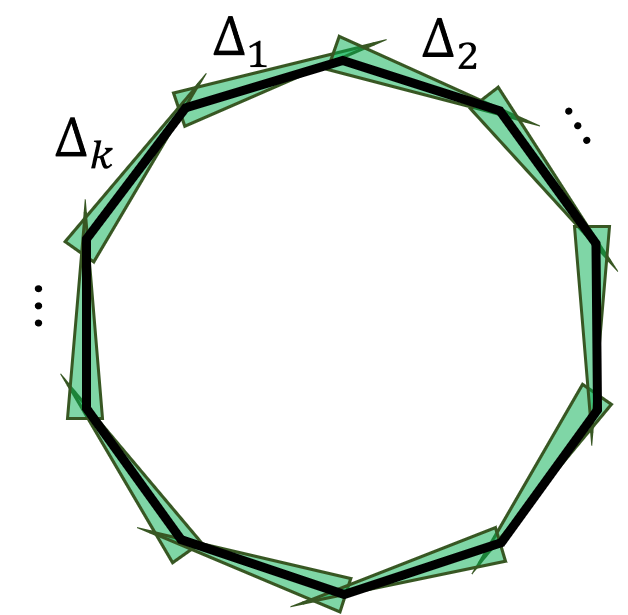}
        \caption{}
    \end{subfigure}
    \hfill
    \begin{subfigure}[b]{0.24\textwidth}
        \centering
        \includegraphics[width=\textwidth]{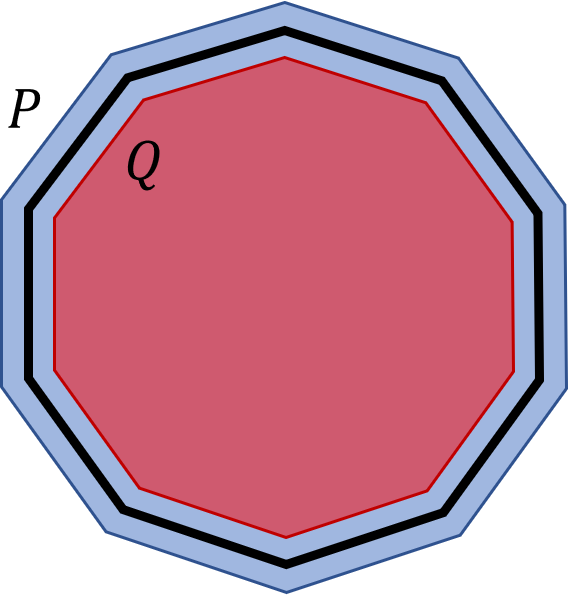}
        \caption{}
    \end{subfigure}
    \caption{A comparison of Proposition \ref{prop: compact}, Theorem \ref{thm: compact}, and Theorem \ref{thm: simplicial complex} in Example \ref{exp: k gon}.
    (a) Let $\Xc$ be the boundary of a regular $k$-gon in $\Rd^2$.
    (b) A triangle covering of $\Xc$ is given. Proposition \ref{prop: compact} provides the network architecture $\ReLUmax{2}{3k}{k}$.
    (c) $B_\varepsilon(\Xc)$ can be covered by the difference of two $k$-gon $P$ and $Q$. Then, Theorem \ref{thm: compact} guarantees that a network with the architecture  $\ReLUFour{2}{2k}{2}{2}{1}$ can approximate $\indicator{\Xc}$. %\vspace{-0.3cm}
    }
    \label{fig: polygon} 
\end{figure}
\begin{example}[Comparison of Proposition \ref{prop: compact}, Theorem \ref{thm: compact}, and Theorem \ref{thm: simplicial complex}] \label{exp: k gon}
    Let $\Xc$ be the boundary of a regular $k$-gon in $\Rd^2$ as shown in Figure \ref{fig: polygon}(a). 
    First, consider a convex polytope covering $\Cc$ that consists of $k$ green triangles in Figure \ref{fig: polygon}(b). 
    Proposition \ref{prop: compact} shows that a three layer network with the architecture $\ReLUmax{2}{3k}{k}$ can approximate $\indicator{\Xc}$.
    Since $\Xc$ can be regarded as a simplicial $2$-complex with $k$ facets, Theorem \ref{thm: simplicial complex} provides a slightly better bound $\ReLUmax{2}{\left(3k - \floor{\frac12 k}\right)}{k}$, but both networks still have $O(k)$ widths for two hidden layers.
    However, from its special structure of $\Xc$, it can be covered by difference of two solid $k$-gons $P$ and $Q$ (Figure \ref{fig: polygon}(c)). Then, Theorem \ref{thm: compact} provides a four-layer ReLU network with the architecture $\ReLUFour{2}{2k}{2}{2}{1}$, which has only one $O(k)$ width. 
    This example helps understand the benefits of depth from a topological perspective, akin to the advantages of depth studied in previous works \cite{bu2020depth, safran2017depth, telgarsky2016benefits, vardi2022width, wang2022approximation}. It is also important to note that the high number of neurons in the first layer is inevitable, as described in \cite{guss2018characterizing} and \cite{yang2021dive}.
\end{example} 
% In example \ref{exp: k gon}, we get smallest width by Theorem \ref{thm: compact} using the special topology of $\Xc$.
% This example also shows one role of depth in topological view.
% Here is where the advantage of Theorem \ref{thm: compact} comes into play.
% This is where Theorem \ref{thm: compact} becomes beneficial.

% EXTENSION ON OTHER NETWORKS - APPENDIX.
It is remarkable that our findings can be extended to various other neural network architectures. In Appendix \ref{app: extension}, we broaden our results to encompass deep ReLU networks (as seen in Corollary \ref{cor: four layer ReLU networks}) and networks using sigmoid activation function at last (as presented in Corollary \ref{cor: bce}).

\subsubsection{Upper bounds of widths in terms of Betti numbers}
% \vspace{-0.1cm}
The Betti number is a key metric used in TDA to denote the number of $k$-dimensional `holes' in a data distribution. Owing to its homotopy invariance, Betti numbers are frequently employed to study the topological features of a given topological space.
Interestingly, our prior results can be leveraged to ascertain a neural network architecture with width bounds defined in terms of the Betti numbers, given that the dataset $\Xc$ exhibits certain structural characteristics.

Specifically, Theorem \ref{thm: compact} offers upper width bounds when $\Xc$ can be depicted as a difference between unions of convex sets. Consequently, if $\Xc$ is convex and only contains `convex-shaped holes', we can derive a network width bound in relation to its Betti numbers. This concept is further explicated in the following theorem and example.

\begin{theorem} \label{thm: betti numbers}
    Suppose $\Xc\subset\Rd^d$ is a topological space obtained by removing some disjoint rectangular prisms from a $d$-dimensional cuboid. 
    % \add{Further suppose that each hole is a prism with a multi-dimensional rectangle base}, and that the holes do not intersect each other. 
    Let $\beta_0, \beta_1, \cdots, \beta_d$ be the Betti numbers of $\Xc$. 
    Then for any $\varepsilon>0$, there exists a four-layer ReLU network $\Nc$ with the architecture 
    \begin{align} \label{eq: betti numbers}
        %  SIMPLEX VERSION
        % \ReLUFour{d\;} {\;\left(d+1+\sum_{k=0}^d (k+2)\beta_k\right)\;} {\;\left(\sum_{k=0}^d \beta_k\right)} {2} {1}
        % CUBOID VERSION \\
        \ReLUFour{d\;} {\;2\left(d-1+\sum_{k=0}^d (k+1)\beta_k\right)\;} {\;\left(\sum_{k=0}^d \beta_k\right)} {2} {1}
    \end{align}
    such that $\Nc(\Rd^d)=[0,1]$, $\Nc(\xb)=1\;$ for $\xb\in\Xc$, and $\Nc(\xb)=0\;$ for $\xb\not\in B_\varepsilon(\Xc)$.
\end{theorem}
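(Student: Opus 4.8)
The plan is to recognize that $\Xc$ is exactly the kind of set handled by Theorem~\ref{thm: compact} and then to translate its $(n_P,n_Q,l)$ bookkeeping into Betti numbers. Write $\Xc = P \setminus \bigcup_{j=1}^{n}\operatorname{int}(Q_j)$, where $P$ is the ambient $d$-cuboid and $Q_1,\dots,Q_n$ are the disjoint removed prisms. This is the difference of two unions of convex polytopes with $n_P=1$ (namely $P$) and $n_Q=n$, so Theorem~\ref{thm: compact} already produces a four-layer network of shape $\ReLUFour{d}{l}{(1+n)}{2}{1}$ meeting conditions A--C for $\indicator{\Xc}$. It remains to show that, after discarding redundant first-layer neurons, $1+n=\sum_{k}\beta_k$ and $l = 2\big(d-1+\sum_k (k+1)\beta_k\big)$.

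The key step is a topological lemma relating each prism's embedding to the Betti numbers. For each $j$ record, coordinate by coordinate, whether $Q_j$ is strictly interior to $P$ or spans $P$'s full extent; let $u_j$ be the number of interior directions and $t_j=d-u_j$ the number of spanning directions (I treat the generic hole-creating case in which no coordinate merely touches one face of $\partial P$ without spanning it). Splitting coordinates accordingly gives the product decomposition $P\setminus\operatorname{int}(Q_j) \cong \big([0,1]^{u_j}\setminus \operatorname{int}(B_j)\big)\times[0,1]^{t_j}$, where $B_j\subset[0,1]^{u_j}$ is the concentric sub-box obtained by projecting $Q_j$ onto its interior directions; since a box minus a concentric interior box is homotopy equivalent to $S^{u_j-1}$ and $[0,1]^{t_j}$ is contractible, $P\setminus\operatorname{int}(Q_j)\simeq S^{u_j-1}$. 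Because the $Q_j$ are disjoint, a Mayer--Vietoris argument (additivity of reduced homology under removal of disjoint convex bodies from a convex body) gives $\tilde H_m(\Xc)=\bigoplus_j \tilde H_m(S^{u_j-1})$, so that, writing $k_j:=u_j-1$ for the dimension of the hole created by $Q_j$, we get $\beta_k=\#\{j:k_j=k\}$ for $k\ge1$ and $\beta_0=1+\#\{j:k_j=0\}$. In particular $n=\sum_{k}\beta_k-1$, which yields the second hidden width $1+n=\sum_k\beta_k$.

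For the first hidden width I count facets. Each spanning direction of $Q_j$ contributes two facets lying in $\partial P$, and these are redundant: in the construction of Theorem~\ref{thm: compact} the output $\sigma(b-a)$ is forced to $0$ wherever $a=1$, i.e.\ outside $P$, so the subnetwork $\Tc_{Q_j}$ need only be correct on $P$, where the spanning-direction half-space constraints hold automatically. Dropping them (equivalently, extending $Q_j$ past $\partial P$ in its spanning directions so that $\widetilde Q_j\cap P = Q_j$) leaves $f_j=2u_j$ genuinely needed facets. Hence $l = 2d + \sum_j f_j = 2d + 2\sum_j u_j = 2d + 2\sum_j (k_j+1)$. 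Using the hole count, $\sum_j(k_j+1)=(\beta_0-1)+\sum_{k\ge1}(k+1)\beta_k=\sum_{k\ge0}(k+1)\beta_k-1$, so $l = 2\big(d-1+\sum_k (k+1)\beta_k\big)$, exactly matching \eqref{eq: betti numbers}; the indicator-approximation guarantee itself is inherited verbatim from Theorem~\ref{thm: compact}.

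The main obstacle is the topological lemma of the second paragraph: pinning down the homotopy type $S^{u_j-1}$ of a single prism removal, and more delicately justifying that the holes from distinct disjoint prisms contribute independently to homology (the additivity), including the boundary-spanning tunnels whose closures meet $\partial P$. Handling the ``touching'' configurations---prisms that meet $\partial P$ without spanning it, which add facets but create no topology and would inflate $l$ beyond the stated bound---also needs either an explicit normalization in the hypotheses or an argument that such prisms can be replaced by spanning or interior ones without changing $\indicator{\Xc}$; this is where I would spend the most care.
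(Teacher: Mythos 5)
Your proposal is correct and has the same skeleton as the paper's proof: set $n_P=1$, let the removed prisms be the $Q_j$, invoke Theorem~\ref{thm: compact}, and convert the counts $(n_Q,l)$ into Betti numbers. The genuine differences lie in how that conversion is justified, and there your write-up is more rigorous than the paper's. The paper argues in the reverse direction: it starts from the Betti numbers and asserts that each $k$-dimensional hole ($k\ge1$) ``is made by'' $2(k+1)$ hyperplanes beyond those of the ambient cuboid $P$, that each extra component costs one two-hyperplane cutting plate, and that disjoint holes are ``independently created''; none of this is proved. You argue from the prisms, which is what the hypothesis actually supplies: classify each $Q_j$ by its number $u_j$ of interior directions, prove $P\setminus\operatorname{int}(Q_j)\simeq S^{u_j-1}$ via the product decomposition, and obtain $\beta_k=\#\{j:u_j=k+1\}$ for $k\ge1$ (plus one for $\beta_0$) from Mayer--Vietoris additivity --- exactly the correspondence the paper takes for granted. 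On the first-layer count the two mechanisms differ but agree numerically: the paper treats each spanning-direction face of $Q_j$ as literally the same hyperplane (same affine function, same inward normal) as a face of $P$, so the first-layer neuron is shared and counted once in $l$, whereas you discard those faces by extending $Q_j$ past $\partial P$ and arguing correctness is only needed on $P$; both give $2u_j$ additional hyperplanes per prism, the paper's sharing being marginally more direct. Finally, your concern about prisms that touch $\partial P$ without spanning it is not excess caution but a real restriction the paper never states: such a notch (in $\Rd^2$, remove $[0.4,0.6]\times[0,0.5]$ from $[0,1]^2$) changes no Betti number, so \eqref{eq: betti numbers} permits only a single convex polytope with four faces ($n_P+n_Q=\sum_k\beta_k=1$, $l=4$), yet any convex set containing the notched square contains the whole square, so the covering required by Theorem~\ref{thm: compact} cannot exist and the paper's counting undercounts the hyperplanes such a prism needs. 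Hence the theorem's proof only goes through under the implicit normalization --- used silently in the paper's proof and in Example~\ref{exp: betti numbers} --- that each removed prism is, in every coordinate direction, either strictly interior to the cuboid or spans it fully; you are right that this must be added as a hypothesis (or argued away), and the paper does neither.
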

\begin{proof}[Proof Sketch] %\vspace{-0.2cm}
{
    Since each $k$-dimensional hole is a rectangular prism, we can consider it as convex polytopes. 
    The result is deduced from Theorem \ref{thm: compact} by computing the required number of faces of polytopes.}
    The full proof can be found in Appendix \ref{app: proofs}. 
\end{proof}

\begin{figure}[t]
    \centering
    \begin{subfigure}[b]{0.33\textwidth}
        \centering
        \includegraphics[width=\textwidth]{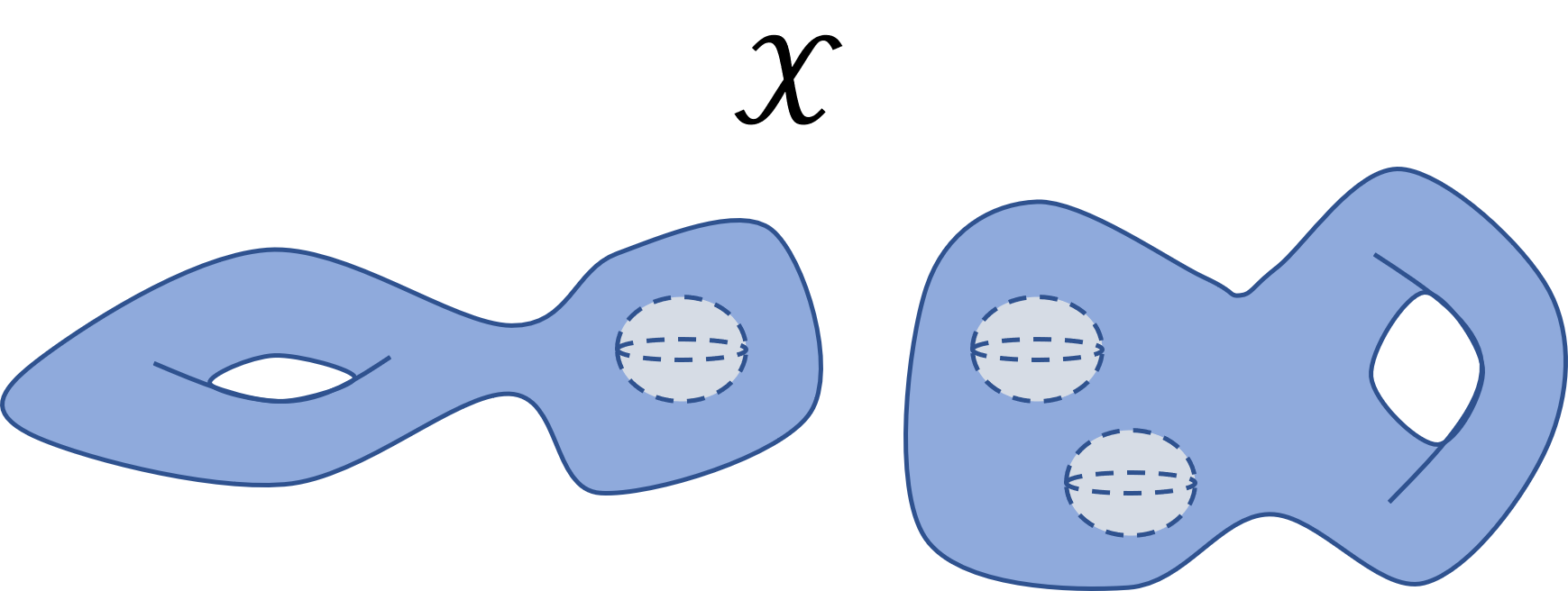}
        \caption{}
    \end{subfigure}
    \hfill
    \begin{subfigure}[b]{0.30\textwidth}
        \centering
        \includegraphics[width=\textwidth]{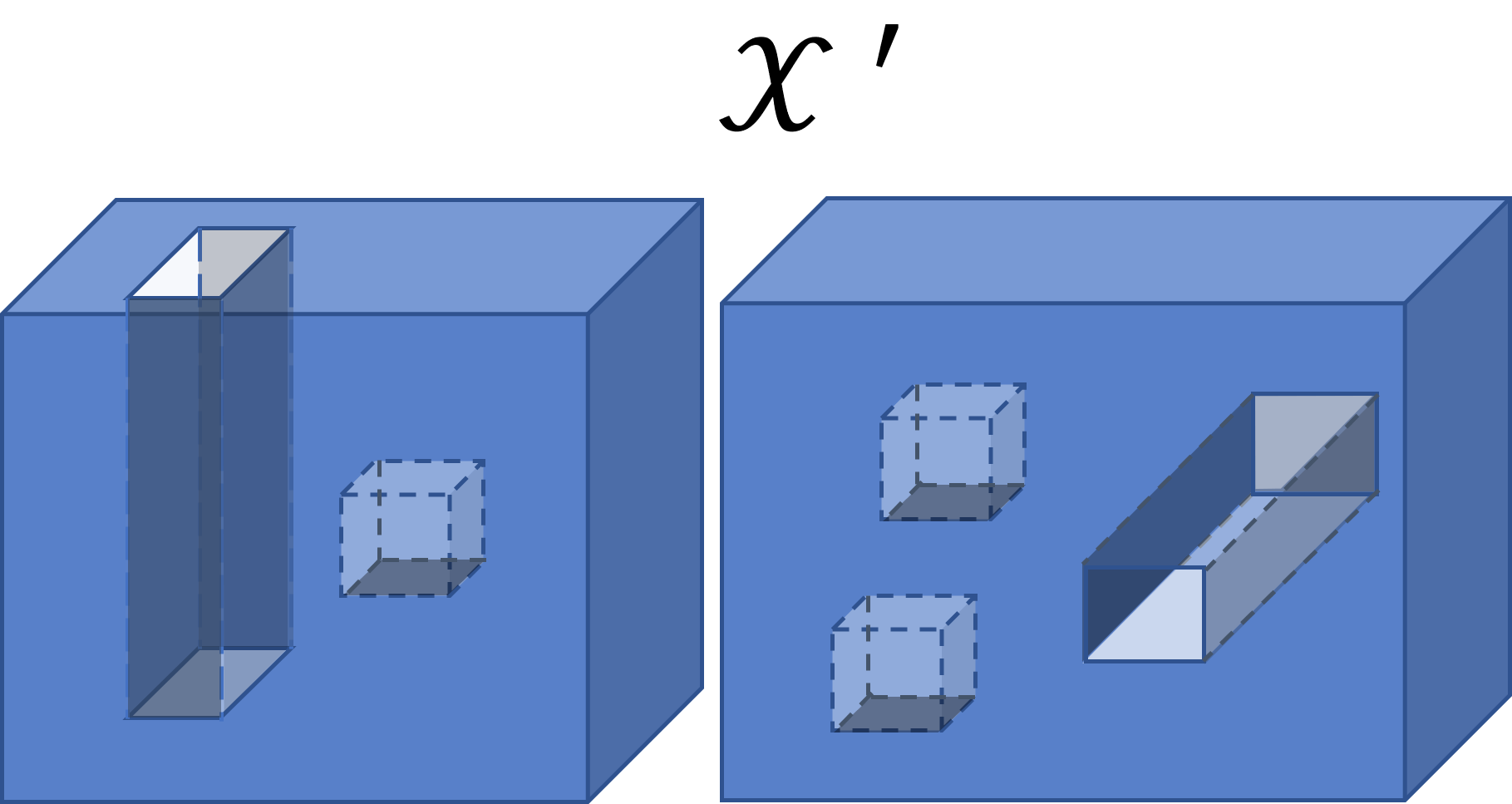}
        \caption{}
    \end{subfigure}
    \hfill
    \begin{subfigure}[b]{0.33\textwidth}
        \centering
        \includegraphics[width=\textwidth]{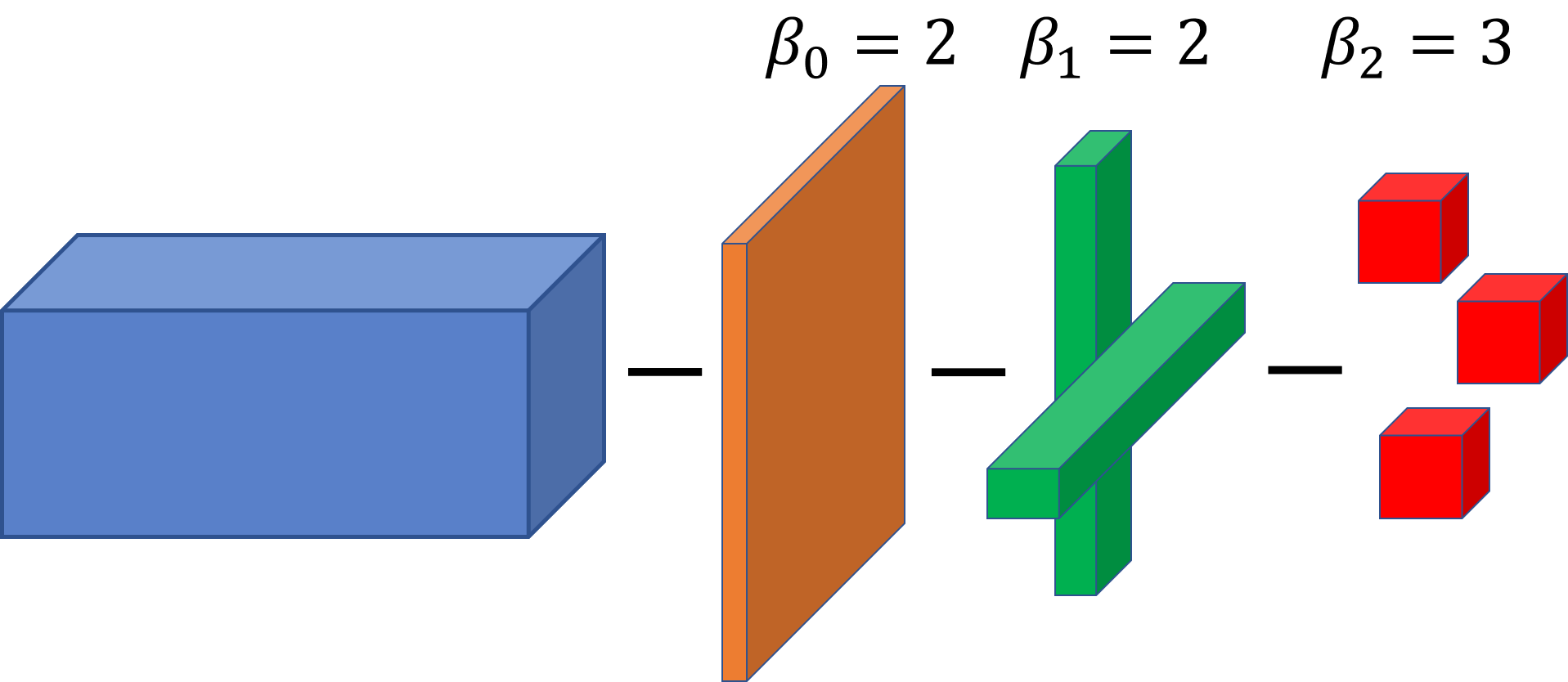}
        \caption{}
    \end{subfigure}
    \caption{An illustration of the correlation between Betti numbers and the network architecture (Example \ref{exp: betti numbers}).
    (a) A topological space $\Xc \subset \Rd^3$ is given, whose Betti numbers are $\beta_0=\beta_1=2$ and $\beta_2=3$.
    (b) A homotopy equivalent space $\Xc'$ is presented, obtained by removing several small cuboids from a larger one.
    (c) The removed cuboids from $\Xc'$ are shown.
    Theorem \ref{thm: betti numbers} demonstrates that a four-layer ReLU network with the architecture $\ReLUFour{3}{34}{7}{2}{1}$ can approximate $\indicator{\Xc'}$. \vspace{-0.6cm}
    }
    \label{fig: betti numbers}
\end{figure}
\begin{example} \label{exp: betti numbers}
    Let $\Xc$ be a topological space in $\Rd^3$ shown in Figure \ref{fig: betti numbers}(a),
    which has three nonzero Betti numbers $\beta_0=\beta_1=2$ and $\beta_2=3$. 
    Then we can consider the homotopy equivalent topological space $\Xc'\subset\Rd^3$ that satisfies the assumptions in Theorem \ref{thm: betti numbers}: 
    $\Xc'$ is obtained by `cutting out' a plate (orange), `punching' two rectangular prisms (green), and `hollowing out' three small cubes (red) from a large cuboid (blue) as described in Figure \ref{fig: betti numbers}(c).
    Then, Theorem \ref{thm: betti numbers} shows that a four-layer ReLU network with the architecture $\ReLUFour{3}{34}{7}{2}{1}$ can approximate $\indicator{\Xc'}$ arbitrarily closely.
\end{example}

Theorem \ref{thm: betti numbers} and Example \ref{exp: betti numbers} illustrate how the Betti numbers of a topological space $\Xc$ can contribute to defining upper bounds for the widths of neural networks. However, it's important to note that two homotopy equivalent spaces may require differing network architectures. 
In Proposition \ref{prop: non convex}, we prove that the indicator function over a crown-shaped topological space (Figure \ref{fig: non convex}(a)) in $\Rd^2$ cannot be approximated by a two-layer ReLU network with the architecture $\ReLUtwo{2}{3}{1}$, while a triangle can be approximated by Proposition \ref{prop: compact}. 
% For instance, a solid triangle and a crown in $\Rd^2$ are contractible spaces and have the same Betti numbers $\beta_0=1$, but required network architectures are different. 
% Indeed, we proved 
Since these two spaces have the same Betti numbers $\beta_0=1$, this example suggests that a neural network architecture cannot be solely determined by Betti numbers.

Nevertheless, Theorem \ref{thm: betti numbers} provides an upper bound on the widths of four-layer ReLU networks in terms of Betti numbers of $\Xc$, under the conditions stipulated. This is another novel result linking the topological characteristics of a dataset with upper bounds on widths. It is worth noting that similar results can be achieved when the cuboid assumptions in Theorem \ref{thm: betti numbers} are modified to other convex polytopes, using the same proof strategy.

We further elaborate on the topic of network architecture. Intriguingly, the sum of Betti numbers $\sum_{k=0}^d \beta_k$ that appears in the third layer in \eqref{eq: betti numbers} is termed the \emph{topological complexity} of $\Xc$. This quantity is recognized as a measure of the complexity of a given topological space \cite{naitzat2020topology}. This value has connections with other fields, for example, it has some lower and upper bounds from Morse theory \cite{milnor1963morse} and Gromov's Betti number Theorem \cite{gromov1981curvature}.
In the context of topological data analysis, consider a \u{C}ech complex constructed on a dataset $\Xc$ consisting of $n$ points, using a filtration parameter $\varepsilon$. Its topological complexity fluctuates from $n$ (when $\varepsilon=0$) to $1$ (when $\varepsilon>\text{diam}(\Xc)$) as $\varepsilon$ increases. This implies that the architecture in Theorem \ref{thm: betti numbers} is dictated by the filtration number $\varepsilon$, which controls the topological structure of the given dataset.
We believe this approach could inspire novel investigative methods in TDA, which we propose as an avenue for future research.

\section{Universal Approximation Property of Three-Layer ReLU Networks} \label{sec: regression}

In the preceding section, we demonstrated how indicator functions over certain topological spaces can be approximated by three-layer neural networks. Interestingly, this topological result has an application in proving the Universal Approximation Property (UAP) of three-layer ReLU networks. {Moreover}, we can derive upper bounds on the widths in three-layer ReLU networks for approximating Lipschitz functions. We present this result in the upcoming theorem.

\begin{theorem} \label{thm: regression}
    Let $d_x,d_y \in \Nd$ and $p\ge1$.
    Then, the set of three-layer ReLU networks is dense in $L^p(\Rd^{d_x}, [0,1]^{d_y})$.
    Furthermore, let $f : \Rd^{d_x} \rightarrow [0,1]^{d_y}$ be a compactly supported Lipschitz function.
    Then for any $\varepsilon>0$, there exists a three-layer ReLU network $\Nc$ with the architecture 
    \begin{align*}
        d_x \;\stackrel{\sigma}{\rightarrow}\;\ReLUtwo{(2n d_x d_y)\;}{\;(n d_y)\;}{d_y}
    \end{align*}
    such that $\norm{\Nc - f}_{L^p(\Rd^{d_x})}<\varepsilon$. 
    Here, $n= O(\varepsilon^{-d_x})$.
\end{theorem}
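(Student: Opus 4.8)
The plan is to reduce the vector-valued statement to $d_y$ independent scalar regression problems and then stack the resulting scalar networks in parallel (block-diagonally); this stacking is exactly what produces the factors of $d_y$ in the stated widths, since a scalar three-layer network $d_x\stackrel{\sigma}{\rightarrow}2nd_x\stackrel{\sigma}{\rightarrow}n\rightarrow1$ becomes $d_x\stackrel{\sigma}{\rightarrow}2nd_xd_y\stackrel{\sigma}{\rightarrow}nd_y\rightarrow d_y$ once $d_y$ copies are placed side by side. For each scalar target I would approximate it first by a compactly supported simple function adapted to a grid of disjoint axis-aligned cubes, and then realize that simple function by a pure-ReLU network built from Lemma \ref{lem: convex polytope}.

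For the density claim I would invoke the standard fact that compactly supported simple functions of the form $s=\sum_{i=1}^{n}c_i\indicator{C_i}$, with the $C_i$ disjoint axis-aligned cubes and $c_i\in[0,1]$, are dense in $L^p(\Rd^{d_x},[0,1])$ (truncation of values to $[0,1]$ only decreases the distance to a $[0,1]$-valued target, and finite unions of cubes approximate any measurable set in Lebesgue measure). It therefore suffices to approximate such an $s$ in $L^p$ by a three-layer ReLU network; combined with the scalar-to-vector stacking this yields the first assertion.

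The core construction is the replacement of the max-pooling layer of Proposition \ref{prop: compact} by a plain linear combination, which is legitimate precisely because the cubes are disjoint. Each cube $C_i$ is a convex polytope with $2d_x$ faces, so Lemma \ref{lem: convex polytope} supplies a two-layer ReLU network $\Tc_i$ of width $2d_x$ with $\Tc_i=1$ on $C_i$ and $\Tc_i<0$ outside a thin neighborhood $B_{\varepsilon'}(C_i)$; applying $\sigma$ produces a ``soft indicator'' $\sigma(\Tc_i)\in[0,1]$ equal to $1$ on $C_i$ and $0$ outside $B_{\varepsilon'}(C_i)$. The network $\Nc=\sum_{i=1}^n c_i\,\sigma(\Tc_i)$ is then a three-layer pure-ReLU network (ReLU on both hidden layers, linear output) of architecture $d_x\stackrel{\sigma}{\rightarrow}2nd_x\stackrel{\sigma}{\rightarrow}n\rightarrow1$, and it coincides with $s$ except on the set of points lying within distance $\varepsilon'$ of some cube face.

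The main obstacle, and the only genuinely analytic step, is controlling the error on these boundary shells, where neighboring soft indicators overlap and the linear combination may leave $[0,1]$. The key observations are that (i) the shell width $\varepsilon'$ is a free parameter, independent of the grid resolution, and (ii) at any point at most $2^{d_x}$ shells overlap, so the integrand $|\Nc-s|^p$ stays bounded by a constant depending only on $d_x$ and $p$, while the $\varepsilon'$-neighborhood of the grid skeleton has measure $O(d_x N\varepsilon')\to0$ as $\varepsilon'\to0$ for a fixed grid of resolution $N$. Hence for a fixed partition the boundary $L^p$-error can be driven below $\varepsilon/2$. For the Lipschitz refinement I would fix the grid first: partition a bounding box of $\mathrm{supp}\,f$ into $n=N^{d_x}$ cubes and set $c_i=f(\xb_i)$ for a sample point $\xb_i\in C_i$, so that Lipschitzness gives $\norm{f-s}_{L^p(\Rd^{d_x})}\le (L\sqrt{d_x}/N)\,|\mathrm{supp}\,f|^{1/p}$, which is below $\varepsilon/2$ once $N=O(\varepsilon^{-1})$, whence $n=N^{d_x}=O(\varepsilon^{-d_x})$. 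Choosing $\varepsilon'$ small as above and stacking $d_y$ scalar copies yields the architecture $d_x\stackrel{\sigma}{\rightarrow}(2nd_xd_y)\stackrel{\sigma}{\rightarrow}(nd_y)\rightarrow d_y$ with total error below $\varepsilon$, completing the proof.
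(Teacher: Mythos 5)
Your proposal is correct and follows essentially the same route as the paper's proof: cover the support by a grid of $n=O(\varepsilon^{-d_x})$ axis-aligned cubes, turn each cube into a soft indicator via Lemma \ref{lem: convex polytope} (width $2d_x$ each), form the Riemann-sum linear combination $\sum_i f(\xb_i)\sigma(\Tc_i)$ in place of max pooling, and stack $d_y$ scalar copies to get the stated widths. The only differences are cosmetic: the paper deduces density from compactly supported continuous functions rather than simple functions, and it fixes the shell width explicitly as $r=\tfrac{1}{2d_x}\tfrac{\delta^{p+1}}{1+\delta^p}$ instead of keeping it as a free parameter driven to zero with a bounded-overlap argument as you do.
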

\begin{proof}[Proof Sketch] 
 The first assertion is a consequence of the second one. Regarding the second assertion, note that $f^*$, being Lipschitz, is continuous and Riemann integrable. Consequently, we can construct a linear combination of indicator functions that approximates $f^*${, which are known as simple functions in Lebesgue theory \cite{rudin1976principles}}. Each of these indicator functions can be implemented by a three-layer ReLU network using Proposition \ref{prop: compact}. The complete proof can be found in Appendix \ref{app: proofs}.
\end{proof}

Theorem \ref{thm: regression} makes two assertions. The first one affirms that the set of three-layer ReLU networks is dense in $L^p(\Rd^{d_x}, [0,1]^{d_y})$, aligning with the findings presented in \cite{wang2022approximation}. Considering that we have also demonstrated that the set of two-layer ReLU networks cannot universally approximate a compactly supported function (as per Proposition \ref{prop: two-layer cannot}), we can conclude that the minimum depth of DNNs to achieve universal approximation in $L^p(\Rd^{d_x})$ is precisely 3. This conclusion refines the results shown in \cite{wang2022approximation}.
Moreover, while \cite{wang2022approximation} merely demonstrates the possibility of approximating functions using three-layer ReLU networks, the second assertion of Theorem \ref{thm: regression} provides upper bounds on the widths, which are $O(\varepsilon^{-d_x})$, when the function to be approximated, $f^*$, is Lipschitz and compactly supported. As far as we are aware, this is the first work to present an upper bound on the width of three-layer ReLU networks for UAP.

These findings open a potential path to extend our topological results. We expect that for certain classes of functions, the width bounds could be reduced by adding more layers to the network, as demonstrated in Theorem \ref{thm: compact} and Example \ref{exp: k gon}. However, we leave this as an area for future research.

%%%%%%%%%%%%%%%%%%%%%%%%%%%%%%%%%%%%%%%%%%%%%%%%%
\section{Experimental Results} \label{sec: experiment} 

In Section \ref{sec: main}, we introduced a construction for three or four-layer neural networks that can approximate $\indicator{\Xc}$ for any given topological space $\Xc$ with a sufficient degree of accuracy. Naturally, this leads us to the question of whether these networks can be produced using a gradient method. While most prior theoretical studies concerning the existence of neural networks have not included experimental verification \cite{huang2020relu, huang2022theoretical, ismailov2020three, park2020minimum}, the issue of experimental verification remains an important one, as noted in \cite{vardi2022width}. In this section, we will provide experimental evidence that gradient descent can indeed converge to the neural networks that we constructed in Section \ref{sec: main}.

We consider two illustrative manifolds, $\Xc_1$ and $\Xc_2$, depicted in Figure \ref{fig: experiments}(a) and (d) respectively. The first compact set $\Xc_1$ is a simplicial $2$-complex in $\Rd^2$ comprised of two triangles. The second compact set $\Xc_2$ is a hexagon with a pentagonal hole, which is not a simplicial complex but a compact manifold. The datasets are constructed by selecting 1600 equidistant lattice points in the domain $[-20,20]\times[-20,20] \subset \Rd^2${, where each point has} label `$1$' if it lies on the manifold $\Xc_i$, and `$0$' otherwise.
We undertake both regression and classification tasks, using mean square error (MSE) loss and binary cross entropy (BCE) loss respectively. For the BCE loss task, we adhere to the architecture proposed in Corollary \ref{cor: bce} to ensure trainability. We employ the gradient descent algorithm for training our networks. For a clearer visualization of weight vectors in each layer, we plot the lines of vanishing points for each layer in blue (1st layer), red (2nd layer), etc. The grayscale color denotes the output range of the trained network.
% The experimental results are presented in Figure \ref{fig: experiments}.

\begin{figure}[t]
    \centering
    \begin{subfigure}[b]{0.3\textwidth}
        \centering
        \includegraphics[width=\textwidth]{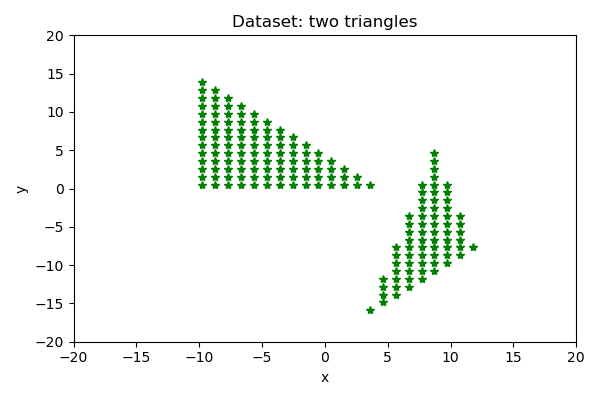}
        \caption{}
    \end{subfigure}
    \hfill
    \begin{subfigure}[b]{0.33\textwidth}
        \centering
        \includegraphics[width=\textwidth]{TT_MSE.png}
        \caption{}
    \end{subfigure}
    \hfill
    \begin{subfigure}[b]{0.33\textwidth}
        \centering
        \includegraphics[width=\textwidth]{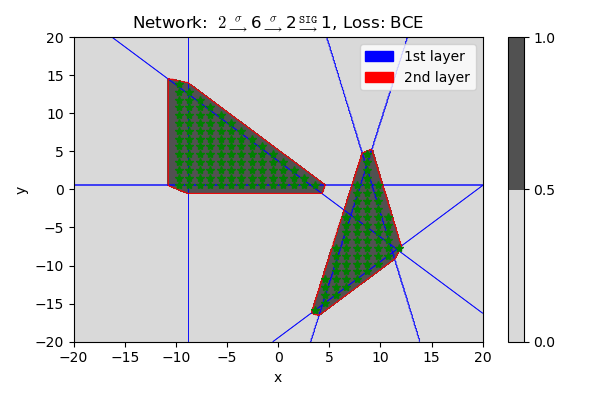}
        \caption{}
    \end{subfigure}
    \\
    \begin{subfigure}[b]{0.3\textwidth}
        \centering
        \includegraphics[width=\textwidth]{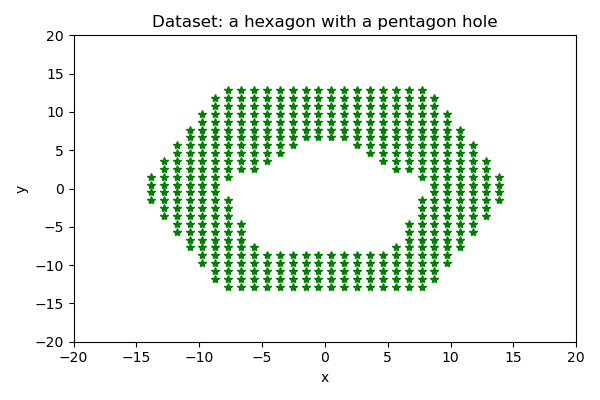}
        \caption{}
    \end{subfigure}
    \hfill
    \begin{subfigure}[b]{0.33\textwidth}
        \centering
        \includegraphics[width=\textwidth]{HEX_MSE.png}
        \caption{}
    \end{subfigure}
    \hfill
    \begin{subfigure}[b]{0.33\textwidth}
        \centering
        \includegraphics[width=\textwidth]{HEX_BCE.png}
        \caption{}
    \end{subfigure}
    \caption{Experimental verification of convergence of gradient descent.
    (a) and (d) exhibit the shape of two data manifolds, which are `two triangles' and `a hexagon with a pentagon hole'. 
    (b) and (e) show the converged networks by gradient descent, when the loss function is given by the mean square error (MSE) loss.
    Similarly, (e) and (f) show the results for the binary cross entropy (BCE) loss. 
    These results verify that gradient descent indeed converges to the networks we proposed in Section \ref{sec: main}.
    }
    \label{fig: experiments}
\end{figure}

% DESCRIPTION FOR EXPERIMENTS
For the first data manifold $\Xc_1$, Theorem \ref{thm: simplicial complex} suggests that a network with the architecture $\ReLUmax{2}{6}{2}$ can precisely represent $\indicator{\Xc_1}$. As depicted in Figure \ref{fig: experiments}(b), the trained network indeed converges exactly to the network proposed in the theorem under MSE loss. The weight vectors in the first layer encapsulate the two triangles, reflecting the topology of $\Xc_1$. A similar result can be observed for a network with the architecture $2\stackrel{\sigma}{\rightarrow} 6 \stackrel{\sigma}{\rightarrow} 2 \stackrel{\texttt{SIG}}{\rightarrow} 1$, as suggested by Corollary \ref{cor: bce}.
For the second data manifold $\Xc_2$, Theorem \ref{thm: compact} suggests that a network with the architecture $\ReLUFour{2}{11}{2}{2}{1}$ can fit this manifold, {and} Figure \ref{fig: experiments}(e) verifies this under MSE loss. The architecture $2\stackrel{\sigma}{\rightarrow} 11 \stackrel{\sigma}{\rightarrow} 2\stackrel{\texttt{SIG}}{\rightarrow} 1$ also converges to the network proposed in Corollary \ref{cor: bce}, as shown in Figure \ref{fig: experiments}(f). More specifically, {the eleven weight vectors in the first layer} align with the {eleven} boundaries of the outer hexagon and the inner pentagon, with two neurons in the second layer encapsulating each polygon.

These experimental results provide evidence that the networks proposed in Section \ref{sec: main} can indeed be reached as {the global} minima by gradient descent, under both MSE loss and BCE loss.
However, we must stress the importance of initialization. Although we observe successful convergence to the expected networks, the results are heavily dependent on the initialization. For instance, under random initialization, it is known that all neurons in a layer may `die' at initialization with a nonzero probability \cite{lu2019dying}, leading to a poor network performance. {We provide further experimental results concerning different initializations in Appendix \ref{app: experiment}.}
The key takeaway from this section is that \emph{the networks proposed in Section \ref{sec: main} can indeed be reached by gradient descent}, a fact not demonstrated by prior studies on UAP, even in the context of toy examples.

 %%%%%%
\section{Conclusions} \label{sec: conclusion}
While many previous studies have explored the universal approximation property of DNNs, they have largely overlooked the connection with the topological features of datasets. In this paper, we have addressed this gap by providing data topology-dependent upper bounds on the width of DNNs. 
% Section 3.2
We have shown that for a dataset $\Xc$ that can be tightly covered by convex polytopes, a three-layer network architecture can be derived to approximate the indicator function $\indicator{\Xc}$. We also extended this to four-layer ReLU networks for $\Xc$ that can be covered by the difference of two unions of convex polytopes. 
% Sectio 3.3
We further generalized this to simplicial $m$-complexes and demonstrated the construction of a three-layer neural network with ReLU activation and max pooling operation. Imposing further assumptions on $\Xc$, we proposed the architecture of four-layer ReLU networks whose widths are bounded in terms of Betti numbers, a novel result in this field. 
% Section 4
Finally, we demonstrated that to approximate a compactly supported {Lipschitz} function in $\Rd^{d_x}$ by a three-layer ReLU network under an $\varepsilon$ error, a width of $O(\varepsilon^{-d_x})$ is sufficient. 
% Section 5
Through our experiments, we showed that gradient descent indeed converges to the networks we proposed.

\paragraph{Limitations and Future Works}
There are several limitations. Firstly, although we experimentally showed convergence to the constructed network, we did not provide a rigorous guarantee for this convergence {here}. To overcome this and to ensure both existence and convergence, it would be valuable to prove a convergence result under specific conditions.
% For instance, we conjecture that when the initialization loss is sufficiently small, gradient descent can achieve the desired convergence.
Secondly, further research is needed to investigate the relationship between the network architecture and Betti numbers of the given topological spaces. The results in Theorem \ref{thm: betti numbers} suggest that it may be possible to expand our findings or reduce some assumptions on the given topological space. Since Betti numbers are commonly used features to characterize topological features of data manifolds in Topological Data Analysis (TDA), we believe that this line of research could yield significant insights.

% Reference
\newpage
\bibliographystyle{plain}

\newpage
\appendix
{\Large \textbf{Appendix}}

\section{Extension to Deep ReLU Networks and Cross Entropy Loss} \label{app: extension}
In Section \ref{sec: main}, we presented a variety of three and four-layer neural networks utilizing ReLU activation and max pooling operations. In this section, due to its unique structure, we extend our discussion to other neural network architectures. Particularly, we focus on networks that exclusively use ReLU activations, which are prevalent and effective in practical applications.

Interestingly, the principle outlined in Lemma \ref{lem: convex polytope} enables us to replace the $\texttt{MAX}$ operation in the final layer with an additional ReLU layer. This revelation lays the groundwork for the following corollary, which directly stems from Proposition \ref{prop: compact} and Theorem \ref{thm: simplicial complex}

\begin{corollary}[Four-layer ReLU networks] \label{cor: four layer ReLU networks}
    The three-layer neural network in Proposition \ref{prop: compact} or Theorem \ref{thm: simplicial complex} can be changed to a four-layer ReLU network with the architecture $\ReLUThree{d}{d_1}{d_2}{1}\rightarrow1$, where $d_1$ and $d_2$ are exhibited in the proposition or theorem.
\end{corollary}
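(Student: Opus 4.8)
The plan is to leave the first two ReLU layers of the network from Proposition~\ref{prop: compact} (or Theorem~\ref{thm: simplicial complex}) exactly as they are and to emulate the final $\texttt{MAX}$ pooling using two additional affine-plus-ReLU operations. Recall that in both constructions the second hidden layer outputs $d_2$ neurons $y_1,\dots,y_{d_2}$ with $y_i=\sigma(\Tc_i(\xb))$, each satisfying $y_i\in[0,1]$, $y_i=1$ on the $i$-th covering polytope (resp.\ simplex), and $y_i=0$ outside its $\varepsilon$-neighborhood; the three-layer network returns $\texttt{MAX}(y_1,\dots,y_{d_2})$. I would instead return
\[
    \Nc(\xb)\;=\;1-\sigma\!\left(1-\sum_{i=1}^{d_2}y_i\right),
\]
which is precisely the union (De Morgan dual) counterpart of the intersection principle behind Lemma~\ref{lem: convex polytope}, and coincides with the third-layer construction already used in the proof of Theorem~\ref{thm: compact} when there are no subtracted polytopes.

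First I would verify that this expression reproduces the defining conditions \textbf{A}, \textbf{B}, \textbf{C} rather than the value of $\max$ itself; this is the only place a genuine argument is required, since the two formulas disagree on the transition shells but we constrain the network only on $\Xc$ and off $B_\varepsilon(\Xc)$. If $\xb\in\Xc$ then some $y_i=1$, so $\sum_i y_i\ge 1$, the inner ReLU vanishes, and $\Nc(\xb)=1$. If $\xb\notin B_\varepsilon(\Xc)$ then every $y_i=0$, the inner ReLU equals $1$, and $\Nc(\xb)=0$. For the range, $\sum_i y_i\ge 0$ forces $\sigma(1-\sum_i y_i)\in[0,1]$, hence $\Nc(\xb)\in[0,1]$; surjectivity onto $[0,1]$ follows from continuity of $\Nc$ and the intermediate value theorem, as $\Nc$ attains both $0$ and $1$.

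Next I would count layers and widths. The affine map $1-\sum_i y_i$ followed by $\sigma$ is a single ReLU neuron that takes the $d_2$ second-layer units as input, serving as a third hidden layer of width $1$; the final $1-(\cdot)$ is an affine output neuron of width $1$, for which no output ReLU is needed since its argument already lies in $[0,1]$. The first two layers keep their widths $d_1,d_2$ from the cited statements, so the resulting network has architecture $\ReLUThree{d}{d_1}{d_2}{1}\rightarrow1$, as claimed.

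Once the replacement formula is fixed the remainder is bookkeeping, so I do not expect a serious obstacle. The step warranting the most care — and which I would write out fully — is checking that trading $\texttt{MAX}$ for the sum-based formula preserves \textbf{A}, \textbf{B}, \textbf{C} even though it alters the function on $B_\varepsilon(\Xc)\setminus\Xc$; in particular, overlaps among the covering regions (where several $y_i=1$ at once) must be harmless, which they are because the inner ReLU clips $1-\sum_i y_i$ at $0$.
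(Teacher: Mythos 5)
Your proposal is correct and takes essentially the same route as the paper: replace the final $\texttt{MAX}$ pooling by the single ReLU expression $1-\sigma(1-\sum_i y_i)$, keeping the first two layers intact. You are in fact more careful than the paper's own one-line proof, which asserts $\texttt{MAX}(a_1,\dots,a_k)=1-\sigma(1-a_1-\cdots-a_k)$ as though it were an identity on $[0,1]^k$ (it is not, e.g.\ for $a_1=a_2=\tfrac12$), whereas you correctly observe that the two expressions need only agree on $\Xc$ and off $B_\varepsilon(\Xc)$, which is exactly what conditions \textbf{A}, \textbf{B}, \textbf{C} demand.
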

\begin{proof}
    Consider the three-layer neural network proposed in Proposition \ref{prop: compact} or Theorem \ref{thm: simplicial complex} which has the architecture $\ReLUmax{d}{d_1}{d_2}$.
    Let $a_1, \cdots, a_{d_2}$ be the input of the last layer, thus the output of the second layer.
    Since $a_1, \cdots, a_{d_2} \in [0,1]$ from the construction, we get $\texttt{MAX}(a_1, \cdots, a_k) = 1 - \sigma(1-a_1-\cdots-a_k)$.
    This completes the proof.
\end{proof}

%%%%%%%%%%%% EXTENSION TO SIGMOID %%%%%%%%%%%%%%%
We now turn our attention to neural networks trained under binary cross entropy (BCE) loss. For a single pair of data $\xb$ and its corresponding label $y$, the BCE loss is defined as $\ell(\xb,y) := y\log(\Nc(\xb))+(1-y)\log(1-\Nc(\xb))$. Hence, during training, the output of the neural network must neither be zero nor exceed $1$. This requirement is the primary reason classifiers utilize the sigmoid activation function $\texttt{SIG}(x) := \frac{1}{1+e^{-x}}$.

In light of this, we extend our findings to accommodate a network that employs sigmoid activation in the final layer. This adjustment can be readily achieved using our previous results, as detailed in Corollary \ref{cor: four layer ReLU networks}. Notably, this expansion does not necessitate extra layers, unlike in Corollary \ref{cor: four layer ReLU networks}. The outcome is presented in the subsequent corollary.

\begin{corollary}[Cross entropy loss] \label{cor: bce}
    Let $\Nc$ be the neural network proposed in one of Proposition \ref{prop: compact}, Theorem \ref{thm: compact}, Theorem \ref{thm: simplicial complex}, or Theorem  \ref{thm: betti numbers}. Then, the last activation function of $\Nc$ can be replaced by $\texttt{SIG}$, with $\Nc$ satisfying
    \begin{align*}
        \Nc(\xb) &> 1-\delta                && \text{if } \xb \in \Xc, \\ 
        \Nc(\xb) &< \delta                && \text{if } \xb\not\in B_\varepsilon(\Xc)
    \end{align*}
    for any given $\delta>0$.
\end{corollary}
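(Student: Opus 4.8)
The plan is to leverage Corollary \ref{cor: four layer ReLU networks} implicitly by recalling that each network $\Nc$ from the cited results has a final layer of the form $\texttt{MAX}(a_1,\dots,a_{d_2})$ or, in the four-layer case, $\sigma(b-a)$, where every input to that final operation lies in $[0,1]$ by construction. The key observation is that replacing the terminal $\texttt{MAX}$ (or the final ReLU producing a $\{0,1\}$-valued output) by a sigmoid applied to a suitably scaled pre-activation will map the value $1$ to something arbitrarily close to $1$ and the value $0$ to something arbitrarily close to $0$. Concretely, I would first recall that in each of Proposition \ref{prop: compact}, Theorem \ref{thm: compact}, Theorem \ref{thm: simplicial complex}, and Theorem \ref{thm: betti numbers}, the constructed network already outputs exactly $1$ on $\Xc$ and exactly $0$ outside $B_\varepsilon(\Xc)$, and the penultimate representation can be written as a single affine pre-activation $z(\xb)$ feeding the final nonlinearity, with $z(\xb)$ taking a large positive value on $\Xc$ and a large negative value (or at least a value pushing the output to $0$) off $B_\varepsilon(\Xc)$.

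The main step is to insert a scaling factor $\lambda>0$ into this final pre-activation. Using the expression $\texttt{MAX}(a_1,\dots,a_k)=1-\sigma(1-a_1-\cdots-a_k)$ from the proof of Corollary \ref{cor: four layer ReLU networks}, I would replace the terminal ReLU-based output by $\texttt{SIG}(\lambda \cdot z(\xb))$, where $z(\xb)\ge c>0$ whenever $\xb\in\Xc$ and $z(\xb)\le -c<0$ whenever $\xb\notin B_\varepsilon(\Xc)$ for some construction-dependent margin $c$. Then for $\xb\in\Xc$ we have $\texttt{SIG}(\lambda z(\xb))\ge \texttt{SIG}(\lambda c)$, and for $\xb\notin B_\varepsilon(\Xc)$ we have $\texttt{SIG}(\lambda z(\xb))\le \texttt{SIG}(-\lambda c)=1-\texttt{SIG}(\lambda c)$. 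Given $\delta>0$, I would choose $\lambda$ large enough that $\texttt{SIG}(\lambda c)>1-\delta$, which is possible since $\texttt{SIG}(t)\to 1$ as $t\to\infty$; explicitly $\lambda > \frac{1}{c}\log\frac{1-\delta}{\delta}$ suffices. This single choice of $\lambda$ simultaneously forces $\Nc(\xb)>1-\delta$ on $\Xc$ and $\Nc(\xb)<\delta$ off $B_\varepsilon(\Xc)$, with no additional layers needed, since the scaling is absorbed into the weights of the final affine map.

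The main obstacle I anticipate is verifying uniformly across all four cited results that the final pre-activation genuinely admits a strictly positive separation margin $c$ of the claimed two-sided form, rather than merely producing outputs in $[0,1]$. In the three-layer $\texttt{MAX}$ constructions the terminal value is literally $1$ or $0$, so after rewriting via $1-\sigma(1-\sum a_i)$ the relevant argument $1-\sum a_i$ is bounded away from its threshold; but one must check the four-layer constructions (Theorem \ref{thm: compact} and Theorem \ref{thm: betti numbers}) where the output is $\sigma(b-a)$, confirming that $b-a$ is bounded below by a positive constant on $\Xc$ and above by a negative constant off $B_\varepsilon(\Xc)$. I would handle this by noting that Lemma \ref{lem: convex polytope} delivers strict inequalities ($\Tc_A(\xb)=1$ on $A$ versus $\Tc_A(\xb)<0$ off $B_\varepsilon(A)$), so the intermediate neurons $a$ and $b$ inherit a definite gap, and the difference $b-a$ is therefore separated from $0$; the uniform margin $c$ is then the minimum of finitely many such gaps. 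Once this separation is established, the sigmoid scaling argument closes the proof immediately.
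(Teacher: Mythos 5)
Your overall strategy is the same as the paper's: replace the final activation by $\texttt{SIG}(M\cdot z(\xb))$ for an affine pre-activation $z$ and a sufficiently large scale $M$, and your scaling computation (choosing $\lambda > \frac{1}{c}\log\frac{1-\delta}{\delta}$ once a two-sided margin $c$ is in hand) is fine. The genuine gap is in your verification that such a margin exists: for the pre-activations you propose it does not, and your appeal to the strict inequalities of Lemma \ref{lem: convex polytope} does not rescue it. Those strict inequalities ($\Tc_A(\xb)<0$ off $B_\varepsilon(A)$) hold for the pre-ReLU values, but the constructions in Proposition \ref{prop: compact} and Theorem \ref{thm: compact} then clip through $\sigma$, so the neurons feeding the last layer attain the \emph{exact} values $0$ and $1$ on large regions, and the "definite gap" evaporates. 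Concretely, in the $\texttt{MAX}$ case take $\xb\in\Xc$ lying in exactly one polytope $C_i$ and outside $B_\varepsilon(C_j)$ for all $j\neq i$ (e.g.\ cover $\Xc$ by a single polytope, $k=1$): then $\sum_i a_i = 1$ exactly, so $z=\sum_i a_i - 1 = 0$ and $\texttt{SIG}(\lambda z)=\frac12$, violating $\Nc(\xb)>1-\delta$ for any $\delta<\frac12$. Dually, in the four-layer case of Theorem \ref{thm: compact}, for $\xb$ far from all the $P_i$ and $Q_j$ one has $a=\sigma(1-\sum_i a_i)=1$ and $b=\sigma(1-\sum_j b_j)=1$, hence $b-a=0$ exactly; so $b-a$ is \emph{not} bounded above by a negative constant off $B_\varepsilon(\Xc)$, and $\texttt{SIG}(\lambda(b-a))=\frac12$ there, violating $\Nc(\xb)<\delta$.

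The missing idea is a bias shift that places the sigmoid threshold strictly between the two attained levels $0$ and $1$ of the pre-activation: take $z=\sum_i a_i - \frac12$ in the $\texttt{MAX}$ case (so $z\geq\frac12$ on $\Xc$ and $z=-\frac12$ off $B_\varepsilon(\Xc)$) and $z = b-a-\frac12$ in the four-layer case; with $c=\frac12$ your scaling argument then goes through verbatim. This shift is exactly what the paper's proof does for the ReLU case, replacing $\sigma(a-b)$ by $\texttt{SIG}(M(a-b-\frac12))$. (As a side remark, the paper's own $\texttt{MAX}$-case substitution $\texttt{SIG}(M(-1+a_1+\cdots+a_k))$ suffers from the very defect described above --- it outputs exactly $\frac12$ on points of $\Xc$ covered by a single polytope --- and should likewise read $\texttt{SIG}(M(-\frac12+a_1+\cdots+a_k))$; so your proposal reproduces the paper's oversight rather than its fix.)
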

\begin{proof}
    The proof is similar with the proof of Corollary \ref{cor: four layer ReLU networks}.
    If the last layer has $\texttt{MAX}$ activation, then for the inputs $a_1,a_2, \cdots, a_k$, replace $\texttt{MAX}(a_1, \cdots, a_k)$ to $\texttt{SIG}(M(-1+a_1+a_2+\cdots+a_k))$ with sufficiently large $M>0$. 
    If the last layer has ReLU activation, then for the inputs $a$ and $b$, replace $\sigma(a - b)$ to $\texttt{SIG}(M(a-b-\frac12))$ with sufficiently large $M>0$.
    It is easy to verify that these substitutions satisfy the desired property. 
\end{proof}

\section{Proofs of Propositions and Theorems} \label{app: proofs}

\begin{proof}[Proof of Proposition \ref{prop: two-layer cannot}]
    We prove by contradiction: suppose two-layer ReLU networks are dense in $L^p (\Rd^d)$. Then for a nonzero function $f^*$ and $\varepsilon>0$, there exists a nonzero two-layer ReLU network $f=\sum_{i=1}^k  v_i\sigma(\wb_i^\top\xb+b_i) + b_0$ such that $\norm{f-f^*}_{L^p(\Rd^d)} < \varepsilon$. Without loss of generality, we can assume that all $v_i, \wb_i$ are nonzero, i.e., $f$ has the minimal representation. Now, let $K$ be a compact (thus bounded) set that contains the support of $f^*$. Since $f$ is piecewise linear and defined on the unbounded domain $\Rd^d$, we can choose an unbounded partition $A\subset\Rd^d$ such that $f$ is linear in $A$ and $A \cap K^c$ is unbounded.
    By re-ordering of indices if needed, there exists $k_A\in\Nd$ such that  $\wb_i^\top\xb+b_i \ge 0$ if and only if $i \in [k_A]$.
    Then for ${\xb\in A}$, $f(\xb) = \sum_{i=1}^{k_A}v_i(\wb_i^\top\xb+b_i) + b_0$ and we get 
    \begin{align*}
        \norm{f-f^*}_{L^p(\Rd^d)}^p &= \int_{\Rd^d} |f-f^*|^p d\mu \\
        &\ge \int_{A\cap K^c} |f-f^*|^p d\mu \\
        &\ge \left(\inf_{\xb \in A\cap K^c}{|f(\xb)|^p}\right) \cdot \mu(A\cap K^c)
    \end{align*}
    where $\mu(\cdot)$ denotes the Lebesgue measure on $\Rd^d$. Since $K$ is compact and $A$ is unbounded, $\mu(A\cap K^c)=\infty$ conclude that $\inf\limits_{\xb \in A\cap K^c} |f(\xb)| = 0$ for $\xb \in A\cap K^c$.  Since $f(\xb) = \sum\limits_{i=1}^{k_A} v_i(\wb_i^\top\xb+b_i) + b_0$ is linear on $A$, we have 
    \begin{align} \label{eq: partition A}
        \sum_{i=1}^{k_A}v_i\wb_i = \zerob \qquad\text{and}\qquad  
        b_0+ \sum_{i=1}^{k_A}v_ib_i = 0.
    \end{align}
    Now consider the adjacent unbounded partition $B$ such that $f$ is linear in $B$ and $B\cap K^c$ is also unbounded (such partition $B$ can be chosen by a linear partition of $f$ in $A^c \cap K^c$). Through the exactly same arguments, we obtain the similar conclusion with \eqref{eq: partition A} on $B$.
    \begin{align} \label{eq: partition B}
        \sum_{i=1}^{k_B}v_i\wb_i=0 \qquad\text{and}\qquad  
        b_0 + \sum_{i=1}^{k_B}v_i b_i = 0.
    \end{align}
    However, since $B$ is the adjacent partition of $A$, exactly one neuron (call $v', \wb'$) is either activated or deactivated in $B$. Comparing \eqref{eq: partition A} and \eqref{eq: partition B}, we get either $v'=0$ or $\wb'=\zerob$, which contradicts to the minimality assumption. This completes the proof.
\end{proof}

\vspace{0.5cm}
\begin{proof}[Proof of Theorem \ref{thm: simplicial complex}]
    Let $X_1, X_2, \cdots, X_k$ be the $k$ facets of $\Xc$. For each facet $X_i$, we can construct a two-layer ReLU network $\Tc_i$ such that $\Tc_i(\xb)=1$ for $\xb\in X_i$ and $\Tc_i(\xb)<0$ $\xb\not\in B_\varepsilon(X_i)$ by Lemma \ref{lem: simplex}.
    % Note that here $X_i$ is covered by a $d$-simplex, and precisely, $X_i$ is a simplex of the covering $d$-simplex.
    Then Proposition \ref{prop: compact} gives a neural network $\Nc$ with the architecture $\ReLUmax{d}{d_1}{k}$ with $d_1 \le k(d+1)$, such that $\Nc$ can approximate $\indicator{\Xc}$ arbitrarily close.
    The remaining goal is to reduce the width of the first layer. 
    
    From the construction, recall that $d_1\le k(d+1)$ comes from the fact that each simplex $X_i$ is covered by a $d$-simplex which has $(d+1)$ hyperplanes. 
    Now consider two $m$-simplices in $\Rd^d$. If $2m+2 \le d+1$, then we can connect all points of the two $m$-simplices in $\Rd^d$, and it becomes a $(2m+2)$-simplex $\Delta^{2m+2}$. Now construct a $d$-simplex $\Delta^{d+1}$ by choosing $(d+1)-(2m+2)$ points in $B_\varepsilon(\Delta^{2m+2})$, whose base is this $(2m+2)$-simplex. Then, by adding two distinguishing hyperplanes at last, we totally consume $d+3$ hyperplanes to separate two $m$-simplices. 
    
    Now we apply this argument to each pair of two simplices. The above argument shows that two $m$-simplices separately covered by $2(d+1)$ hyperplanes can be re-covered by $(d+3)$ hyperplanes if $m\le\floor{\frac{d-1}{2}}$, which reduces $(d-1)$ number of hyperplanes.
    In other words, we can save $(d-1)$ hyperplanes for each pair of two $m$-simplices whenever $m\le\floor{\frac{d-1}{2}}$. This provides one improved upper bound of $d_1$:
    \begin{align} \label{eq: a bound of d1}
        d_1 \quad\le\quad k(d+1) - (d-1)\floor{\frac{1}{2} \sum_{j=0}^{\floor{\frac{d-1}{2}}} k_j}.
    \end{align}
    
    Now, we consider another pairing. For $0\le j \le m$, $\Xc$ has $k_j$ $j$-simplex facets. Recall that each $j$-simplex has $(j+1)$ points, and a $d$-simplex consists of $(d+1)$-points. Therefore, all points in $\floor{\frac{d+1}{j+1}}$ $j$-simplices can be contained in one $d$-simplex. In this case, these $j$-simplices can be covered by adding $\floor{\frac{d+1}{j+1}}$ hyperplanes more. Thus if we have $k_j$ $j$-simplices, then the required number of hyperplanes to separatedly encapsulate the $j$-simplices is less than or equal to
    \begin{align}
        \# \left(\text{the number of }d\text{-simplices}\right) &\cdot \# \left( \text{the required number of hyperplanes in each $d$-simplex} \right) \notag \\
        &=\left( \floor{\frac{k_j}{\floor{\frac{d+1}{j+1}}}}+1 \right) \cdot \left( d+1+\floor{\frac{d+1}{j+1}} \right) \notag \\
        &\le \left( k_j \frac{j+1}{d-j}+1\right) \cdot \left( d+1 + \frac{d+1}{j+1} \right) \notag \\
        &< (d+1) \left(\frac{j+2}{j+1}\right) \left( k_j \frac{j+1}{d-j}+1\right) \notag \\
        &= (d+1) \left( k_j \frac{j+2}{d-j} + \frac{j+2}{j+1} \right) \label{eq: however}
    \end{align}
    where the inequality is reduced from the property of the floor function: $ a-1 < \floor{a} \le a<\floor{a}+1$ for any $a\in\Rd$. Then another upper bound of $d_1$ is obtained by applying \eqref{eq: however} for all $j \le m$. However, also note that \eqref{eq: however} is greater than the known upper bound $k(d+1)$ if $j>\frac{d}{2}$ ; the sharing of covering simplex is impossible in this case. Therefore, the upper bound of $d_1$ is given by 
    \begin{align}
        d_1  \quad&\le (d+1) \sum_{j\le \frac{d}{2}} \left(  k_j \frac{j+2}{d-j} + \frac{j+2}{j+1} \right) + (d+1)\sum_{j>\frac{d}{2}} k_j \notag\\
        &=(d+1) \left[\sum_{j\le \frac{d}{2}} \left(  k_j \frac{j+2}{d-j} + \frac{j+2}{j+1} \right) + \sum_{j>\frac{d}{2}} k_j \right]
         \label{eq: another bound of d1}
    \end{align}
    
    Therefore, from \eqref{eq: a bound of d1} and \eqref{eq: another bound of d1}, we get
    \begin{align*}
        d_1 \le
        \min\left\{ k(d+1) - (d-1)\floor{\frac{1}{2} \sum_{j=0}^{\floor{\frac{d-1}{2}}} k_j} 
        ,\; 
        (d+1) \left[\sum_{j\le \frac{d}{2}} \left(  k_j \frac{j+2}{d-j} + \frac{j+2}{j+1} \right) + \sum_{j>\frac{d}{2}} k_j \right] \right\}.
    \end{align*}
\end{proof}

\begin{proof}[Proof of Theorem \ref{thm: betti numbers}]
    The proof is achieved by applying Theorem \ref{thm: compact}, where each removed rectangular prisms corresponds to a convex polytope $Q_j$. We proceed the proof by following the Betti numbers. Before we start, we set $P$ to the largest cuboid, which is enveloped by $2d$ hyperplanes.
    
    First, consider $\beta_0$, the number of connected components. To distinguish two connected components, we need $2$ hyperplanes for one cutting plane (Figure \ref{fig: betti numbers}(c)). This is directly generalized to $\beta_0$ connected components, which needs $\beta_0-1$ cutting planes. Therefore, the number of required hyperplanes is $2(\beta_0-1)$.
    
    Second, consider $\beta_1$. By the similar argument, these long rectangular prism has a rectangle base enveloped by $2\times 2$ additional hyperplanes, where other hyperplanes are shared from the large cuboid $P$. Then, the required number of hyperplanes is $2\cdot 2 \beta_1$.
    
    Now we generalize this argument. In the $d$-dimensional cuboid $P$, a $k$-dimensional hole (associated to the Betti number $\beta_k$) with $k\ge1$ is made by $2(k+1)$ additional hyperplanes and other hyperplanes of $P$.
    Since all `hole's are disjoint, each hole is independently created. Therefore, the total number of required hyperplane is
    \begin{align*}
        2d + 2(\beta_0-1) + 2\sum_{k=1}^d (k+1)\beta_k
        \quad=\quad 2\left(d - 1 + \sum_{k=0}^d (k+1)\beta_k \right).
    \end{align*}
    Clearly, from the construction, the number of removed polytopes is $\beta_0-1 + \sum_{k=1}^d \beta_k$, which corresponds to $n_Q$ in Theorem \ref{thm: compact}. Since $n_P=1$, we obtained the width of the third layer by $n_P+n_Q = \sum_{k=0}^d \beta_k$. 
    Therefore, Theorem \ref{thm: compact} guarantees that there exists a four-layer ReLU network with the architecture
    \begin{align*}
        \ReLUFour{2\;}{\;2\left(d - 1 + \sum_{k=0}^d (k+1)\beta_k \right)\;}{\;\left(\sum_{k=0}^d \beta_k\right)\;}{2}{1}
    \end{align*}
    that can approximate $\indicator{\Xc}$ arbitrarily close.
\end{proof}

\vspace{0.5cm}
\begin{proof}[Proof of Theorem \ref{thm: regression}]
    Fist we recall one result in real analysis: ``the set of compactly supported continuous functions is dense in $L^p(\Rd^{d_x})$ for $p \ge 1$ \cite[Theorem 3.14]{rudin1976principles}.''
    Therefore, it is enough to prove the second statement; ``any compactly supported Lipschitz function can be universally approximated by a three-layer ReLU network.''
    
    We consider $d_y=1$ case first. 
    Let $f: \Rd^{d_x} \rightarrow [0,1]$ be a Lipschitz function with the Lipschitz constant $L$. Without loss of generality, suppose the support of $f$ is contained in $[0,1]^{d_x}$.
    Let $\delta>0$ be a small unit fraction which will be determined later. We partition $[0,1]^{d_x}$ by regular $d_x$-dimensional cubes with length $\delta$. Now, consider estimating a definite integral by a Riemann sum over these cubes.
    The total number of cubes is $n:=(\frac1\delta)^{d_x}$, and we number the cube by $C_1, C_2, \cdots, C_n$. In each cube $C_i$, by Lemma \ref{lem: convex polytope}, we can define a two-layer ReLU network $\Tc_i$ with the architecture $\ReLUTwo{d_x}{2d_x}{1}$ such that $\Tc_i(\xb)=1$ for $\xb\in C_i$ and $\Tc_i(\xb)=0$ for $\xb \not\in B_{r}(C_i)$ with $r:=\frac{1}{2d_x} \frac{\delta^{p+1}}{1+\delta^p}$.
    Then for any $\xb_i \in C_i$, we get
    \begin{align*}
        \int_{B_{r}(C_i)} |f-f(\xb_i)\Tc_i|^p \; d\mu
        &= \int_{C_i} |f-f(\xb_i)\Tc_i|^p \; d\mu + \int_{B_{r}(C_i) \backslash C_i} |f-f(\xb_i)\Tc_i|^p \; d\mu \\
        &\le \int_{C_i} (\sqrt{d_x}L\delta)^p \; d\mu  + \int_{B_{r}(C_i) \backslash C_i} 1^p \; d\mu \\
        &\le (\sqrt{d_x}L\delta)^p \cdot \delta^{d_x} + \left[ (\delta+2r)^{d_x} - \delta^{d_x} \right] \\
        &=  (\sqrt{d_x}L)^p \cdot \delta^{d_x+p} + \left[\left(1+\frac{2r}{\delta}\right)^{d_x} -1\right] \delta^{d_x}  \\
        &< \left[ (\sqrt{d_x}L)^p + 1 \right] \delta^{d_x+p}. 
    \end{align*}
    Note that we use two inequalities, $|f(\xb) - f(\xb_i)| \le L\norm{\xb-\xb_i} \le L\sqrt{d_x}\delta$ for $\xb\in C_i$ and $(1+a)^k < \frac{1}{1-ak}$ for $0<a<\frac{1}{k}$. 
    Then, the above equation implies the $L^p$ distance between $f$ and $f(\xb_i)\Tc_i$ in $B_r(C_i)$ is bounded by the above value. 
    Now we define a three-layer neural network $\Nc$ by
    \begin{align*}
        \Nc(\xb) := \sum_{i=1}^n f(\xb_i) \Tc_i(\xb),
    \end{align*} 
    which is the Riemann sum over the $n$ cubes.
    Then $\Nc$ has the architecture ${d_x}\stackrel{\sigma}{\rightarrow}\ReLUtwo{(2n d_x)}{n}{1}$ and satisfies
    \begin{align*}
        \int_{\Rd^{d_x}} |f-\Nc|^p d\mu 
        &= \int_{B_r([0,1]^{d_x})} |f-\Nc|^p \; d\mu \\
        &< \sum_{i=1}^n \int_{B_r(C_i)} |f-f(\xb_i)\Tc_i|^p \; d\mu \\
        &\le \left[ (\sqrt{d_x}L)^p + 1 \right] n \delta^{d_x+p}. \\
        &= \left[ (\sqrt{d_x}L)^p + 1\right] \delta^p.
    \end{align*}
    Therefore, choose the unit fraction $\delta$ to satisfy $\delta <\varepsilon \left[1+(\sqrt{d_x}L)^p\right]^{-\frac1p}$ for the given $\varepsilon$, we conclude that $\norm{f-\Nc}_{L^p([0,1]^{d_x})} < \varepsilon$. 
    From this choice of $\delta$, we get
    \begin{align*}
        n = \delta^{-d_x} = O(\varepsilon^{-d_x}).
    \end{align*}
    Lastly, for $d_y>1$, we can obtain the desired network by concatenating $d_y$ networks. Therefore, the architecture of such neural network is 
    \begin{align*}
       {d_x}\stackrel{\sigma}{\rightarrow}\ReLUtwo{(2n d_x d_y)}{(n d_y)}{d_y}.
    \end{align*}
\end{proof}

%%%%%%%%%%%%%%%%
\section{Additional Propositions and Lemmas}

\begin{proposition} \label{prop: epsilon}
    Let $\Xc$ be a compact set in $\Rd^d$. For arbitrary $\delta>0$ and $p>0$, suppose there exists a function $f_\delta:\Rd^d\rightarrow\Rd$ such that
    $f_\delta(\Rd^d)=[0,1]$ and 
    \begin{align*}
        f_\delta(\xb) &= 1 \qquad \text{if } \xb \in \Xc, \\
        f_\delta(\xb) &= 0 \qquad \text{if } \xb \not\in B_{\delta}(\Xc).
    \end{align*}
    Then, for arbitrary $\varepsilon>0$, there exists a function $\Nc: \Rd^d\rightarrow\Rd$ such that
    \begin{align} \label{eq: epsilon}
       \norm{\Nc(\xb) - \indicator{\Xc}(\xb)}_{L^p(\Rd^d)} < \varepsilon. 
    \end{align}
\end{proposition}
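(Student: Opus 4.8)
The plan is to set $\Nc := f_\delta$ for a sufficiently small $\delta$ supplied by the hypothesis, and to control the $L^p$ error by the Lebesgue measure of the thin shell $B_\delta(\Xc)\setminus\Xc$. The crucial structural observation is that the integrand $f_\delta - \indicator{\Xc}$ vanishes identically on two of the three relevant regions: on $\Xc$ we have $f_\delta = 1 = \indicator{\Xc}$, and on $\Rd^d \setminus B_\delta(\Xc)$ we have $f_\delta = 0 = \indicator{\Xc}$. Hence the integrand is supported on the shell $B_\delta(\Xc)\setminus\Xc$, and on that shell $|f_\delta - \indicator{\Xc}| = |f_\delta| \le 1$, since $f_\delta(\Rd^d) = [0,1]$ while $\indicator{\Xc}=0$ there.

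First I would record the resulting bound,
\begin{align*}
\norm{f_\delta - \indicator{\Xc}}_{L^p(\Rd^d)}^p = \int_{B_\delta(\Xc)\setminus\Xc} |f_\delta - \indicator{\Xc}|^p \, d\mu \le \mu\big(B_\delta(\Xc)\setminus\Xc\big),
\end{align*}
so that it suffices to make the measure of the shell smaller than $\varepsilon^p$ by a suitable choice of $\delta$.

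Next I would argue that $\mu(B_\delta(\Xc)\setminus\Xc) \to 0$ as $\delta \downarrow 0$. Since $\Xc$ is compact it is closed, so the nested open neighborhoods satisfy $\bigcap_{\delta > 0} B_\delta(\Xc) = \overline{\Xc} = \Xc$; moreover, because $\Xc$ is bounded, each $B_\delta(\Xc)$ is bounded and hence of finite Lebesgue measure. Continuity of measure from above then yields $\lim_{\delta \downarrow 0} \mu(B_\delta(\Xc)) = \mu(\Xc)$, whence $\mu(B_\delta(\Xc)\setminus\Xc) = \mu(B_\delta(\Xc)) - \mu(\Xc) \to 0$. Choosing $\delta$ small enough that $\mu(B_\delta(\Xc)\setminus\Xc) < \varepsilon^p$ and invoking the hypothesis to obtain the corresponding $f_\delta$, setting $\Nc := f_\delta$ gives \eqref{eq: epsilon}.

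I expect the only step needing care to be the limit $\mu(B_\delta(\Xc)) \to \mu(\Xc)$, which uses compactness in two distinct ways: closedness, to identify the nested intersection with $\Xc$ itself rather than with a strictly larger set, and boundedness, to supply the finiteness of measure required for continuity from above (this fails for decreasing families of infinite-measure sets). The remaining steps are a direct estimate and an application of the hypothesis, so I would keep them brief.
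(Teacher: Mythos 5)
Your proposal is correct and follows essentially the same route as the paper's proof: take $\Nc := f_\delta$, observe that the error is supported on the shell $B_\delta(\Xc)\setminus\Xc$ where it is bounded by $1$, and choose $\delta$ so that $\mu(B_\delta(\Xc)\setminus\Xc) < \varepsilon^p$. Your justification of the limit $\mu(B_\delta(\Xc)\setminus\Xc) \to 0$ via nested intersections and continuity of measure from above is in fact more careful than the paper's one-line assertion that this limit equals $\mu(\overline{\Xc}\setminus\Xc) = 0$, but it is the same argument in substance.
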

\begin{proof}
    Let $\mu$ be the Lebesgue measure in $\Rd^d$. 
    First note that 
    \begin{align*}
        \lim\limits_{\delta\rightarrow0^+} \mu(B_{\delta}(\Xc)-\Xc) = \mu(\bar{\Xc}\backslash\Xc) = 0.
    \end{align*}
    Therefore, for a given $\varepsilon$, there exists $\delta>0$ such that 
    \begin{align*}
        \mu\left( B_\delta(\Xc) - \Xc \right) < \varepsilon^p .
    \end{align*}
    From the assumption, for such $\delta$, there exists a function $f_\delta: \Rd^d\rightarrow\Rd$ that satisfies $f_\delta(\Rd^d)=[0,1]$,  $f_\delta(\xb)=1$ for $\xb\in\Xc$, and $f_\delta(\xb)=0$ if $\xb\not\in B_{\delta}(\Xc)$.
    Now, define $\Nc:=f_\delta$. Then, 
    \begin{align*}
        \norm{\Nc(\xb) - \indicator{\Xc}(\xb)}_{L^p(\Rd^d)}^p
        &= \int_{\Rd^d} |\Nc(\xb) - \indicator{\Xc}(\xb)|^p \;d\mu \\
        &= \int_{B_\delta(\Xc)} |\Nc(\xb) - \indicator{\Xc}(\xb)|^p \;d\mu \\
        &= \int_{B_\delta(\Xc)\backslash\Xc} |\Nc(\xb) - \indicator{\Xc}(\xb)|^p \;d\mu \\
        &\le 1^p \cdot \mu \left( {B_\delta(\Xc)\backslash\Xc} \right)\\
        &< \varepsilon^p.
    \end{align*}
    Therefore, $\Nc$ is the desired function satisfying \eqref{eq: epsilon}.
\end{proof}

\begin{lemma} \label{lem: convex polytope}
    Let $C \subset \Rd^d$ be a closed convex polytope enclosed by $k$ hyperplanes, which has finite positive Lebesgue measure in $\Rd^d$. 
    Then, for any $\varepsilon>0$, there exists a two-layer ReLU network $\Tc$ with the architecture $\ReLUtwo{d}{k}{1}$ such that
    \begin{align*}
        \Tc(\xb) &= 1 \qquad \text{if } \xb\in C, \\
        \Tc(\xb) &< 1 \qquad \text{if } \xb\in B_\varepsilon(C) - C. \\
        \Tc(\xb) &< 0 \qquad \text{if } \xb\not\in B_\varepsilon(C).
    \end{align*}
\end{lemma}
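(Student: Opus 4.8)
The plan is to realize $\Tc$ as an affine combination of $k$ ReLU hidden units, one per bounding hyperplane, and then tune a single global scale so that all three inequalities hold. First I would write the polytope as an intersection of half-spaces with unit normals: $C = \bigcap_{i=1}^k \{\xb : \wb_i^\top\xb + b_i \le 0\}$ with $\norm{\wb_i}_2 = 1$, so that $\wb_i^\top\xb + b_i$ is the signed distance of $\xb$ to the $i$-th hyperplane, positive exactly when constraint $i$ is violated. I would take the $k$ hidden neurons $s_i(\xb) := \sigma(\wb_i^\top\xb + b_i)$ and set the output to
\[
  \Tc(\xb) := 1 - \lambda\sum_{i=1}^k s_i(\xb),
\]
for a constant $\lambda>0$ to be fixed last. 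This is precisely a $\ReLUtwo{d}{k}{1}$ network. The first two conditions are then immediate: on $C$ every $s_i = 0$, so $\Tc\equiv 1$; and if $\xb\in B_\varepsilon(C) - C$ then $\xb\notin C$ forces at least one violated constraint, i.e. some $s_i(\xb)>0$, whence $\Tc(\xb)<1$.

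The entire content lies in the third condition, $\Tc(\xb)<0$ for $\xb\notin B_\varepsilon(C)$, which amounts to $\sum_i s_i(\xb) > 1/\lambda$ whenever $\mathrm{dist}(\xb,C)\ge\varepsilon$. So I would set $g(\xb) := \sum_i s_i(\xb)$ and $m := \inf\{g(\xb) : \xb\notin B_\varepsilon(C)\}$; it suffices to prove $m>0$ and then choose any $\lambda > 1/m$. Now $g$ is continuous, nonnegative, and vanishes exactly on $C$, hence is strictly positive on the closed set $S := \Rd^d\setminus B_\varepsilon(C)$, which is disjoint from $C$. Since $S$ is unbounded, positivity alone does not give $m>0$, and the crucial ingredient will be coercivity of $g$.

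Here I would use that $C$ has finite positive Lebesgue measure, hence is full-dimensional and bounded. Boundedness is equivalent to the recession cone $\{\db : \wb_i^\top\db\le 0 \ \forall i\}$ being $\{\zerob\}$, i.e. $\max_i \wb_i^\top\db > 0$ for every unit $\db$; by compactness of the unit sphere this yields $\kappa := \min_{\norm{\db}_2=1}\max_i \wb_i^\top\db > 0$. Using that $\sigma$ is nondecreasing (so $\max_i\sigma(z_i)=\sigma(\max_i z_i)$), I get $g(\xb)\ge \sigma\!\left(\kappa\norm{\xb}_2 - \max_i|b_i|\right)\to\infty$ as $\norm{\xb}_2\to\infty$. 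Coercivity lets me pick $R$ with $g>1$ outside the ball of radius $R$; on the compact set $S\cap\{\norm{\xb}_2\le R\}$ the continuous positive function $g$ attains a positive minimum, so $m>0$, which finishes the argument.

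The main obstacle is exactly this coercivity/positivity step for points far from $C$: the individual violations $s_i$ can each be arbitrarily small even when $\xb$ is far from $C$ (for instance near a sharp corner), so one cannot simply lower-bound $g$ pointwise by $\mathrm{dist}(\xb,C)$ without a global argument. Boundedness of $C$ is what rescues this globally. Equivalently, one could invoke Hoffman's error bound for polyhedra to obtain $\mathrm{dist}(\xb,C)\le\theta\,g(\xb)$ and then take $\lambda>\theta/\varepsilon$, but the coercivity argument above keeps the proof elementary and self-contained.
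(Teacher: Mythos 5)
Your proof is correct, and it takes a genuinely different route from the paper's. The paper orients the unit normals \emph{inward} and chooses non-uniform output weights proportional to the $(d-1)$-volumes of the corresponding faces $A_i$ of $C$, so that the cone-volume identity $\mu_d(C)=\frac{1}{d}\sum_{i=1}^k(\wb_i^\top\xb+b_i)\,\mu_{d-1}(A_i)$ makes the un-clipped linear combination identically constant; inside $C$ all neurons are active and this identity forces $\Tc\equiv 1$, outside $C$ the missing negative terms force $\Tc<1$, and a large scale $M$ then pushes $\Tc$ below $0$ off $B_\varepsilon(C)$. You instead orient the normals \emph{outward}, use uniform weights $\lambda$, get $\Tc\equiv 1$ on $C$ trivially, and concentrate all the work in showing $m:=\inf\{g(\xb):\xb\notin B_\varepsilon(C)\}>0$ via coercivity, derived from the recession-cone characterization of boundedness of $C$. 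Notably, your handling of the far field is more careful than the paper's: the paper asserts that, since $D:=\overline{\left(B_{\varepsilon/2}(C)\right)^c}$ is closed and the relevant piecewise-linear function is strictly negative on $D$, it attains a finite negative maximum there --- which is not a valid inference for a continuous function on an unbounded closed set (the supremum could be $0$); repairing it requires exactly the kind of global argument you give (coercivity, or equivalently concavity of the network output together with boundedness of $C$). So your version is more elementary (no volume identity needed) and closes a gap the paper glosses over, while the paper's choice of weights buys an explicit, geometrically meaningful parameterization of the same $\ReLUtwo{d}{k}{1}$ architecture in terms of the face measures of $C$.
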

\begin{figure}[t]
    \centering
    \begin{subfigure}[b]{0.45\textwidth}
        \centering
        \includegraphics[width=\textwidth]{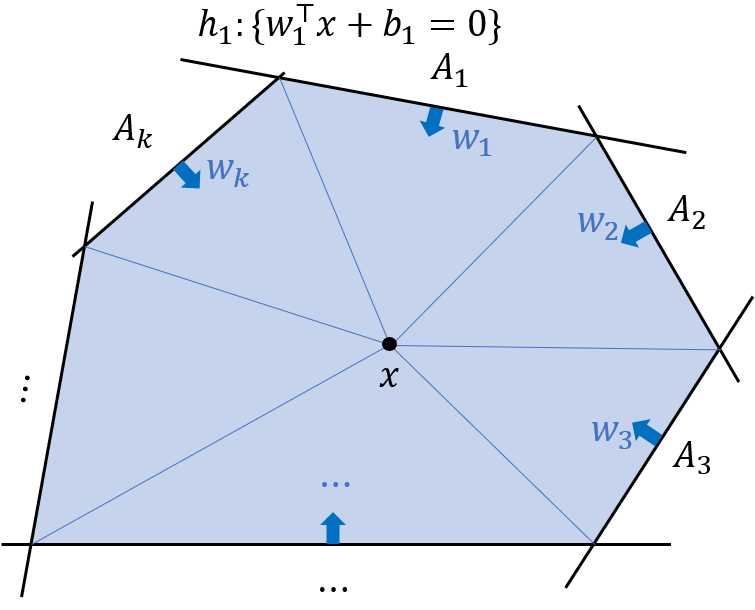}
        \caption{}
    \end{subfigure}
    \hfill
    \begin{subfigure}[b]{0.45\textwidth}
        \centering
        \includegraphics[width=\textwidth]{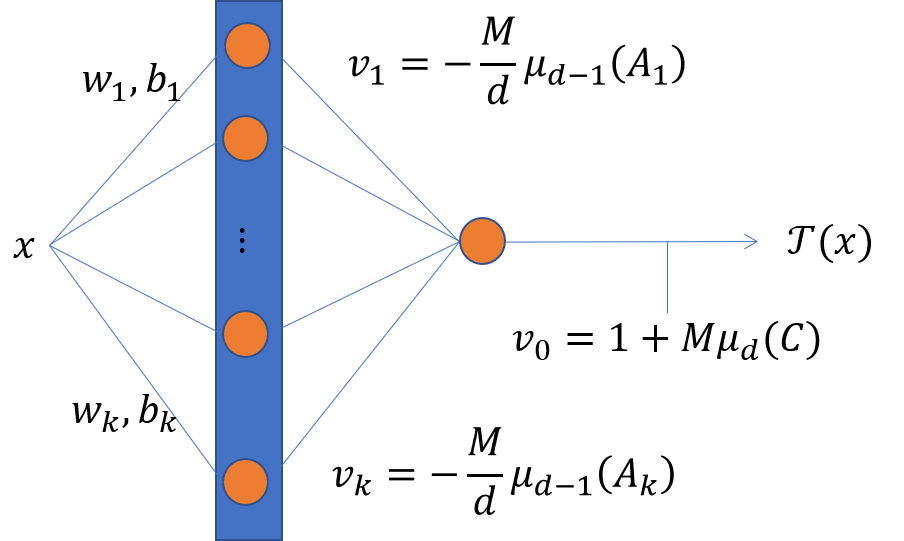}
        \caption{}
    \end{subfigure}
    \caption{Proof of Lemma \ref{lem: convex polytope}. 
    (a) A closed convex polytope $C\subset \Rd^d$ enclosed by $k$ hyperplanes. The volume of $C$ is the sum of volume of $k$ cones.
    (b) The architecture of the desired two-layer ReLU network $\Tc$: $\ReLUtwo{d}{k}{1}$.}
    \label{fig: convex polytope}
\end{figure}
\begin{proof}
    Let $h_1,\cdots,h_k$ be the $k$ hyperplanes enclosing $C$. Let $\wb_i$ be the unit normal vector of the $i$-th hyperplane $h_i$ oriented inside $C$. Then the equation of the $i$-th hyperplane $h_i$ is given by $h_i : \{\xb ~|~\wb_i^\top\xb+b_i=0\}$ for some $b_i\in\Rd$. Let $A_i$ be the intersection of the hyperplane $h_i$ and $C$, which is a face of the polytope $C$. 
    Let $\xb$ be a point in $C$. Since $\wb_i$ is a unit normal vector, $\wb_i^\top\xb+b_i$ refers the distance between the hyperplane $h_i$ and the point $\xb$. Therefore, the $d$-dimensional Lebesgue measure of $C$ is computed by
    \begin{align} \label{eq: volume}
        \mu_d(C) = \frac{1}{d} \sum_{i=1}^k  (\wb_i^\top\xb+b_i) \cdot\mu_{d-1}(A_i)
    \end{align}
    where $\mu_{d-1}$ and $\mu_{d}$ refer the $(d-1)$ and $d$-dimensional Lebesgue measures, respectively. Note that \eqref{eq: volume} comes from the volume formula of a cone.
    Then LHS of \eqref{eq: volume} is constant, which does not depend on the choice of $\xb\in\Rd^d$. Now, we define a two-layer ReLU network $\Tc$ with the architecture $\ReLUtwo{d}{k}{1}$ by
    \begin{align} \label{eq: polytope}
        \Tc(\xb) := 1 + M\left( \mu_d(C) -\sum_{i=1}^k \frac{1}{d} \mu_{d-1}(A_i) \cdot \sigma(\wb_i^\top\xb + b_i) \right)
    \end{align}
    where $M>0$ is a constant would be determined later. 
    % We display an example of function $\Tc(\xb)$ in Figure \ref{fig: convex polytope}(c), when $C$ is a triangle.
    Note that we have $\Tc(\xb) = 1$ for $\xb \in C$ from the construction. 
    It is worth noting that the equation \eqref{eq: volume} also holds for $\xb\not\in C$. For $\xb\not\in C$, \eqref{eq: polytope} deduces
    \begin{align*}
        \Tc(\xb) &= 1 + M\left(\mu_d(C) - \sum_{i=1}^k \frac{1}{d} \mu_{d-1}(A_i) \cdot \sigma(\wb_i^\top\xb+b_i)\right) \\
        &= 1 + M\left(\mu_d(C) - \sum_{i=1}^k \frac{1}{d} \mu_{d-1}(A_i) \cdot (\wb_i^\top\xb+b_i) 
        + \sum_{\{i~:~\wb_i^\top\xb + b_i < 0\}} \frac{1}{d} \mu_{d-1}(A_i) \cdot (\wb_i^\top\xb+b_i)\right) \\
        &= 1 + M \sum_{\{i~:~\wb_i^\top\xb + b_i < 0\}} \frac{1}{d} \mu_{d-1}(A_i) \cdot (\wb_i^\top\xb+b_i) \\
        &<1 .
    \end{align*}
    Therefore, we conclude
    \begin{align*}
        \Tc(\xb) &= 1 \qquad \text{if } \xb\in C, \\
        \Tc(\xb) &< 1 \qquad \text{otherwise}.
    \end{align*}
    Lastly, we determine the constant $M$ in $\Tc$ to satisfy the remained property. For the given $\varepsilon>0$, consider the closure of complement of the $\frac{\varepsilon}{2}$-neighborhood of $C$;  $D:=\overline{ \left(B_{{\varepsilon}/{2}}(C)\right)^c}$. Then the previsous result shows that
    \begin{align} \label{eq: u}
        \frac{1}{M}(\Tc(\xb) - 1) = \mu_d(C) - \sum_{i=1}^k \frac{1}{d} \mu_{d-1}(A_i) \cdot \sigma(\wb_i^\top\xb+b_i)
    \end{align}
    is bounded above by $0$. Furthermore, \eqref{eq: u} is continuous piecewise linear, and has the maximum $0$ if and only if $\xb\in C$. Since $D$ is closed and \eqref{eq: u} is strictly bounded above by $0$ on $D$, \eqref{eq: u} has the finite maximum $m<0$ on $D$.
    \begin{align*}
        \frac{1}{M}(\Tc(\xb) - 1) \le m < 0 \qquad \text{for } \xb \in D.
    \end{align*}
    
    Now, choose $M$ to satisfy $M>-\frac{1}{m}$. Then if $\xb \not\in B_{\varepsilon}(C)$, we have $\xb \in D$, thus
    \begin{align*}
        \Tc(\xb) &= 1 + M\left(\mu_d(C) - \sum_{i=1}^k \frac{1}{d} \mu_{d-1}(A_i) \cdot \sigma(\wb_i^\top\xb+b_i)\right) \\
        &\le 1+ M \cdot m \\
        &< 0.
    \end{align*}
    Therefore, we have constructed a two-layer ReLU network $\Tc$ with the structure $\ReLUtwo{d}{k}{1}$ such that
    \begin{align*}
        \Tc(\xb) &= 1 \qquad \text{if } \xb\in C, \\
        \Tc(\xb) &< 1 \qquad \text{if } \xb\in C^c. \\
        \Tc(\xb) &< 0 \qquad \text{if } \xb\not\in B_\varepsilon(C).
    \end{align*}
    which completes the proof.
\end{proof}

\begin{lemma} \label{lem: simplex}
    Let $0\le m \le d$ be integers, and $\Delta^m$ be an $m$-simplex in $\Rd^d$. For a given $\varepsilon>0$, there exists a two-layer ReLU network $\Tc: \Rd^d \rightarrow \Rd$ with the architecture $\ReLUtwo{d}{(d+1)}{1}$ such that
    \begin{align*}
        \Tc(\xb) &= 1       \qquad \text{if } \xb \in \Delta^m, \\
        \Tc(\xb) &\le 1     \qquad \text{if } \xb \in B_{\varepsilon}(\Delta^m), \\
        \Tc(\xb) &< 0       \qquad \text{if } \xb \not\in B_\varepsilon(\Delta^m).
    \end{align*}
    Furthermore, the minimal width of such two-layer ReLU networks with the architecture $\ReLUtwo{d}{d_1}{1}$ is exactly $d_1=d+1$.
\end{lemma}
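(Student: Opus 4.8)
The plan is to split the statement into the existence of a width-$(d+1)$ network and the matching lower bound, and to obtain the existence part essentially for free from Lemma~\ref{lem: convex polytope} by enclosing the (possibly degenerate) $m$-simplex inside a full-dimensional $d$-simplex. Concretely, let $H$ be the $m$-dimensional affine hull of $\Delta^m$, let $\{\ub_1,\dots,\ub_{d-m}\}$ be an orthonormal basis of the orthogonal complement of the linear part of $H$, and let $\cb$ be the centroid of $\Delta^m$. For a small $\delta>0$ I would adjoin the $d-m$ new vertices $\cb+\delta\ub_j$ to the vertices of $\Delta^m$; their affine independence from $\Delta^m$ makes the convex hull a genuine $d$-simplex $\Delta^d$ of positive Lebesgue measure, having $\Delta^m$ as a face, so that $\Delta^m\subset\Delta^d$. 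Writing a point of $\Delta^d$ as a convex combination and separating the $\Delta^m$-part from the $\delta\ub_j$-part shows every point of $\Delta^d$ lies within $\delta$ of $\Delta^m$, so choosing $\delta<\varepsilon/2$ yields $\Delta^m\subset\Delta^d\subset B_{\varepsilon/2}(\Delta^m)$.

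Since a $d$-simplex is a convex polytope bounded by exactly $d+1$ hyperplanes, Lemma~\ref{lem: convex polytope} applied to $\Delta^d$ with neighborhood parameter $\varepsilon/2$ produces a two-layer ReLU network $\Tc$ with architecture $\ReLUtwo{d}{(d+1)}{1}$ satisfying $\Tc=1$ on $\Delta^d$, $\Tc\le1$ everywhere, and $\Tc<0$ off $B_{\varepsilon/2}(\Delta^d)$. The three required properties then follow immediately: $\Tc=1$ on $\Delta^m\subset\Delta^d$; $\Tc\le1$ in particular on $B_\varepsilon(\Delta^m)$; and since $B_{\varepsilon/2}(\Delta^d)\subset B_\varepsilon(\Delta^m)$ by the triangle inequality, any $\xb\notin B_\varepsilon(\Delta^m)$ satisfies $\xb\notin B_{\varepsilon/2}(\Delta^d)$ and hence $\Tc(\xb)<0$.

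For the lower bound I would show that any width-$n$ network $\Tc(\xb)=\sum_{i=1}^n c_i\sigma(\wb_i^\top\xb+b_i)+b_0$ meeting the specification must have $n\ge d+1$. The specification forces the superlevel set $U:=\{\xb:\Tc(\xb)\ge0\}$ to be nonempty, since it contains $\Delta^m$ where $\Tc=1$, and bounded, since $\Tc<0$ off $B_\varepsilon(\Delta^m)$ gives $U\subseteq\overline{B_\varepsilon(\Delta^m)}$. Suppose $n\le d$. If the normals $\{\wb_i\}$ fail to span $\Rd^d$, pick $\vb_0\neq\zerob$ orthogonal to all of them; then $\Tc$ is constant along $\vb_0$, so the whole line through any point of $\Delta^m$ lies in $U$, contradicting boundedness. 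Hence $n=d$ and $\{\wb_i\}$ is a basis; the $d$ hyperplanes then meet in a single point $\xb^*$, and translating to $\xb^*$ lets me write $\Tc(\xb)=\phi(\xb)+b_0$ with $\phi(\xb)=\sum_i c_i\sigma(\wb_i^\top\xb)$ positively homogeneous of degree one.

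The crux is then a short homogeneity argument, which I expect to be the main obstacle since it is where the dimension count ``$d$ hyperplanes cannot enclose a bounded region'' is cashed out. Along any ray $t\vb$ we have $\Tc(t\vb)=t\phi(\vb)+b_0$, so if $\phi(\vb)>0$ for some $\vb$ then $\Tc(t\vb)\to+\infty$ and $U$ is unbounded; hence $\phi\le0$ everywhere. Because $\Tc=1>0$ on $\Delta^m$ while $\phi\le0$, the constant must satisfy $b_0>0$. But on the simplicial cone $K:=\{\xb:\wb_i^\top\xb\le0\ \forall i\}$, which is nontrivial because $\{\wb_i\}$ is a basis, every ReLU argument is nonpositive, so $\phi\equiv0$ and $\Tc\equiv b_0>0$ on $K$; as $K$ contains rays, $U\supseteq K$ is unbounded, a contradiction. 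Therefore $n\ge d+1$, matching the construction, so the minimal width is exactly $d+1$; the existence half and the reduction to the basis case are routine by comparison.
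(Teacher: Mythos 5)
Your proposal is correct and follows essentially the same route as the paper: existence by enclosing $\Delta^m$ in a $d$-simplex inside $B_\varepsilon(\Delta^m)$ and invoking Lemma~\ref{lem: convex polytope}, and minimality by first showing the weight vectors must span $\Rd^d$ (hence form a basis when $d_1\le d$) and then deriving a contradiction from unbounded regions on which the network's sign is forced. The only difference is bookkeeping in the last step: you translate to the common intersection point of the $d$ hyperplanes and argue via positive homogeneity, reaching the contradiction in the all-inactive cone, whereas the paper keeps the biases explicit, uses that cone only to force $v_0<0$, and finds its contradiction in an unbounded region where a single positive-weight neuron is active --- the same idea in slightly different clothing.
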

\begin{proof}
    We prove the existence part first.
    For the given $m$-simplex $\Delta^m$, pick $(d-m)$ distinct points in $B_\varepsilon(\Delta^m)$. By connecting all these points with the points of $\Delta^m$, we obtain a $d$-simplex contained in $B_\varepsilon(\Delta^m)$, which is a convex polytope.
    By Lemma \ref{lem: convex polytope}, there exists a neural network $\Tc:\Rd^d \rightarrow \Rd$ with the architecture $\ReLUtwo{d}{d_1}{1}$ that satisfies the desired properties.
    % \begin{align*}
    %     \Tc(\xb) &= 1       \qquad \text{if } \xb \in \Delta^m, \\
    %     \Tc(\xb) &\le 1     \qquad \text{if } \xb \in B_{\varepsilon}(\Delta^m), \\
    %     \Tc(\xb) &< 0       \qquad \text{if } \xb \not\in B_\varepsilon(\Delta^m).
    % \end{align*}
    
    Now, we prove the minimality part. For every $\varepsilon>0$, suppose there exists a two-layer ReLU network $\Tc(\xb) := \sum_{i=1}^{d_1} v_i \sigma( \wb_i^\top \xb+b_i) + v_0$ with $d_1 \le d$ such that $\Tc(\xb)=1$ for $\xb\in\Delta^m$ and  $\Tc(\xb)<0$ for $\xb \not\in B_\varepsilon(\Delta^m)$.
    First, we claim that the set of weight vectors $\{\wb_1, \cdots, \wb_{d_1}\}$ spans $\Rd^d$. If the set cannot span $\Rd^d$, then there exists a nonzero vector $\ub \in \Rd^d - \text{span}<\wb_1, \cdots, \wb_{d_1}>$. Then, from $\Tc(\xb)=1$ for $\xb\in \Delta^m$, we get
    \begin{align*}
        \Tc(\xb+t\ub) &= \sum_{i=1}^{d_1} v_i \sigma(\wb_i^\top(\xb+t\ub)+b_i) + v_0 \\
        &= \sum_{i=1}^{d_1} v_i \sigma(\wb_i^\top\xb+b_i) + v_0 \\
        &= \Tc(\xb) \\
        &=1
    \end{align*}
    for any $t\in\Rd$. This contradicts to the condition $\Tc(\xb)<0$ for $\xb \not \in B_\varepsilon(\Delta^m)$. Therefore, the set of weight vectors must span $\Rd^d$.

    From the above claim, we further deduce that $d_1 \ge d$. Since we start with the assumption $d_1 \le d$, thus $d_1=d$. Then, we conclude that the set of weight vectors $\{\wb_1, \cdots, \wb_{d_1}\}$ is a basis of $\Rd^d$.
    Now, we focus on the sign of $v_0$. Suppose $v_0 \ge 0$. Define 
    \begin{align*}
        A:= \bigcap_{i=1}^{d_1} \{\xb ~|~ \wb_i^\top\xb+b_i <0 \},
    \end{align*}
    which is an unbounded set since the set $\{\wb_i\}$ is linearly independent. Then for $\xb\in A$, we get $\Tc(\xb)=v_0 \ge 0$. This contradicts to the assumption $\Tc(\xb)<0$ for all $\xb \not \in B_\varepsilon(\Delta^m)$. 
    Therefore, $v_0<0$. 
    
    Lastly, we consider the sign of $v_i$. Since $\Tc(\xb)=1>0$ for $\xb\in \Delta^m$ and $v_0<0$, there exists some positive $v_i>0$, say, $v_1>0$.
    % From the above arguments, we have proved that $v_0<0$ and $\{\wb_i\}$ forms a basis of $\Rd^d$. 
    Similar to the above argument, we define
    \begin{align*}
        B:= \left\{\xb ~|~ v_1\wb_1^\top\xb+b_1 +v_0 >0 \right\}\; \bigcap_{i=2}^{d_1} \left\{\xb ~|~ \wb_i^\top\xb+b_i <0 \right\},
    \end{align*}
    which is also nonempty and unbounded. Then, for $\xb\in B$, we have 
    \begin{align*} 
        \Tc(\xb) &= \sum_{i=1}^{d_1} v_i \sigma(\wb_i^\top\xb+b_i)+v_0 \\
        &= v_1 \wb_1^\top\xb+b_1 + v_0 \\
        &>0.
    \end{align*}
    Since $B$ is unbounded, this implies that $\Tc(\xb)>0$ over the unbounded subset in $\Rd^d$, which contradicts to the condition $\Tc(\xb)<0$ for all $\xb\not\in B_\varepsilon(\Delta^m)$. This completes the whole proof, which shows that the minimum width of two-layer ReLU network is exactly $d+1$.
\end{proof}

\begin{figure}[t]
    \centering
    \begin{subfigure}[b]{0.36\textwidth}
        \centering
        \includegraphics[width=\textwidth]{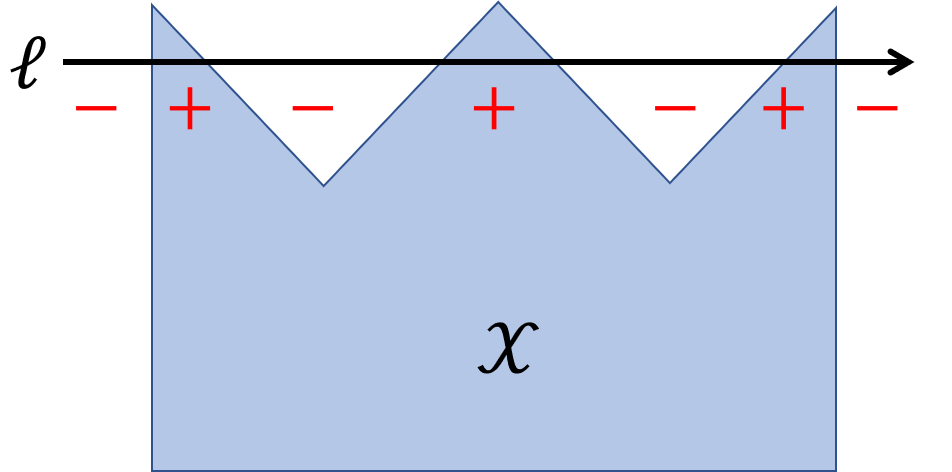}
        \caption{}
    \end{subfigure}
    \hfill
    \begin{subfigure}[b]{0.36\textwidth}
        \centering
        \includegraphics[width=\textwidth]{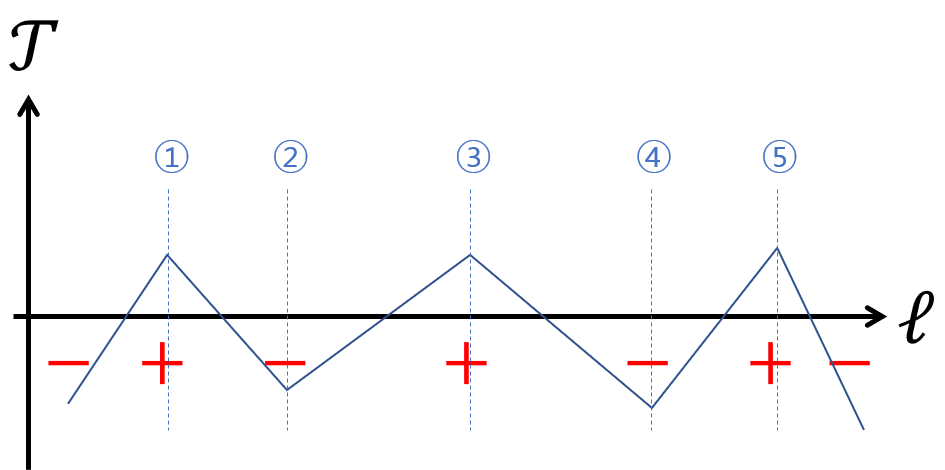}
        \caption{}
    \end{subfigure}
    \hfill
    \begin{subfigure}[b]{0.2\textwidth}
        \centering
        \includegraphics[width=\textwidth]{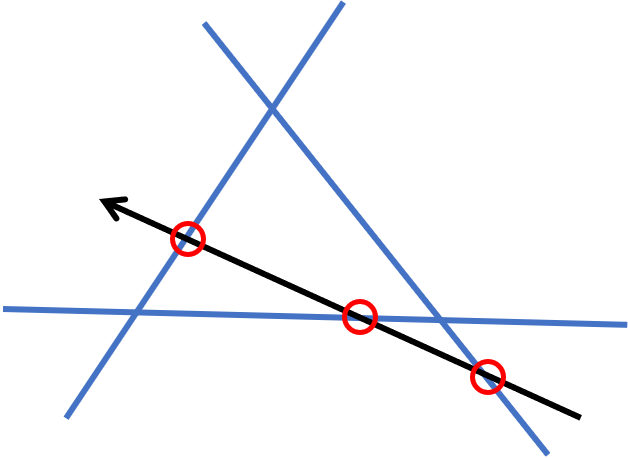}
        \caption{}
    \end{subfigure}
    \caption{Proof of Proposition \ref{prop: non convex}.
    (a) $\Xc$ is a crown-shaped topological space in $\Rd^2$. A straight line $\ell$ through the top of the crown. 
    (b) The pre-activation $\Tc(\xb)$ along the line $\ell$ must experience six sign changes, which needs at least five non-differentiable points.
    (c) However, any straight line in $\Rd^2$ can meet at most three non-differentiable points of $\Tc(\xb)$.}
    \label{fig: non convex}
\end{figure}
\begin{proposition} \label{prop: non convex}
    Let $\Xc$ be a crown-shaped topological space in $\Rd^2$, as shown in Figure \ref{fig: non convex}(a). Then, for some $\varepsilon>0$, there is no two-layer ReLU network $\Nc$ with the architecture $\ReLUTwo{2}{3}{1}$ such that
    \begin{align} \label{eq: universal}
        \norm{\Nc(\xb) - \indicator{\Xc}(\xb)}_{L^p(\Rd^2)} < \varepsilon.
    \end{align}
    % In other words, there exists a contractible topological space that a network with the architecture $2 \stackrel{\sigma}{\rightarrow} 3 \stackrel{\sigma}{\rightarrow} 1$ cannot approximate.
\end{proposition}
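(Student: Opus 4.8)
The plan is to expose the single scalar pre-activation behind the output ReLU and reduce everything to a one-dimensional sign-change count. Write the candidate network as $\Nc(\xb)=\sigma(\Tc(\xb))$, where $\Tc(\xb)=v_0+\sum_{j=1}^{3} v_j\,\sigma(\wb_j^\top\xb+b_j)$ is the pre-activation feeding the last ReLU. Then $\Tc$ is continuous and piecewise linear, and its non-differentiable set is contained in the union of the three lines $L_j=\{\xb:\wb_j^\top\xb+b_j=0\}$. The geometric fact that drives the whole argument (and is exactly Figure~\ref{fig: non convex}(c)) is that a straight line in $\Rd^2$ meets each $L_j$ in at most one point, so for all but finitely many horizontal lines the restriction of $\Tc$ — and of any vertical shift $\Tc-\theta$ — to such a line is a piecewise linear function of one variable with at most $3$ breakpoints, hence at most $4$ linear pieces, hence at most $4$ sign changes.

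Next I would fix the geometry. I place a thin bounded rectangle $D$ whose horizontal slices $\ell_y$ all pass through the tips of the three teeth of the crown, so that along each $\ell_y$ the indicator $\indicator{\Xc}$ realizes the pattern $0,1,0,1,0,1,0$ on seven consecutive sub-intervals (two exterior pieces, three teeth, two notches), each of length bounded below by a uniform constant $c>0$ as $y$ ranges over the strip. This is precisely where the crown's three points matter: three ``on'' intervals, separated and flanked by ``off'' intervals, are what will demand six sign changes as in Figure~\ref{fig: non convex}(b). Working inside the bounded $D$ also sidesteps any integrability issue at infinity, since $\norm{\Nc-\indicator{\Xc}}_{L^p(D)}\le\norm{\Nc-\indicator{\Xc}}_{L^p(\Rd^2)}$.

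The core step introduces a threshold $\theta=\tfrac12$ and a small $\eta\in(0,\tfrac12)$, and argues by contradiction. Suppose $\norm{\Nc-\indicator{\Xc}}_{L^p(\Rd^2)}<\varepsilon$. By Fubini the slice error $g(y):=\int_{\ell_y\cap D}|\Nc-\indicator{\Xc}|^p\,dt$ satisfies $\int g(y)\,dy\le\varepsilon^p$, so averaging over the strip of height $h$ produces a slice $\ell_{y^*}$ with $g(y^*)\le\varepsilon^p/h$; if $\varepsilon<(h\,c\,\eta^p)^{1/p}=:\varepsilon_0$ this is below $c\,\eta^p$. Then on each of the seven sub-intervals the integrated error is below $c\,\eta^p$, so each contains a point where $|\Nc-\indicator{\Xc}|<\eta$. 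On a tooth this forces $\Nc>1-\eta>0$, whence $\Tc=\Nc>1-\eta>\theta$; on a notch or exterior piece it forces $\sigma(\Tc)=\Nc<\eta$, whence $\Tc<\eta<\theta$. Reading these seven witnesses in order along $\ell_{y^*}$, the function $\Tc-\theta$ takes the alternating signs $-,+,-,+,-,+,-$ and therefore changes sign at least six times, contradicting the bound of four from the first paragraph. Hence no network of architecture $\ReLUTwo{2}{3}{1}$ can achieve error below the fixed $\varepsilon_0$, which is the assertion.

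The main obstacle, and the reason the naive ``$\Tc$ must cross zero six times'' picture needs care, is upgrading a statement about a single measure-zero line into a genuine $L^p$ lower bound, while staying robust to the possibility that $\Tc$ merely dips to a small positive value inside a notch (producing no zero-crossing at all, so that the literal sign count of $\Tc$ could be as low as two). Both difficulties are dissolved by the threshold device: comparing $\Tc$ to $\theta=\tfrac12$ rather than to $0$ turns ``$\Nc$ is nearly $0$ on a notch'' into a bona fide sign change of $\Tc-\theta$, and the Fubini/averaging step manufactures one good slice on which all seven local witnesses coexist. The remaining bookkeeping is routine: verifying the uniform interval length $c>0$ from the crown's geometry, and discarding the at most finitely many heights $y$ at which $\ell_y$ is parallel to (or coincides with) one of the three kink lines, so that almost every slice indeed meets the kink set in at most three points.
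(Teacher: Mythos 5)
Your proof is correct and follows essentially the same route as the paper's: restrict the pre-activation $\Tc$ to a line crossing the three teeth, and play the six required sign alternations against the at most three kinks (hence four affine pieces, hence four sign changes) that three ReLU neurons can produce along any line. Your write-up is in fact tighter than the paper's own proof, which passes from the $L^p$ bound to pointwise control and asserts $\Tc \le 0$ off $B_\varepsilon(\Xc)$ without justification; your Fubini good-slice selection and the threshold comparison with $\theta = \tfrac12$ (rather than with $0$) close exactly those two gaps and also yield the explicit $\varepsilon_0$ that the statement "for some $\varepsilon>0$" calls for.
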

\begin{proof}
    We use proof by contradiction. For every $\varepsilon>0$, suppose there exists a two-layer ReLU network $\Nc(\xb) = \sigma(\sum_{i=1}^{3}v_i\sigma(\wb_i^\top\xb+b_i) +v_0)$ that satisfies \eqref{eq: universal}. 
    Let $\Tc(\xb) := \sum_{i=1}^{k}v_i\sigma(\wb_i^\top\xb+b_i) +v_0$ be the pre-activation of the $\Nc(\xb)$. 
    % Note that $\Tc:\Rd^2 \rightarrow \Rd$ is piecewise linear. 
    
    Now, consider a straight line $\ell$ that through the sawtooth part of $\Xc$, as described in \ref{fig: non convex}(a). Since $\Tc(\xb)$ is piecewise linear, it is still piecewise linear on the line $\ell$. However, since $\Nc(\xb)$ is a sufficiently good approximator of $\indicator{\Xc}$, we have $|\Nc(\xb) - \indicator{\Xc}|$ is small, which implies $\Tc(\xb) \le 0$ for $\xb \not\in \B_\varepsilon(\Xc)$ and $\Tc(\xb)>0$ for $\xb \in \Nc(\xb)$. Therefore, $\Tc(\xb)$ on the line $\ell$ must have alternative negative and positive values as shown in \ref{fig: non convex}(a). More precisely, $\Tc(\xb)|_{\xb\in\ell}$ has four negative parts and three positive parts. Since it is piecewise linear, to change the sign six times, it requires at least five flipping points as shown in Figure \ref{fig: non convex}(b). In other words, $\Tc(\xb)$ has at least five non-differentiable points.
    However, $\Tc(\xb)$ can have at most three non-differentiable points on any straight line because it has only three ReLU neurons (Figure \ref{fig: non convex}(c)). This contradiction proves that there is no two-layer ReLU network $\Nc$ with the architecture $\ReLUTwo{2}{3}{1}$ that can approximate $\indicator{\Xc}$ arbitrarily close.
\end{proof}

\vspace{0.5cm}
\begin{remark} \label{rmk: Betti numbers}
    Proposition \ref{prop: compact} and Proposition \ref{prop: non convex} demonstrate that the minimum widths of neural networks can vary for two topologically equivalent spaces. In $\Rd^2$, Proposition \ref{prop: compact} illustrates that the indicator function on a solid triangle can be approximated by a network with the architecture $\ReLUTwo{2}{3}{1}$. However, Proposition \ref{prop: non convex} shows that the indicator function on a crown-shaped topological space cannot be approximated by a neural network with the architecture $\ReLUTwo{2}{3}{1}$. Despite the two topological spaces being homotopy equivalent and having the same Betti numbers, the minimum architecture of the neural networks are different.
\end{remark}

\section{Further Experimental Results} \label{app: experiment}

In this section, we present additional experimental results that specifically investigate the effect of initialization.
We use the two datasets $\Xc_1$ and $\Xc_2$ described in Figure \ref{fig: experiments}(a) and (d). 
For three or four-layer ReLU networks employing the MSE loss during training, which exhibit ReLU or $\texttt{MAX}$ activation at the final layer, the networks often encounter dying ReLU initialization issue (constant output) as mentioned in \cite{lu2019dying}. Therefore, we have excluded these cases from our analysis.

In Figure \ref{fig: initialization TT} and \ref{fig: initialization HEX}, we present the experimental results for various initialization schemes.
The first and third columns represent the initialization state of the network, where the second and fourth columns show the trained network. Each row has the same initialization schemes, which are uniform initialization $U([0,1])$, normal initialization $N(0,\Ib)$, Xavier uniform and normal initialization \cite{glorot2010understanding}, He uniform and normal initialization \cite{he2015delving}, small-norm initialization, and manually initialized weight and bias, from top to bottom. 
Note that the last row represents manually initialized case, which are intended to converge to the network we constructed in the proof of Proposition \ref{prop: compact} and Theorem \ref{thm: compact}. Indeed, weight vectors in the first layer of the converged networks enclose each manifold $\Xc_i$. This experimental results show that our proposed networks can be achieved by gradient descent under specific initialization. 
% However, for the small-norm initialization (the second to last column), weight vectors in the first layers are specifically aligned for both MSE and BCE loss as partially described in \cite{boursier2022gradient}. 

% Secondly, we provide further experimental results with varying magnitude of the bias. 
% Since almost all initialization schemes consider small magnitude initialization for bias, itis 
% In Figure \ref{fig: initialization HEX} and \ref{fig: initialization TT}, 
% Previous results look indicating that certain initializations lead to convergence to the global minimum, while others do not. However, this convergence behavior is not solely determined by the initialization schemes but rather by the direction of weight vectors and the magnitude of biases. In the subsequent experiments, we present convergence results while keeping the initialization scheme by He normal initialization. This shows the important of randomness in initialization.

\subsection{Detail setup of experiments}
We outline the detailed setup of the experiments here. All experiments were executed using Pytorch on a GeForce GTX 1080 Ti. For each task, the training dataset was generated from $40\times40=1600$ lattice points in the $[-20,20]\times[-20,20] \subset \Rd^2$ range, where each point was labeled `1' if the point was in $\Xc$, and `0' otherwise.

We utilized two types of loss functions. For the training set ${(\xb_i,y_i)}_{i=1}^n$, the mean square error (MSE) loss is defined as:
\begin{align*}
    L_{MSE} (\theta) := \frac{1}{n} \sum_{i=1}^n \left[ \Nc_\theta(\xb_i) - y_i)\right]^2.
\end{align*}
The binary cross entropy (BCE) loss is given by:
\begin{align*}
    L_{BCE} (\theta) := \frac{1}{n} \sum_{i=1}^n \left[ y_i\log(\Nc_\theta(\xb_i)) + (1-y_i)\log(1-\Nc_\theta(\xb_i)) \right].
\end{align*}
It should be noted that the input of the BCE loss must satisfy $0<\Nc_\theta(\xb_i)<1$, which necessitates the use of the sigmoid activation function in the last layer.

The optimization method employed was full-batch gradient descent, with learning rates set at 0.005 for $\Xc_1$ (two triangles) and 0.001 for $\Xc_2$ (a hexagon with a pentagon hole). The total number of epochs was flexibly determined for each experiment, ensuring a sufficient number to achieve convergence.

\begin{figure}
    \centering
    \begin{subfigure}[b]{0.23\textwidth}
        \centering
        \includegraphics[width=\textwidth]{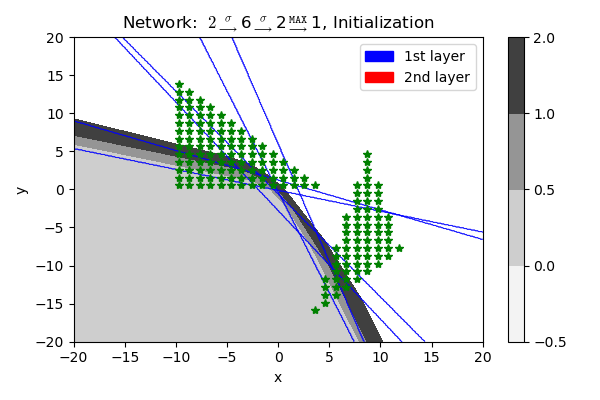}
        % \caption{}
    \end{subfigure}
    \hfill 
    \begin{subfigure}[b]{0.23\textwidth}
        \centering
        \includegraphics[width=\textwidth]{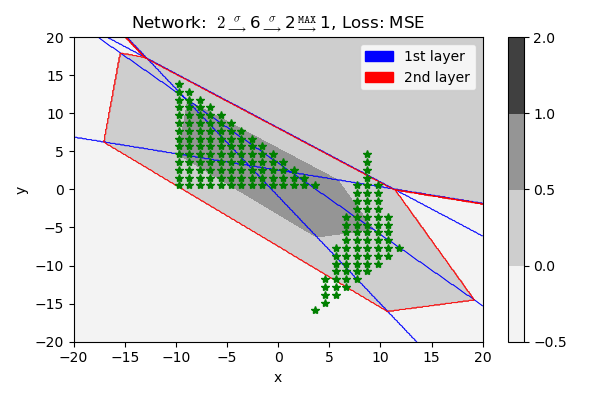}
        % \caption{}
    \end{subfigure}
    \hfill 
    \begin{subfigure}[b]{0.23\textwidth}
        \centering
        \includegraphics[width=\textwidth]{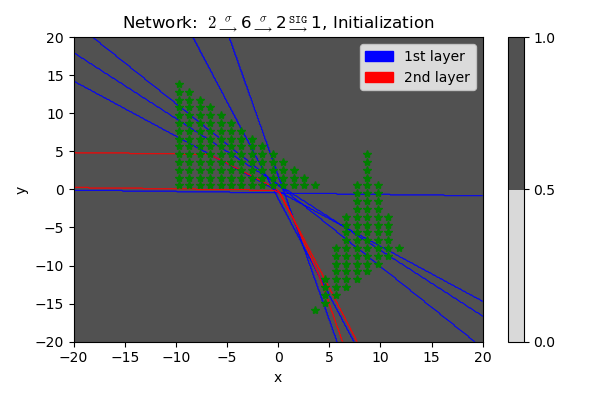}
        % \caption{}
    \end{subfigure}
    \hfill 
    \begin{subfigure}[b]{0.23\textwidth}
        \centering
        \includegraphics[width=\textwidth]{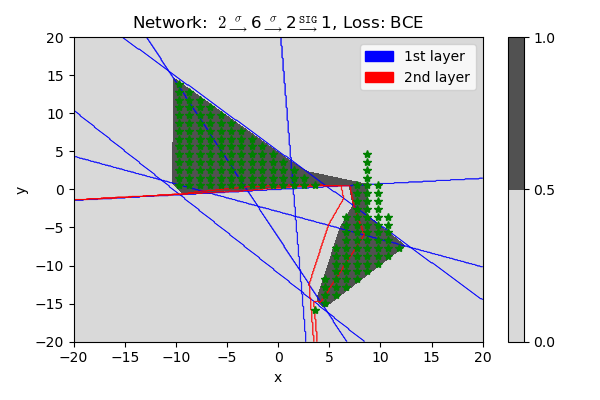}
        % \caption{}
    \end{subfigure}
    \\
    \begin{subfigure}[b]{0.23\textwidth}
        \centering
        \includegraphics[width=\textwidth]{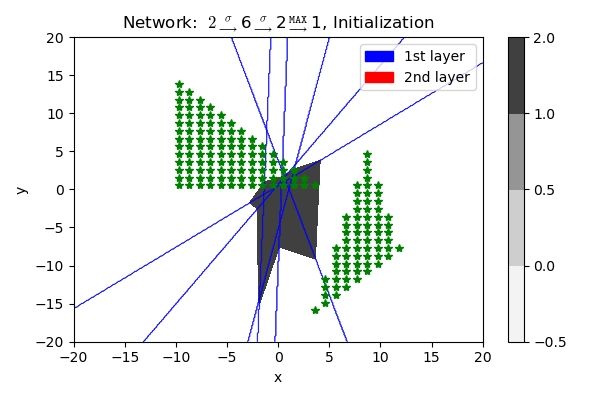}
        % \caption{}
    \end{subfigure}
    \hfill 
    \begin{subfigure}[b]{0.23\textwidth}
        \centering
        \includegraphics[width=\textwidth]{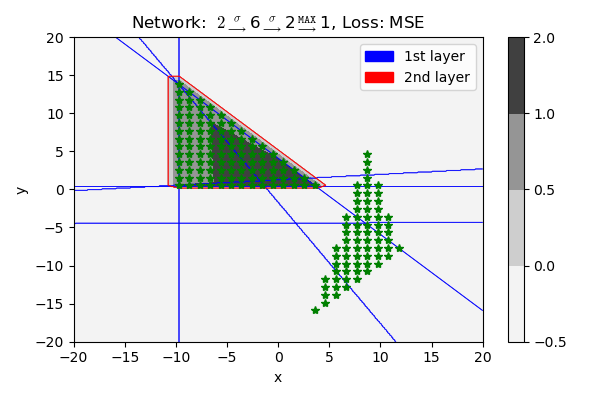}
        % \caption{}
    \end{subfigure}
    \hfill 
    \begin{subfigure}[b]{0.23\textwidth}
        \centering
        \includegraphics[width=\textwidth]{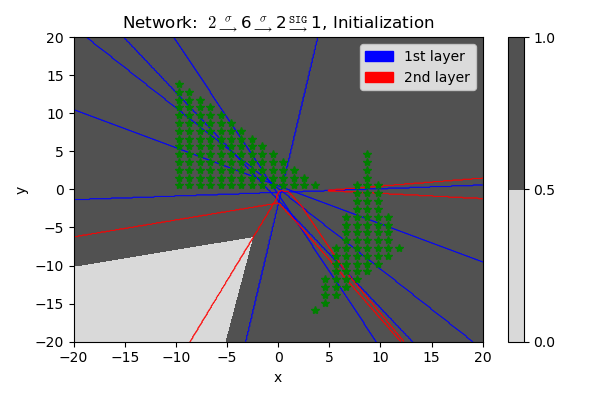}
        % \caption{}
    \end{subfigure}
    \hfill 
    \begin{subfigure}[b]{0.23\textwidth}
        \centering
        \includegraphics[width=\textwidth]{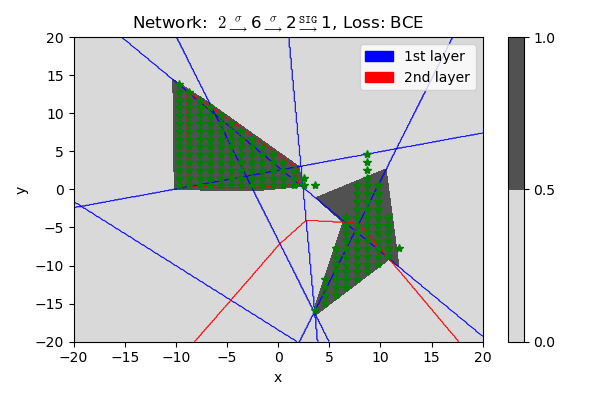}
        % \caption{}
    \end{subfigure}
    \\
    \begin{subfigure}[b]{0.23\textwidth}
        \centering
        \includegraphics[width=\textwidth]{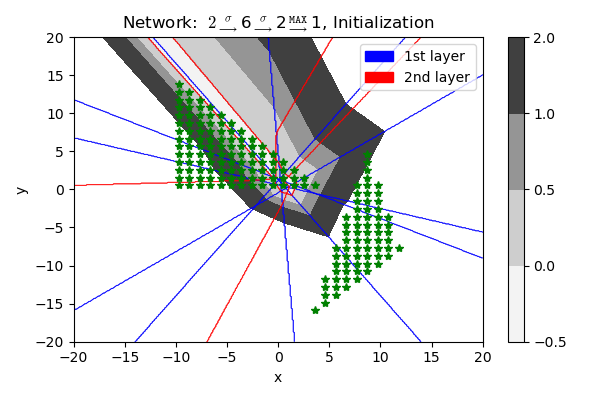}
        % \caption{}
    \end{subfigure}
    \hfill 
    \begin{subfigure}[b]{0.23\textwidth}
        \centering
        \includegraphics[width=\textwidth]{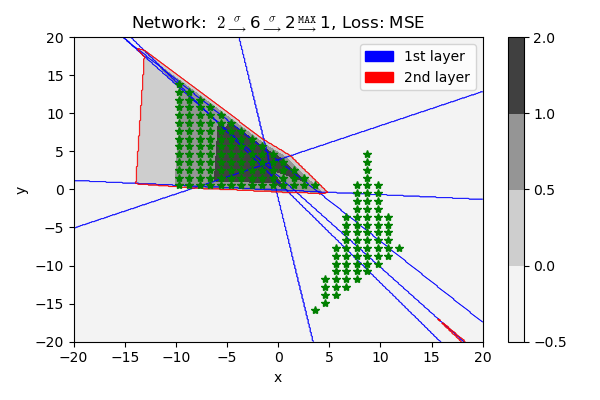}
        % \caption{}
    \end{subfigure}
    \hfill 
    \begin{subfigure}[b]{0.23\textwidth}
        \centering
        \includegraphics[width=\textwidth]{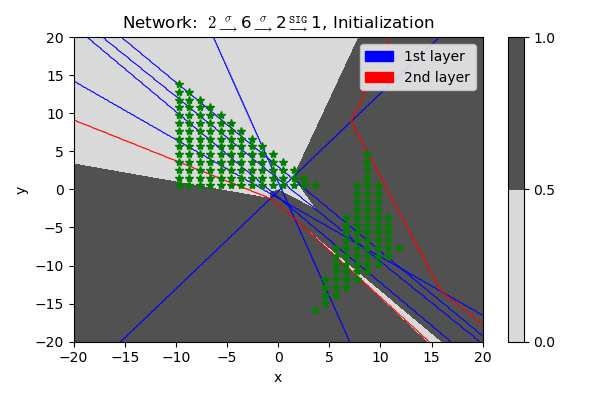}
        % \caption{}
    \end{subfigure}
    \hfill 
    \begin{subfigure}[b]{0.23\textwidth}
        \centering
        \includegraphics[width=\textwidth]{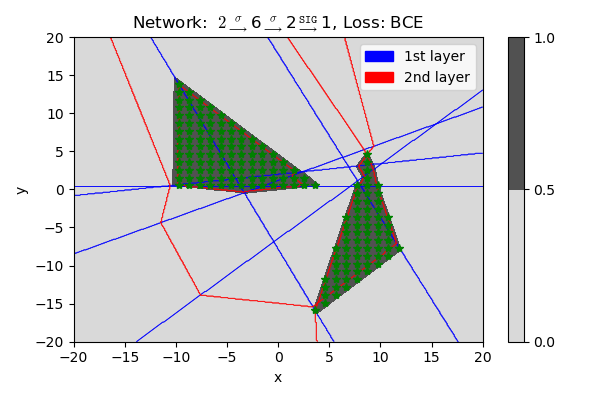}
        % \caption{}
    \end{subfigure}
    \\
    \begin{subfigure}[b]{0.23\textwidth}
        \centering
        \includegraphics[width=\textwidth]{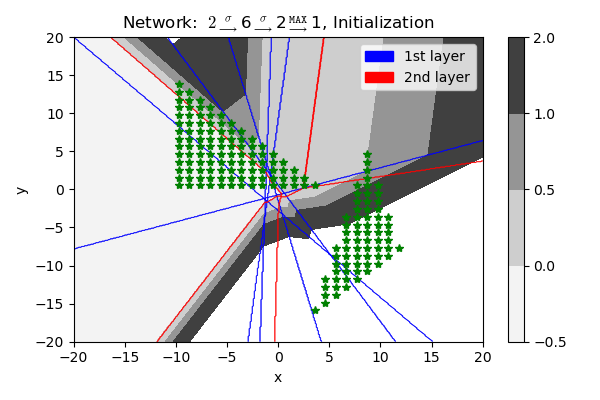}
        % \caption{}
    \end{subfigure}
    \hfill 
    \begin{subfigure}[b]{0.23\textwidth}
        \centering
        \includegraphics[width=\textwidth]{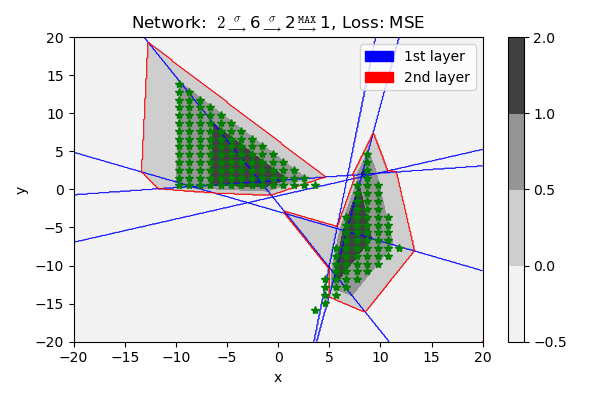}
        % \caption{}
    \end{subfigure}
    \hfill 
    \begin{subfigure}[b]{0.23\textwidth}
        \centering
        \includegraphics[width=\textwidth]{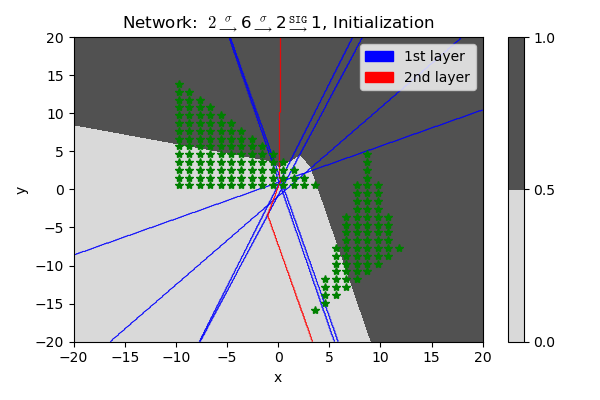}
        % \caption{}
    \end{subfigure}
    \hfill 
    \begin{subfigure}[b]{0.23\textwidth}
        \centering
        \includegraphics[width=\textwidth]{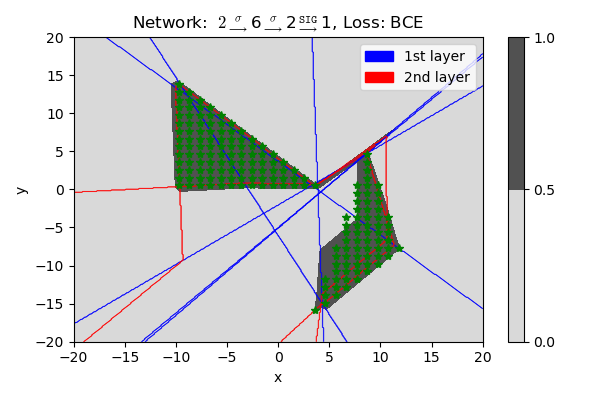}
        % \caption{}
    \end{subfigure}
    \\
    \begin{subfigure}[b]{0.23\textwidth}
        \centering
        \includegraphics[width=\textwidth]{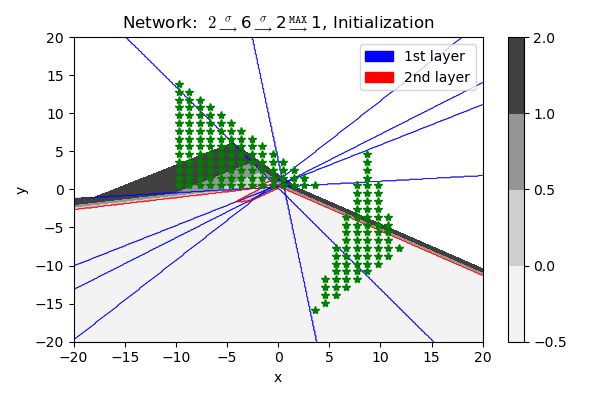}
        % \caption{}
    \end{subfigure}
    \hfill 
    \begin{subfigure}[b]{0.23\textwidth}
        \centering
        \includegraphics[width=\textwidth]{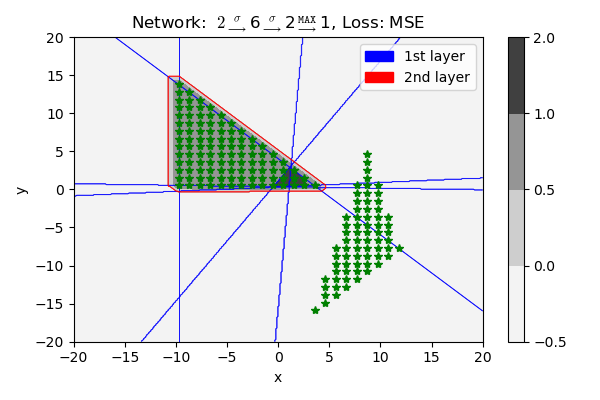}
        % \caption{}
    \end{subfigure}
    \hfill 
    \begin{subfigure}[b]{0.23\textwidth}
        \centering
        \includegraphics[width=\textwidth]{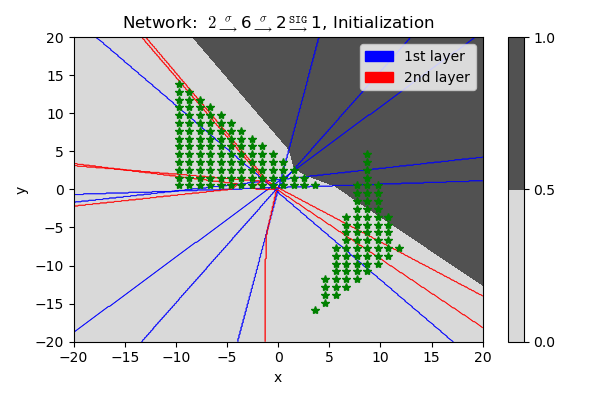}
        % \caption{}
    \end{subfigure}
    \hfill 
    \begin{subfigure}[b]{0.23\textwidth}
        \centering
        \includegraphics[width=\textwidth]{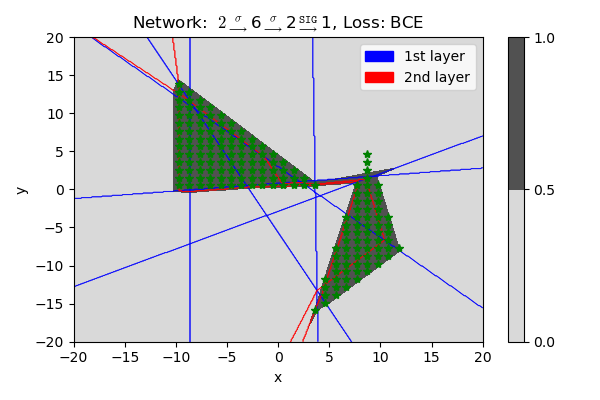}
        % \caption{}
    \end{subfigure}
    \\
    \begin{subfigure}[b]{0.23\textwidth}
        \centering
        \includegraphics[width=\textwidth]{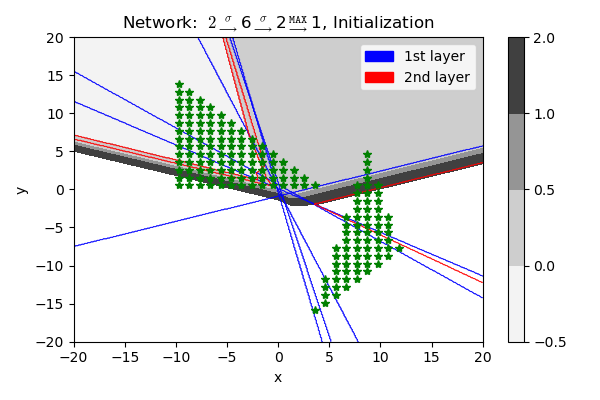}
        % \caption{}
    \end{subfigure}
    \hfill 
    \begin{subfigure}[b]{0.23\textwidth}
        \centering
        \includegraphics[width=\textwidth]{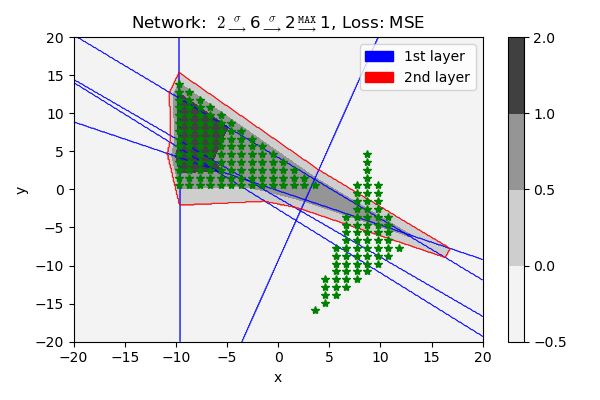}
        % \caption{}
    \end{subfigure}
    \hfill 
    \begin{subfigure}[b]{0.23\textwidth}
        \centering
        \includegraphics[width=\textwidth]{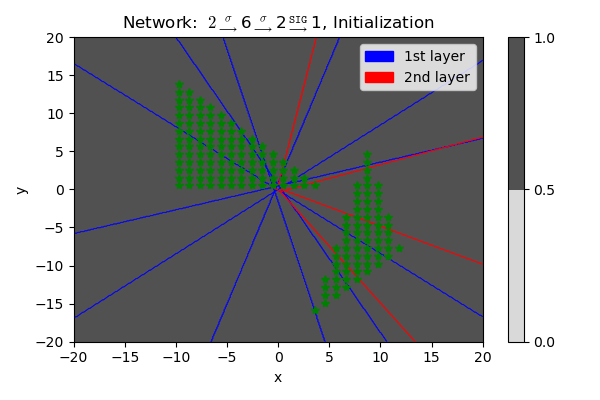}
        % \caption{}
    \end{subfigure}
    \hfill 
    \begin{subfigure}[b]{0.23\textwidth}
        \centering
        \includegraphics[width=\textwidth]{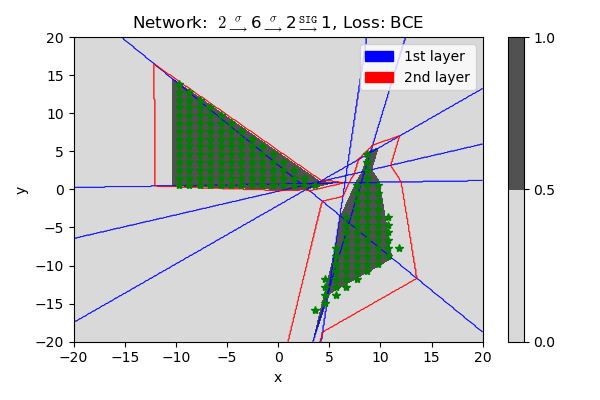}
        % \caption{}
    \end{subfigure}
    \\
    \begin{subfigure}[b]{0.23\textwidth}
        \centering
        \includegraphics[width=\textwidth]{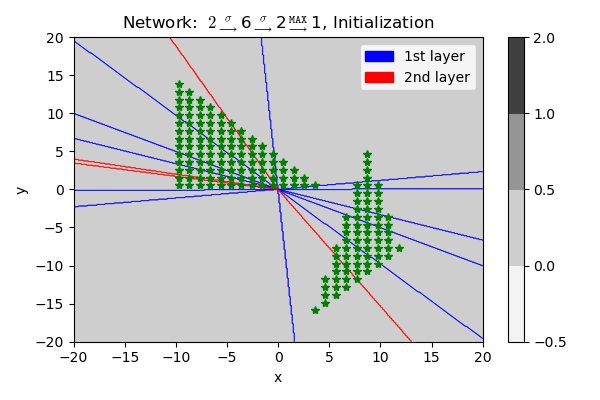}
        % \caption{}
    \end{subfigure}
    \hfill 
    \begin{subfigure}[b]{0.23\textwidth}
        \centering
        \includegraphics[width=\textwidth]{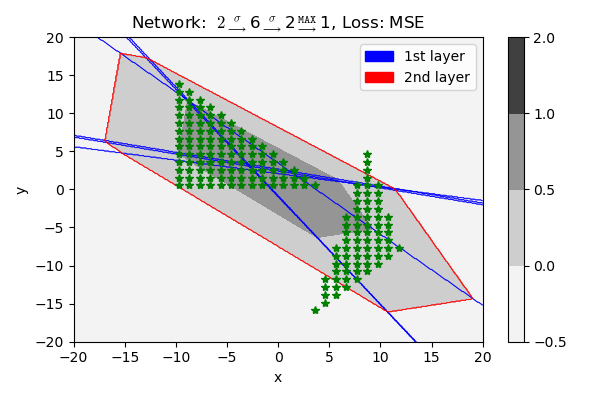}
        % \caption{}
    \end{subfigure}
    \hfill 
    \begin{subfigure}[b]{0.23\textwidth}
        \centering
        \includegraphics[width=\textwidth]{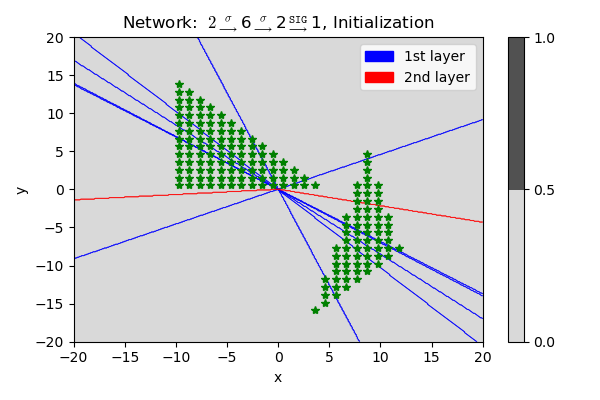}
        % \caption{}
    \end{subfigure}
    \hfill 
    \begin{subfigure}[b]{0.23\textwidth}
        \centering
        \includegraphics[width=\textwidth]{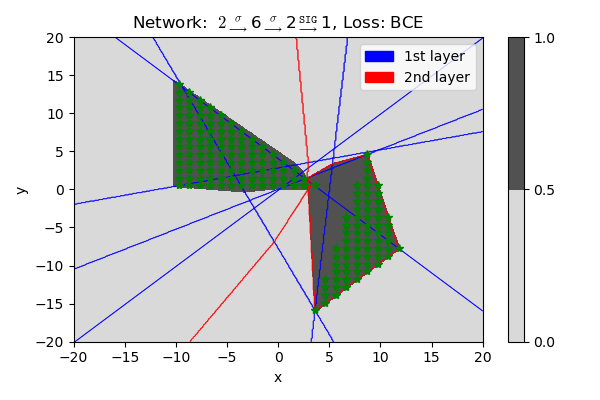}
        % \caption{}
    \end{subfigure}
    \\
    \begin{subfigure}[b]{0.23\textwidth}
        \centering
        \includegraphics[width=\textwidth]{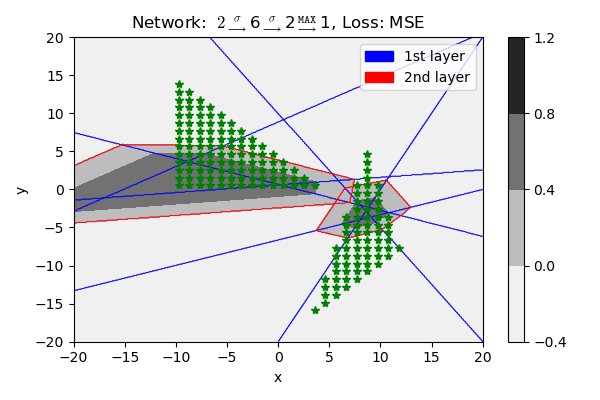}
        % \caption{}
    \end{subfigure}
    \hfill 
    \begin{subfigure}[b]{0.23\textwidth}
        \centering
        \includegraphics[width=\textwidth]{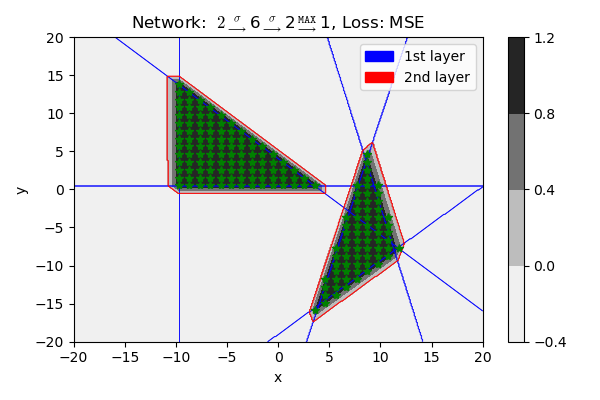}
        % \caption{}
    \end{subfigure}
    \hfill 
    \begin{subfigure}[b]{0.23\textwidth}
        \centering
        \includegraphics[width=\textwidth]{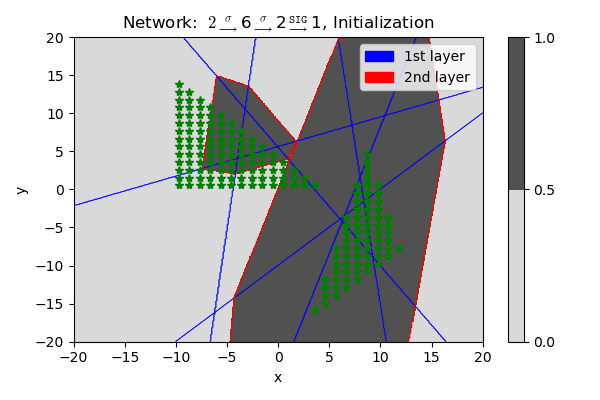}
        % \caption{}
    \end{subfigure}
    \hfill 
    \begin{subfigure}[b]{0.23\textwidth}
        \centering
        \includegraphics[width=\textwidth]{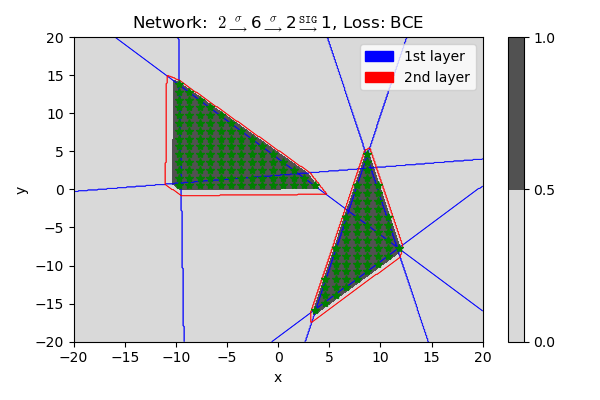}
        % \caption{}
    \end{subfigure}
    \caption{Initialization and convergence results to approximate $\indicator{\Xc_2}$ via gradient descent. 
    The first and third columns present the network initialization, and the second and fourth columns present the trained network under MSE and BCE loss, respectively.
    Each row has the same initialization scheme, which is given by uniform initialization $U([0,1])$, normal initialization $N(0,\Ib)$, Xavier uniform and normal initialization \cite{glorot2010understanding}, He uniform and normal initialization \cite{he2015delving}, small-norm initialization, and manually initialized weights and bias, from top to bottom.}
    \label{fig: initialization TT}
\end{figure}
\begin{figure}
    \centering
    \begin{subfigure}[b]{0.23\textwidth}
        \centering
        \includegraphics[width=\textwidth]{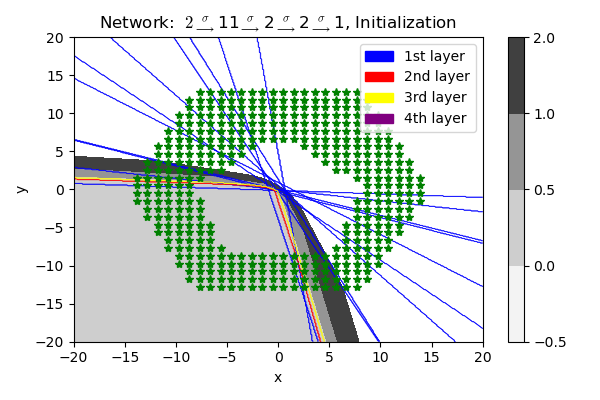}
        % \caption{}
    \end{subfigure}
    \hfill 
    \begin{subfigure}[b]{0.23\textwidth}
        \centering
        \includegraphics[width=\textwidth]{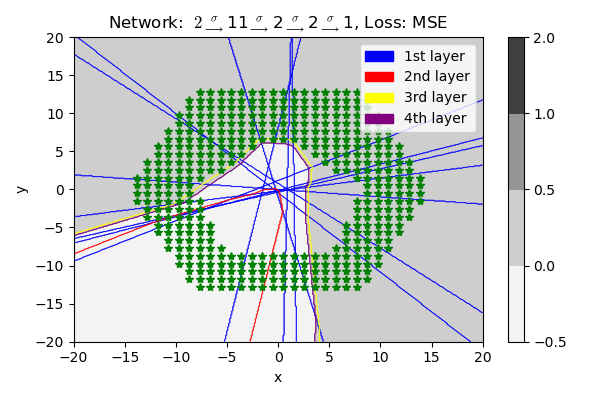}
        % \caption{}
    \end{subfigure}
    \hfill 
    \begin{subfigure}[b]{0.23\textwidth}
        \centering
        \includegraphics[width=\textwidth]{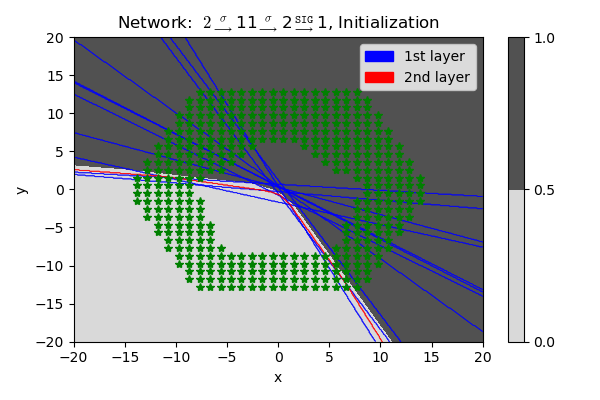}
        % \caption{}
    \end{subfigure}
    \hfill 
    \begin{subfigure}[b]{0.23\textwidth}
        \centering
        \includegraphics[width=\textwidth]{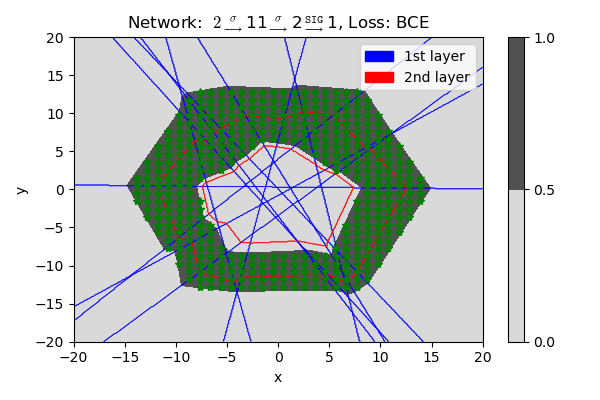}
        % \caption{}
    \end{subfigure}
    \\
    \begin{subfigure}[b]{0.23\textwidth}
        \centering
        \includegraphics[width=\textwidth]{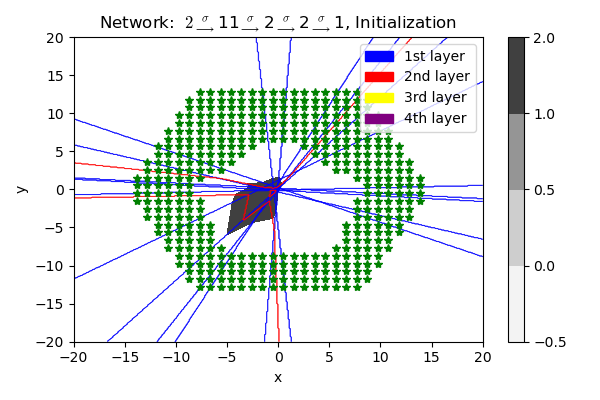}
        % \caption{}
    \end{subfigure}
    \hfill 
    \begin{subfigure}[b]{0.23\textwidth}
        \centering
        \includegraphics[width=\textwidth]{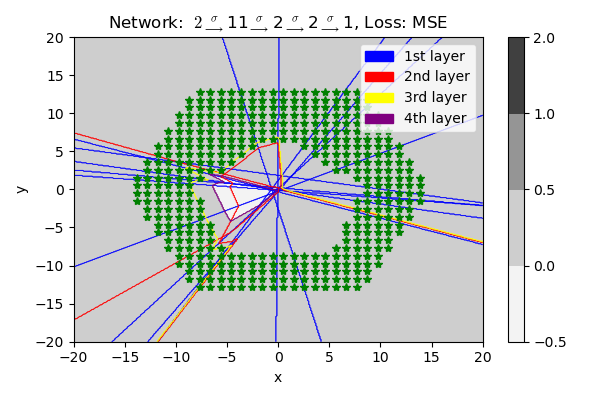}
        % \caption{}
    \end{subfigure}
    \hfill 
    \begin{subfigure}[b]{0.23\textwidth}
        \centering
        \includegraphics[width=\textwidth]{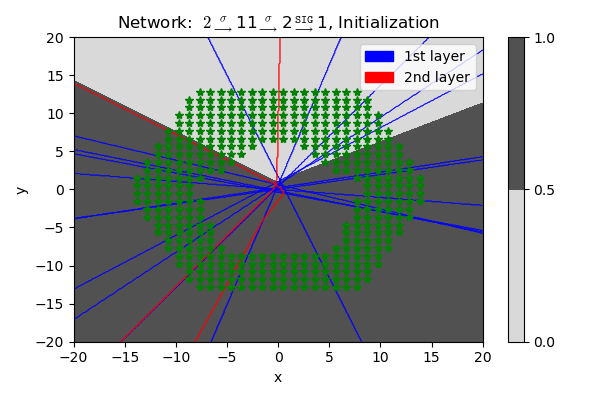}
        % \caption{}
    \end{subfigure}
    \hfill 
    \begin{subfigure}[b]{0.23\textwidth}
        \centering
        \includegraphics[width=\textwidth]{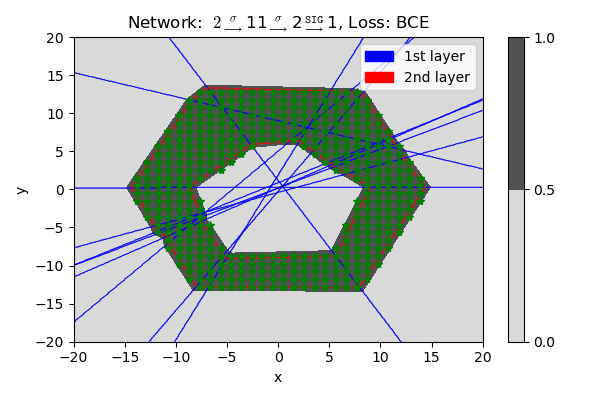}
        % \caption{}
    \end{subfigure}
    \\
    \begin{subfigure}[b]{0.23\textwidth}
        \centering
        \includegraphics[width=\textwidth]{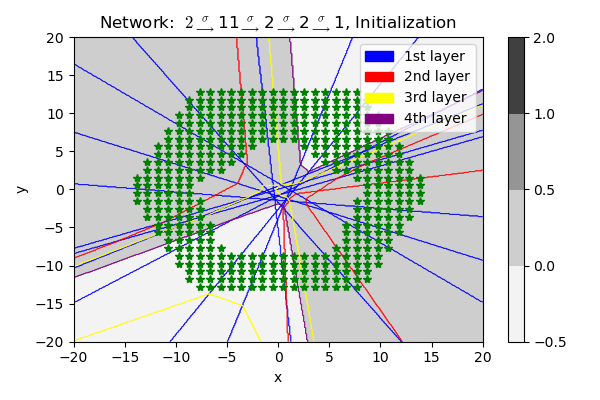}
        % \caption{}
    \end{subfigure}
    \hfill 
    \begin{subfigure}[b]{0.23\textwidth}
        \centering
        \includegraphics[width=\textwidth]{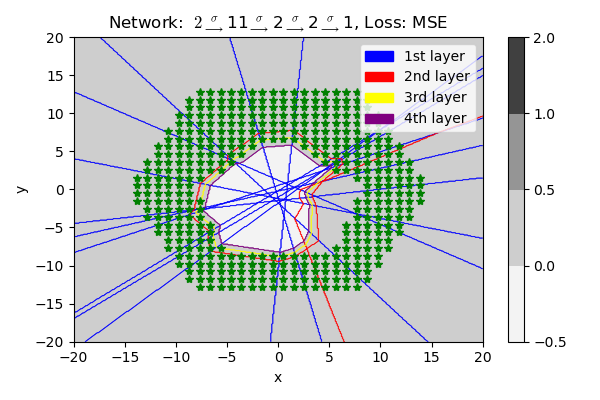}
        % \caption{}
    \end{subfigure}
    \hfill 
    \begin{subfigure}[b]{0.23\textwidth}
        \centering
        \includegraphics[width=\textwidth]{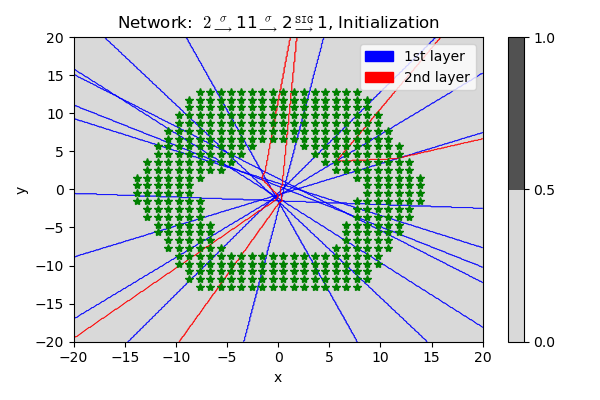}
        % \caption{}
    \end{subfigure}
    \hfill 
    \begin{subfigure}[b]{0.23\textwidth}
        \centering
        \includegraphics[width=\textwidth]{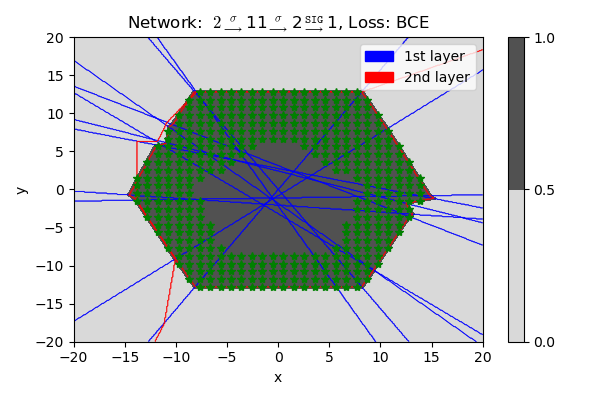}
        % \caption{}
    \end{subfigure}
    \\
    \begin{subfigure}[b]{0.23\textwidth}
        \centering
        \includegraphics[width=\textwidth]{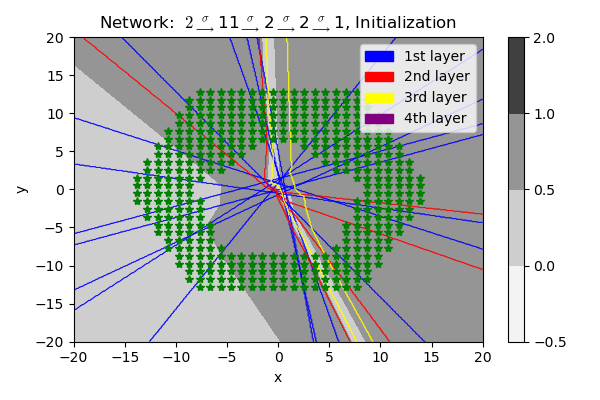}
        % \caption{}
    \end{subfigure}
    \hfill 
    \begin{subfigure}[b]{0.23\textwidth}
        \centering
        \includegraphics[width=\textwidth]{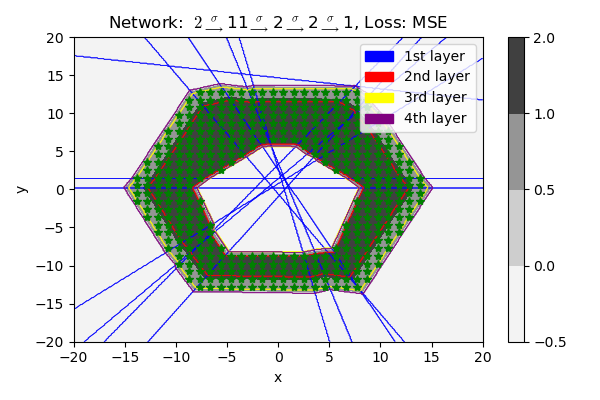}
        % \caption{}
    \end{subfigure}
    \hfill 
    \begin{subfigure}[b]{0.23\textwidth}
        \centering
        \includegraphics[width=\textwidth]{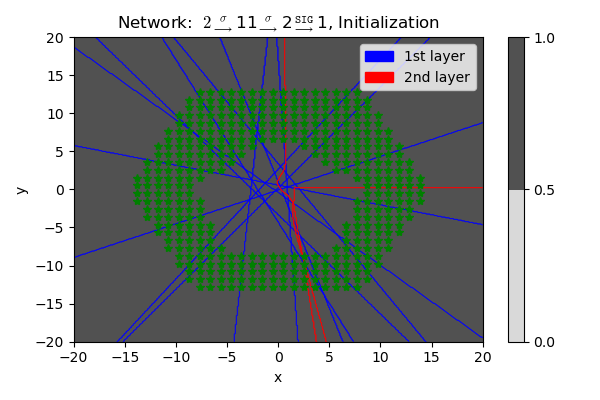}
        % \caption{}
    \end{subfigure}
    \hfill 
    \begin{subfigure}[b]{0.23\textwidth}
        \centering
        \includegraphics[width=\textwidth]{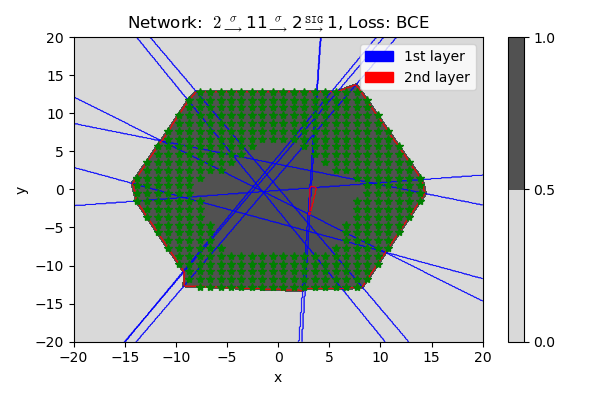}
        % \caption{}
    \end{subfigure}
    \\
    \begin{subfigure}[b]{0.23\textwidth}
        \centering
        \includegraphics[width=\textwidth]{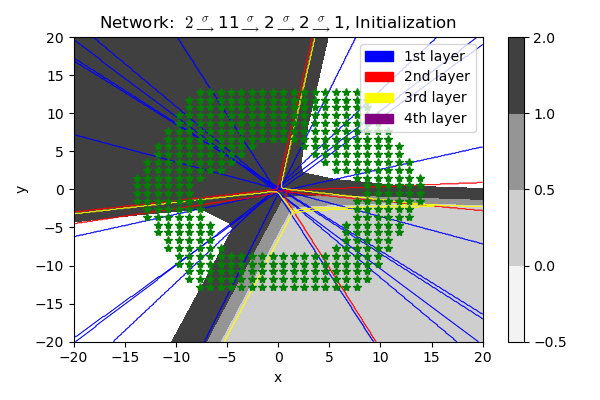}
        % \caption{}
    \end{subfigure}
    \hfill 
    \begin{subfigure}[b]{0.23\textwidth}
        \centering
        \includegraphics[width=\textwidth]{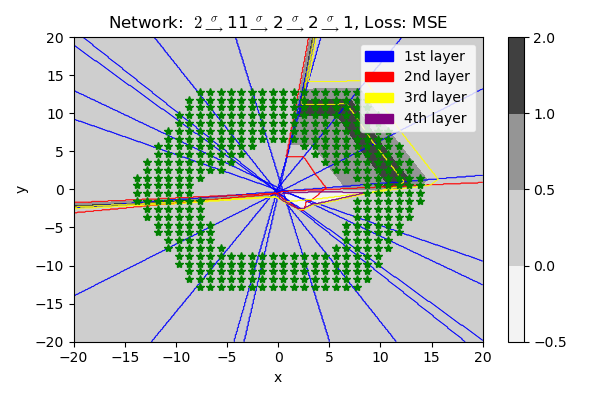}
        % \caption{}
    \end{subfigure}
    \hfill 
    \begin{subfigure}[b]{0.23\textwidth}
        \centering
        \includegraphics[width=\textwidth]{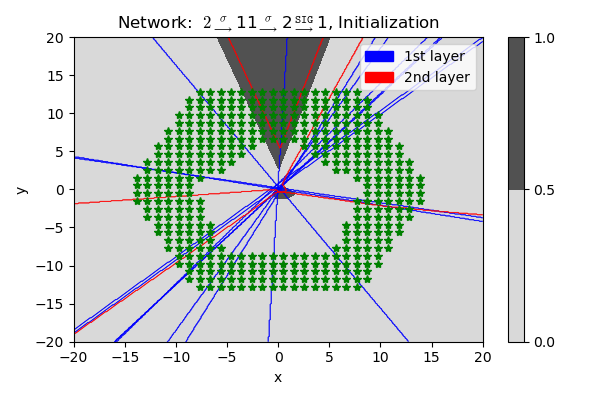}
        % \caption{}
    \end{subfigure}
    \hfill 
    \begin{subfigure}[b]{0.23\textwidth}
        \centering
        \includegraphics[width=\textwidth]{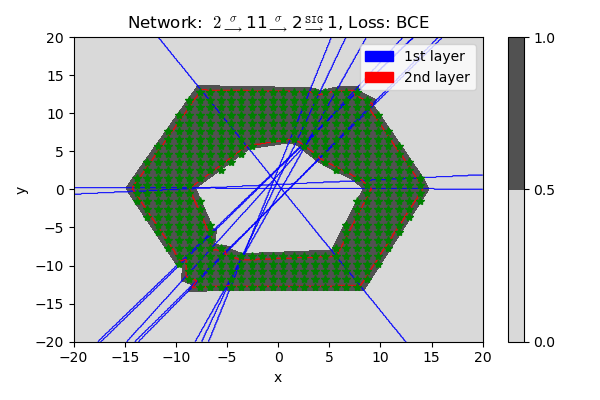}
        % \caption{}
    \end{subfigure}
    \\
    \begin{subfigure}[b]{0.23\textwidth}
        \centering
        \includegraphics[width=\textwidth]{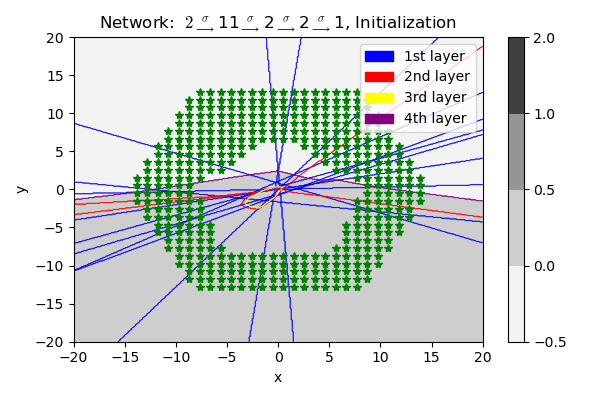}
        % \caption{}
    \end{subfigure}
    \hfill 
    \begin{subfigure}[b]{0.23\textwidth}
        \centering
        \includegraphics[width=\textwidth]{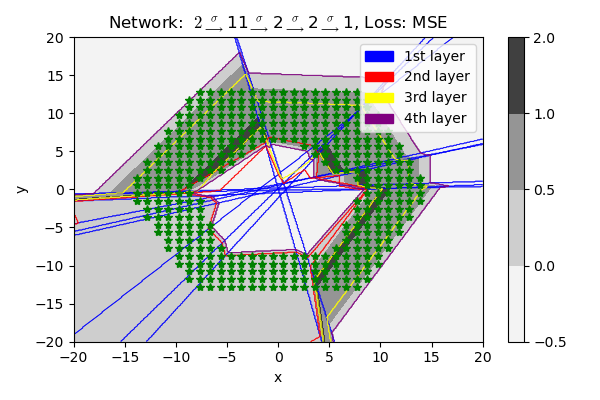}
        % \caption{}
    \end{subfigure}
    \hfill 
    \begin{subfigure}[b]{0.23\textwidth}
        \centering
        \includegraphics[width=\textwidth]{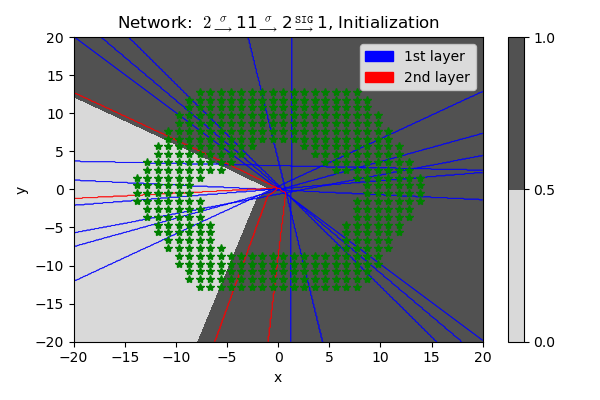}
        % \caption{}
    \end{subfigure}
    \hfill 
    \begin{subfigure}[b]{0.23\textwidth}
        \centering
        \includegraphics[width=\textwidth]{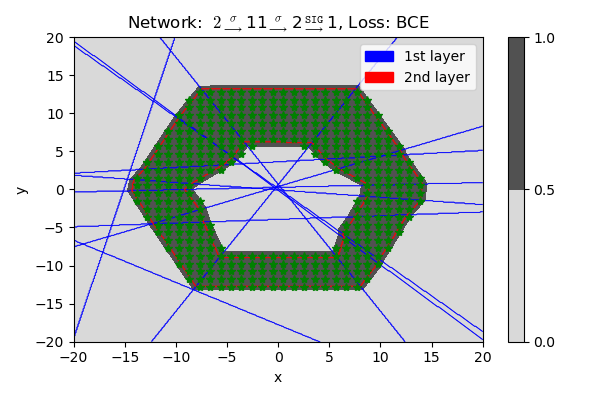}
        % \caption{}
    \end{subfigure}
    \\
    \begin{subfigure}[b]{0.23\textwidth}
        \centering
        \includegraphics[width=\textwidth]{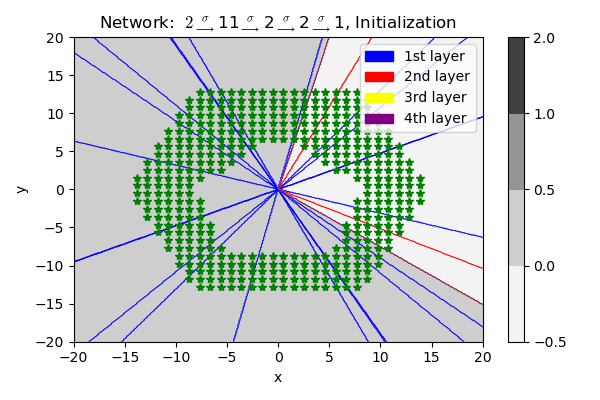}
        % \caption{}
    \end{subfigure}
    \hfill 
    \begin{subfigure}[b]{0.23\textwidth}
        \centering
        \includegraphics[width=\textwidth]{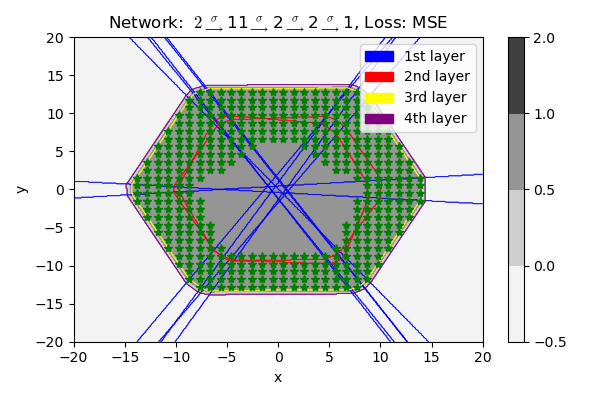}
        % \caption{}
    \end{subfigure}
    \hfill 
    \begin{subfigure}[b]{0.23\textwidth}
        \centering
        \includegraphics[width=\textwidth]{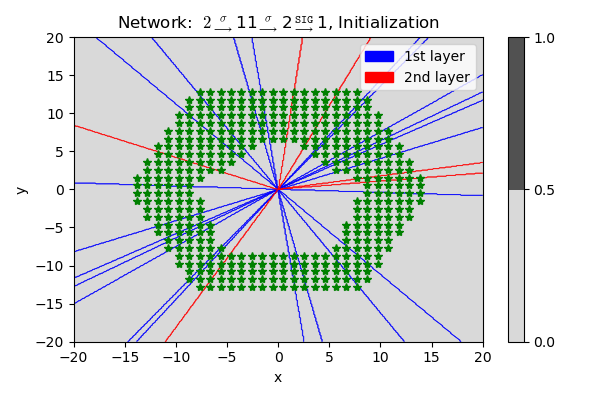}
        % \caption{}
    \end{subfigure}
    \hfill 
    \begin{subfigure}[b]{0.23\textwidth}
        \centering
        \includegraphics[width=\textwidth]{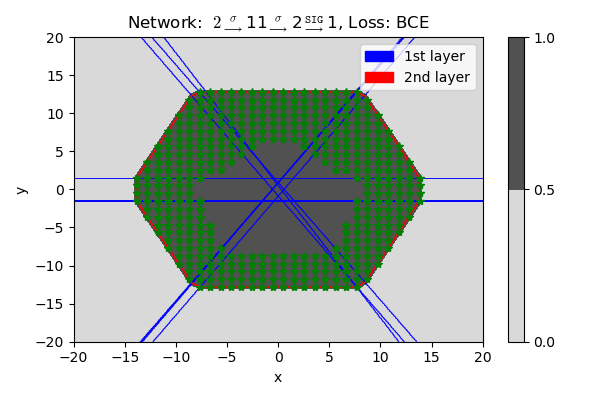}
        % \caption{}
    \end{subfigure}
    \\
    \begin{subfigure}[b]{0.23\textwidth}
        \centering
        \includegraphics[width=\textwidth]{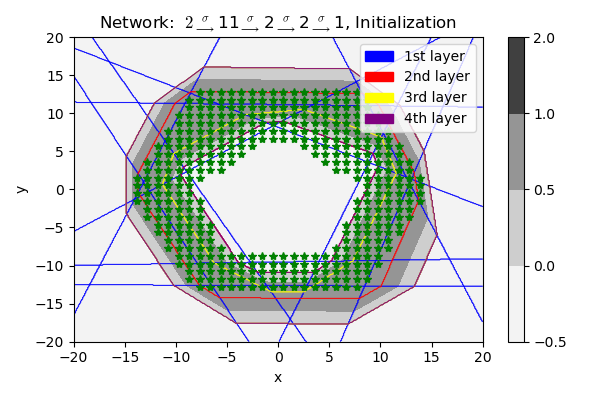}
        % \caption{}
    \end{subfigure}
    \hfill 
    \begin{subfigure}[b]{0.23\textwidth}
        \centering
        \includegraphics[width=\textwidth]{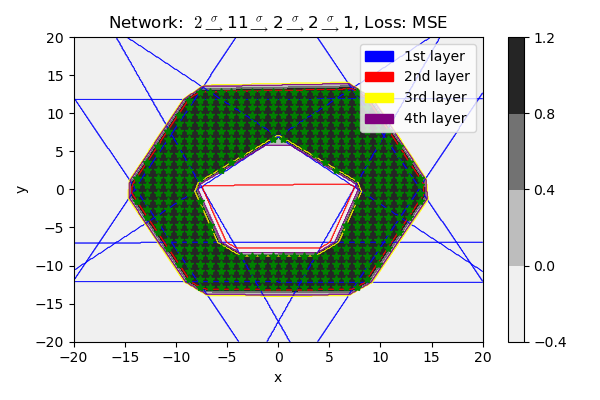}
        % \caption{}
    \end{subfigure}
    \hfill 
    \begin{subfigure}[b]{0.23\textwidth}
        \centering
        \includegraphics[width=\textwidth]{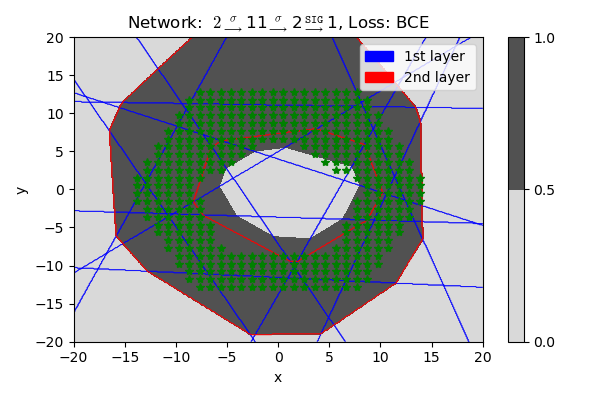}
        % \caption{}
    \end{subfigure}
    \hfill 
    \begin{subfigure}[b]{0.23\textwidth}
        \centering
        \includegraphics[width=\textwidth]{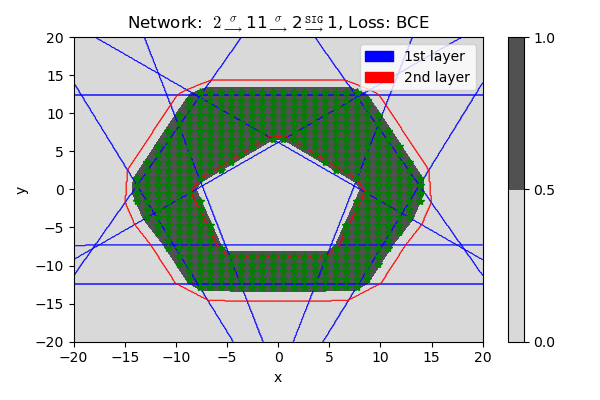}
        % \caption{}
    \end{subfigure}
    \caption{Initialization and convergence results to approximate $\indicator{\Xc_2}$ via gradient descent. 
    The first and third columns present the network initialization, and the second and fourth columns present the trained network under MSE and BCE loss, respectively.
    Each row has the same initialization scheme, which is given by uniform initialization $U([0,1])$, normal initialization $N(0,\Ib)$, Xavier uniform and normal initialization \cite{glorot2010understanding}, He uniform and normal initialization \cite{he2015delving}, small-norm initialization, and manually initialized weights and bias, from top to bottom.}
    \label{fig: initialization HEX}
\end{figure}

\end{document}